\definecolor{blued}{rgb}{0.27, 0.77, 0.87}
\newcommand{\tododout}[1]{\todo[color=blued]{\tiny#1}}
\newcommand{\todoa}[1]{\todo[color=yellow, inline]{#1}}
\newcommand{\todoaout}[1]{\todo[color=yellow]{\tiny#1}}
\definecolor{citrine}{rgb}{0.89, 0.82, 0.04}
\newcommand{\todom}[1]{\todo[color=citrine]{\tiny#1}}
\newcommand{\citespecific}[2]{(\citeauthor{#2},~\citeyear{#2},~{#1})}
\renewcommand{\epsilon}{\varepsilon}
\newcommand*{\MyDef}{\mathrm{\tiny def}}
\newcommand*{\eqdefU}{\ensuremath{\mathop{\overset{\MyDef}{=}}}}
\newcommand*{\eqdef}{\eqdefU}
\renewcommand*{\triangleq}{\eqdefU}
\def\:#1{\protect \ifmmode {\mathbf{#1}} \else {\textbf{#1}} \fi}
\newcommand{\wt}[1]{\widetilde{#1}}
\newcommand{\wb}[1]{\overline{#1}}
\newcommand{\bk}{\mathbf{k}}
\newcommand{\bu}{\mathbf{u}}
\newcommand{\bx}{\mathbf{x}}
\newcommand{\by}{\mathbf{y}}
\newcommand{\bz}{\mathbf{z}}
\newcommand{\bA}{\mathbf{A}}
\newcommand{\bB}{\mathbf{B}}
\newcommand{\bI}{\mathbf{I}}
\newcommand{\bK}{\mathbf{K}}
\newcommand{\bP}{\mathbf{P}}
\newcommand{\bV}{\mathbf{V}}
\newcommand{\bX}{\mathbf{X}}
\newcommand{\bZ}{\mathbf{Z}}
\newcommand{\wrt}{w.r.t.\xspace}
\DeclareMathOperator*{\argmax}{arg\,max}
\DeclareMathOperator*{\logdet}{log\,det}
\renewcommand{\epsilon}{\varepsilon}
\newcommand{\bigotime}{\mathcal{O}}
\newcommand{\abigotime}{\wt{\mathcal{O}}}
\DeclareMathOperator*{\polylog}{polylog}
\newcommand{\normsmall}[1]{\Vert #1 \Vert}
\newcommand{\transp}{\mathsf{\scriptscriptstyle T}}
\DeclareMathOperator*{\Tr}{Tr}
\DeclareMathOperator*{\Ran}{Im}
\newcommand{\condbar}{\;\middle|\;}
\newcommand{\indfunc}{\mathbb{I}}
\newcommand{\Real}{\mathbb{R}}
\newcommand{\rkhs}{\mathcal{H}}
\newcommand{\kerfunc}{k}
\newcommand{\kermatrix}{{\bK}}
\newcommand{\deff}{d_{\text{eff}}}
\newcommand{\ie}{i.e.,\xspace}
\newcommand{\eg}{e.g.,\xspace}
\newcommand{\whp}{w.h.p.\xspace}
\newcommand{\amu}{\wt{\mu}}
\newcommand{\asigma}{\wt{\sigma}}
\newcommand{\abA}{\wt{\bA}}
\renewcommand{\Re}{\mathbb{R}}
\newcommand{\phimat}[1]{\boldsymbol{\Phi}(\bX_{#1})}
\newcommand{\aphimat}[1]{\wt{\boldsymbol{\Phi}}_{#1}(\bX_{#1})}
\newcommand{\phivec}[1]{\featmap(\bx_{#1})}
\newcommand{\aphivec}[1]{\wt{\boldsymbol{\phi}}(\bx_{#1})}
\newcommand{\nystrom}{Nystr\"{o}m\xspace}
\newcommand{\coldict}{\mathcal{S}}
\newcommand{\armset}{\mathcal{A}}
\newcommand{\featmap}{\boldsymbol{\phi}}
\newcommand{\embfunc}{\bz}
\newcommand{\embvec}{\bz}
\newcommand{\embmat}{\bZ}
\newcommand{\fnorm}{F}
\newcommand{\Narm}{A}
\newcommand{\qbar}{\wb{q}}
\newcommand{\fb}[1]{\normalfont \texttt{fb}(#1)}
\newcommand{\bkb}{\textsc{BKB}\xspace}
\newcommand{\bbkb}{\textsc{BBKB}\xspace}
\newcommand{\gpucb}{\textsc{GP-UCB}\xspace}
\newcommand{\bgpucb}{\textsc{GP-BUCB}\xspace}
\newcommand{\bts}{\textsc{async-TS}\xspace}
\newcommand{\regev}{\textsc{Reg-evolution}\xspace}
\newtheorem{theorem}{Theorem}
\newtheorem{lemma}{Lemma}
\newtheorem{proposition}{Proposition}
\newcommand{\Tinit}{T_{\text{init}}}
\newcommand{\numbatches}{B}
\newcommand{\minbatchsize}{P}
\newcommand{\epsfrac}{\left(\frac{1 + \varepsilon}{1 - \varepsilon}\right)}
\title{Near-linear Time Gaussian Process Optimization with\\*
 Adaptive Batching and Resparsification}
\date{}
\author{%
    \hspace{-4em}Daniele  Calandriello$^{*,1}$~ Luigi Carratino$^{*,2}$~ Alessandro Lazaric$^{3}$~ Michal Valko$^{4}$~ Lorenzo Rosasco$^{1,2,5}$\\
     \hspace{-4em}{\footnotesize $^*$ Equal contribution ~~ $^1$  Istituto Italiano di Tecnologia, Genova, Italy ~~ $^2$ University of Genova, Genova, Italy}\\
   \hspace{-4em} {\footnotesize $^3$ Facebook AI Research, Paris, France ~~ $^4$ Deepmind, Paris, France ~~ $^5$ Massachusetts Institute of Technology, MA, USA}\\
    \hspace{-4em}{\footnotesize Correspondence to: \texttt{daniele.calandriello@iit.it, luigi.carratino@dibris.unige.it}}
}
\begin{document}
\maketitle
\begin{abstract}
Gaussian processes (GP) are one of the most successful frameworks to model uncertainty. However, GP optimization (e.g., GP-UCB) suffers from major scalability issues. Experimental time grows linearly with the number of evaluations, unless candidates are selected in batches (e.g., using GP-BUCB) and evaluated in parallel. Furthermore, computational cost is often prohibitive since algorithms such as GP-BUCB require a time at least quadratic in the number of dimensions and iterations to select each batch.

In this paper, we introduce BBKB (Batch Budgeted Kernel Bandits), the first no-regret GP optimization algorithm that provably runs in near-linear time and selects candidates in batches. This is obtained with a new guarantee for the tracking of the posterior variances that allows BBKB to choose increasingly larger batches, improving over GP-BUCB. Moreover, we show that the same bound can be used to adaptively delay costly updates to the sparse GP approximation used by BBKB, achieving a near-constant per-step amortized cost. These findings are then confirmed in several experiments, where BBKB is much faster than state-of-the-art methods.
\end{abstract}

\section{Introduction}

Gaussian process (GP) optimization is a principled way to optimize a black-box function from noisy evaluations (i.e., sometimes referred to as \textit{bandit} feedback). Due to the presence of noise, the optimization process is modeled as a sequential \textit{learning} problem, where at each step $t$
\begin{enumerate}
\item the learner chooses candidate $\bx_t$ out of a decision set $\armset$;
\item the environment evaluates $f(\bx_t)$ and returns a noisy feedback $y_t$ to the learner;
\item the learner uses $y_t$ to guide its subsequent choices
\end{enumerate}

The goal of the learner is to converge over time to a global optimal candidate. This goal is often formalized as a \textit{regret minimization} problem, where the performance of the learner is evaluated by the cumulative value of the candidates chosen over time (i.e., $\sum_t f(\bx_t)$) compared to the optimum of the function $f^\star = \max_{\bx} f(\bx)$.
While many GP optimization algorithms come with strong theoretical guarantees and are empirically effective, most of them suffer from experimental and/or computational scalability issues.

\textbf{Experimental scalability.}
GP optimization algorithms usually follow a sequential interaction protocol, where at each step $t$, they wait for the feedback $y_t$ before proposing a new candidate $\bx_{t+1}$. As such, the \textit{experimentation time} grows linearly with $t$, which may be impractical in applications where each evaluation may take long time to complete (e.g., in lab experiments). This problem can be mitigated by switching to \emph{batched} algorithms, which at step $t$ propose a batch of candidates that are evaluated in parallel. After the batch is evaluated, the algorithm integrates the feedbacks and move on to select the next batch. This strategy reduces the experimentation time, but it may degrade the optimization performance, since candidates in the batch are with much less feedback.
Many approaches have been proposed for batched GP optimization.
Among those with theoretical guarantees, some are 
based on sequential greedy selection \cite{desautels_parallelizing_2014},
entropy search \cite{hennig_entropy_2012},
determinantal point process sampling \cite{kathuria_batched_2016}, or
multi-agent cooperation \cite{daxberger17a}, as well as many
heuristics for which no regret guarantees~\cite{chevalier2013fast,shah2015parallel}.
However, they all suffer from the same computational limitations of classical GP methods. 

\textbf{Computational scalability.}
The computational complexity of choosing a single candidate in classical GP methods grows quadratically with the number of evaluations. This makes it impractical to optimize complex functions, which require many steps before converging. Many approaches exist to improve scalability.
Some have been proposed in the context of sequential GP optimization,
such as those based on inducing points and sparse GP approximation
\cite{quinonero-candela_approximation_2007, calandriello_2019_coltgpucb}, variational inference
\cite{huggins2018scalable}, random fourier features \cite{mutny_efficient_2018},
and grid based methods \cite{wilson2015kernel}.
While some of these methods come with regret and computational guarantees,
they rely on a strict sequential protocol,
and therefore they are subject to the experimental bottleneck.
Other scalable approximations
are specific to batched methods, such as Markov approximation \cite{daxberger17a}
and Gaussian approximation \cite{shah2015parallel}.
However, these methods fail to guarantee either low regret or scalability.

\textbf{State of the art.} \todoaout{General comment: we should be sure that when talking about complexity, we make it clear whether its global or per-step.}
\gpucb~\citet{srinivas2010gaussian} is the most popular algorithm for GP optimization and it suffers a regret $\bigotime(\sqrt{T}\gamma_T)$ regret, where
$\gamma_T$ is the maximal mutual information gain of a GP after $T$ evaluations
\cite{srinivas2010gaussian,chowdhury2017kernelized}.\footnote{Recently, \citet{calandriello_2019_coltgpucb} connected this quantity
	to the so-called effective dimension $\deff$ of the GP.}
Among approximate GP optimization methods, budgeted kernelized bandits
(\bkb) \cite{calandriello_2019_coltgpucb} and Thompson sampling with quadrature
random features (\textsc{TS-QFF}) \cite{mutny_efficient_2018} are currently the only provably
scalable methods that achieve the $\bigotime(\sqrt{T}\gamma_T)$ rate with sub-cubic
computational complexity.
However they both fail to achieve a fully satisfactory runtime.
\textsc{TS-QFF}'s complexity\footnote{
The $\abigotime(\cdot)$ notation ignores logarithmic dependencies.} $\abigotime(T2^d\deff^2)$ scales exponentially
in $d$, and therefore can only be applied to low-dimensional input spaces,
while \bkb's complexity is still quadratic $\abigotime(T^2\deff^2)$.
Furthermore, both \textsc{TS-QFF} and \bkb are constrained to a sequential protocol
and therefore suffer from poor experimental scalability.

In batch GP optimization, \citet{desautels_parallelizing_2014}
introduced a batched version of \gpucb (\bgpucb) that can effectively deal
with delayed feedback, essentially matching the rate of \gpucb. 
Successive methods improved on this approach \citespecific{{App.~G}}{daxberger17a}
but are too expensive
to scale
and/or require strong assumptions on the function $f$ \cite{contal_parallel_2013}.
\citet{kathuria_batched_2016} uses determinantal point process (DPP) sampling
to globally select the batch of points, but DPP sampling is an expensive
process in itself, requiring cubic time in the number of alternatives.
Although some MCMC-based approximate DPP sampler are scalable, they do not provide
sufficiently strong guarantees to prove low regret.
Similarly \citet{daxberger17a} use a Markov-based approximation to select queries, but lose all regret guarantees in the process.
In general, the time and space complexity of selecting each candidate 
 $\bx_t$ in the batch remains at least $\bigotime(t^2)$, resulting 
in an overall $\bigotime(T^3)$ time and $\bigotime(T^2)$ space complexity, which is prohibitive beyond a few thousands evaluations.

\textbf{Contributions.}
In this paper we introduce a novel sparse approximation
for batched \gpucb, batch budgeted kernelized bandits (\bbkb)
with a \emph{constant} $\bigotime(\deff^2)$
amortized per-step complexity that can easily scale to
tens of thousands of iterations.\todoaout{We should decide how to sell this ``constant'' claim.}
If $\armset$ is finite with $A$ candidates, we prove that \bbkb runs in near-linear $\abigotime(TA\deff^2 + T\deff^3)$ time and $\abigotime(A\deff^2)$ space and it achieves a regret of order $\bigotime(\sqrt{T}\gamma_T)$, thus matching both \gpucb and \bgpucb at a fraction of the computational costs (i.e., their complexity scales as $O(T^3)$).
This is achieved with two new results of independent interest.
First we introduce a new adaptive schedule to select the sizes of the batches of candidates,
where the batches selected are larger than the ones used by \bgpucb
\cite{desautels_parallelizing_2014} while providing the same regret guarantees.
Second we prove that the same
adaptive schedule can be used to delay updates to \bkb's sparse GP approximation~\cite{calandriello_2019_coltgpucb},
also without compromising regret.
This results in large computational savings (\ie from $\abigotime(t\deff^2)$ to $\abigotime(\deff^2)$ per-step complexity) even in the sequential setting,
since updates to the sparse GP approximation, \ie resparsifications,
are the most expensive operation performed by \bkb. Delayed resparsifications also allow us to exploit important
implementation optimizations in \bbkb, such as rank-one updates and lazy updates
of GP posterior.
We also show that our approach can be combined with existing initialization
procedures, both to guarantee a desired minimum level of experimental parallelism
and to include pre-existing feedback to bootstrap the optimization problem.
We validate our approach on several datasets, showing that
\bbkb matches or outperforms existing methods in both regret and runtime.

\section{Preliminaries}\label{sec:preliminaries}

\textbf{Setting.} A learner is provided with a decision set $\armset$ (e.g., a compact set in $\Re^d$) and it sequentially selects candidates $\bx_1, \ldots, \bx_T$ from $\armset$. At each step $t$, the learner receives a feedback $y_t \triangleq f(\bx_t) + \eta_t$, where $f$ is an unknown function, and $\epsilon_t$ is an additive noise drawn i.i.d.\ from $\mathcal{N}(0,\xi^2)$.\footnote{Candidates are sometime referred to as actions, arms, or queries, and feedback is sometimes called reward or observation.} We denote by 
$\bX_t  \triangleq [\bx_1, \dots, \bx_t]^\transp \in \Real^{t \times d}$ the matrix of the candidates selected so far, and with $\by_t \triangleq [y_1, \dots, y_t]^\transp$ the corresponding feedback.
We evaluate the performance of the learner by its regret, i.e., $R_T \triangleq \sum_{t=1}^T f^\star - f(\bx_t)$, where $f^\star = \max_{\bx\in\armset} f(\bx)$ is the maximum of $f$. 
In many applications (e.g., optimization of chemical products) it is possible to execute multiple experiments in parallel. In this case, at step $t=1$ the learner can select \emph{a batch} of candidates and wait for all feedback $y_1,..,y_{t'}$ before moving to the next batch, starting at $t' > t$.
To relate steps with their batch, we denote by $\fb{t}$ the index of the last step of the previous batch, \ie at step
$t$ we have access only to feedback $\by_{\fb{t}}$ up to step $\fb{t}$.
Finally, $[t]=\{1,\ldots,t\}$ denotes the set of integers up to $t$.

\textbf{Sparse Gaussian processes and \nystrom embeddings.}
GPs \cite{rasmussen_gaussian_2006} are traditionally defined in terms of a mean function $\mu$, which we assume to be zero, and a covariance defined by the (bounded) kernel function $k: \armset\times\armset\rightarrow [0,\kappa^2]$.
Given $\mu$, $k$, and some data, the learner can compute the posterior of the GP.

In the following we introduce the GP posterior using a formulation based on \emph{inducing points} \citep{quinonero-candela_approximation_2007,huggins2018scalable}, also known as sparse GP approximations, which is later convenient to illustrate our algorithm. 
Given a so-called dictionary of inducing points $\coldict \triangleq \{\bx_i\}_{i=1}^m$, let $\kermatrix_{\coldict} \in \Real^{m \times m}$
be the kernel matrix constructed by evaluating $k(\bx_i, \bx_j)$ for all the points in $\coldict$,
and similarly let 
$\bk_{\coldict}(\bx) = [\kerfunc(\bx_1, \bx), \dots, \kerfunc(\bx_m, \bx)]^\transp$.
Then we define a \nystrom embedding
as
$\embfunc(\cdot, \coldict) \triangleq \bK_{\coldict}^{+/2}\bk_{\coldict}(\cdot) : \Real^d \rightarrow \Real^m$,
where $(\cdot)^{+/2}$ indicates the square root of the pseudo-inverse. We
can now introduce the matrix $\embmat(\bX_t, \coldict) = [\embfunc(\bx_1, \coldict), \dots, \embfunc(\bx_t, \coldict)]^\transp \in \Real^{t \times m}$ containing
all candidates selected so far after embedding, and define
$\bV_t = \embmat(\bX_t, \coldict)^\transp\embmat(\bX_t, \coldict) + \lambda\bI~\in~\Real^{m \times m}$.
Following \citet{calandriello_2019_coltgpucb}, the \bkb approximation of the posterior mean, covariance, and variance is\todoaout{Is this Cala or already the original formulation of sparse GP?}
\begin{align}
&\wt{\mu}_t(\bx_i, \coldict) = \embvec(\bx_i,\coldict)^\transp\bV_t^{-1}\embmat(\bx_i,\coldict)^\transp\by_t,\label{eq:nyst-post-gp-mean}\\
&\begin{aligned}
\wt{k}_t(\bx_i, \bx_j, \coldict) = \tfrac{1}{\lambda}\Big(&\kerfunc(\bx_i,\bx_j)- \embvec(\bx_i, \coldict)^\transp\embvec(\bx_j, \coldict)\Big)
 + \embvec(\bx_i, \coldict)^\transp\bV_t^{-1}\embvec(\bx_j, \coldict),
\end{aligned}\label{eq:nyst-post-gp-cov}\\
&\wt{\sigma}_t^2(\bx_i, \coldict) = \wt{k}_t(\bx_i, \bx_i, \coldict),\label{eq:nyst-post-gp}
\end{align}
where $\lambda$ is a parameter to be tuned. The subscript $t$
in $\wt{\mu}_t$ and $\asigma_t$ indicates what we already observed
(\ie $\bX_t$ and $\by_t$), and $\coldict$ indicates the dictionary used for the embedding.
Moreover, if $\coldict_{\text{exact}}$ is a \textit{perfect} dictionary
we recover a formulation almost equivalent\footnote{
We refer to $\wt{\mu}_t$ and $\asigma_t$ as posteriors with a slight abuse
of terminology.
In particular, up to a $1/\lambda$ rescaling, they correspond
to the Bayesian DTC approximation \cite{quinonero-candela_approximation_2007},
which is not a GP posterior in a strictly Bayesian sense.
Our rescaling $1/\lambda$ is also not present when deriving the exact $\mu_t$ and $\sigma_t$ 
as Bayesian posteriors, but is necessary to simplify
the notation of our frequentist analysis.
For more details, see \Cref{sec:app-soa}.
}
to the standard
posterior mean and covariance of a GP, which we denote with
$\mu_t(\bx) \triangleq  \wt{\mu}_t(\bx, \coldict_{\text{exact}})$ and
$\sigma_t(\bx) \triangleq \wt{\sigma}_t(\bx, \coldict_{\text{exact}})$.
Examples of possible $\coldict_{\text{exact}}$ are the whole set $\armset$ if
finite, or the set of all candidates $\{\bx_1, \dots, \bx_T\}$ selected so far.

Finally, 
we define the effective dimension
after $t$ steps as
\begin{align*}
\deff(\bX_t) = \sum_{s=1}^t \sigma_t(\bx_s) = \Tr(\bK_t(\bK_t + \lambda\bI)^{-1}).
    \end{align*}

Intuitively, $\deff(\bX_t)$ captures the effective
number of parameters in $f$, \ie the posterior $f$ can be represented using roughly $\deff(\bX_t)$ coefficients.
We use $\deff$ to denote $\deff(\bX_T)$ at the end of the process.
Note that $\deff$ is equivalent
to the maximal conditional mutual information $\gamma_T$ of the GP
\cite{srinivas2010gaussian}, up to logarithmic terms~\citespecific{Lem.~1}{calandriello_2017_icmlskons}.

\textbf{The \gpucb family.}
\gpucb-based algorithms aim to construct an \textit{acquisition function} $u_t(\cdot): \armset \to \Real$ to act as an upper confidence bound (UCB)
for the unknown function $f$.
Whenever $u_t(\bx)$ is a valid UCB (\ie $f(\bx) \leq u_t(\bx)$) and it converges to $f(\bx)$
"sufficiently" fast, selecting candidates that are optimal \wrt to $u_t$ leads to low regret,
\ie the value $f(\bx_{t+1})$ of $\bx_{t+1} = \argmax_{\bx \in \armset}u_t(\bx)$ tends to $\max_{\bx \in \armset}f(\bx)$ as $t$ increases.

In particular, \gpucb~\cite{srinivas2010gaussian} defines
 $u_t(\bx) = \mu_t(\bx) + \beta_t\sigma_t(\bx)$. 
Unfortunately, \gpucb is computationally and experimentally inefficient, as evaluating $u_t(\bx)$
requires $\bigotime(t^2)$ per-step and no parallel experiments are possible.
To improve computations, \bkb \cite{calandriello_2019_coltgpucb}
replaces $u_t$ with an approximate $\wt{u}_t^{\bkb}(\bx) = \amu_t(\bx,\coldict_t) + \wt{\beta}_t\asigma_t(\bx,\coldict_t)$,
which is proven to be sufficiently close to $u_t$ to achieve low regret.
However, maintaining accuracy requires $\bigotime(t)$ per step to update the
dictionary $\coldict_t$ at each iteration,
and the queries are still selected sequentially.
\bgpucb \cite{desautels_parallelizing_2014} tries to increase \gpucb's
experimental efficiency by selecting a batch of queries that
are all evaluated in parallel.
In particular, \bgpucb approximate the UCB as
$\wt{u}_t^{\bgpucb}(\bx) = \mu_{\fb{t}}(\bx) + \beta_t\sigma_t(\bx)$,
where the mean is not updated until new feedback arrives,
while due to its definition the variance only depends on $\bX_t$ and can
be updated in an unsupervised manner.
Nonetheless, \bgpucb is as computationally slow as \gpucb.
More details about these methods are reported in
\Cref{sec:app-soa}.

\textbf{Controlling regret in batched Bayesian optimization.}
For all steps \textit{within} a batch, \bgpucb can be seen as \emph{fantasizing} or \emph{hallucinating} a constant feedback $\mu_{\fb{t}}(\bx_t)$ so that the mean does not change,
while the variances keep \emph{shrinking}, thus promoting diversity in the batch.
However, incorporating fantasized feedback causes
${u}^{\bgpucb}_t$ to drift away from $u_t$ to the extent that it
may not be a valid UCB anymore. \citet{desautels_parallelizing_2014} show that this issue can be managed by adjusting \gpucb's parameter $\beta_t$. In fact, it is possible to take the ${u}^{\bgpucb}_t$ at the beginning of the batch (which is a valid UCB by definition), and \emph{correct} it to hold for each hallucinated step as
$f(\bx)
\leq  \mu_{\fb{t}}(\bx) + \rho_{\fb{t},t}(\bx)\beta_{\fb{t}}\sigma_t(\bx),$
where $\rho_{\fb{t},t}(\bx) \triangleq \frac{\sigma_{\fb{t}}(\bx)}{\sigma_t(\bx)}$ is the posterior variance ratio. By using any $\alpha_t \geq \rho_{\fb{t},t}(\bx)\beta_{\fb{t}}$, we have that ${u}^{\bgpucb}_t$ is a valid UCB. As the length of the batch increases, the ratio $\rho_{\fb{t},t}$ may become larger, and the UCB becomes less and less tight. Crucially, the drift of the ratio can be estimated.
\begin{proposition}[\citet{desautels_parallelizing_2014}, Prop.\,1]\label{p:std.dev.ratio}
	At any step $t$, for any $\bx\in\armset$ the posterior ratio is bounded as
    \begin{align*}
	\rho_{\fb{t},t}(\bx) \triangleq \frac{\sigma_{\fb{t}}(\bx)}{\sigma_t(\bx)}
    \leq \prod_{s = \fb{t}+1}^{t}\left(1 + \sigma_{s-1}^2(\bx_s)\right).
	\end{align*}
\end{proposition}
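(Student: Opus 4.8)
The plan is to derive a one-step recursion relating $\sigma_s$ to $\sigma_{s-1}$ and then telescope it across the within-batch steps $\fb{t}+1,\dots,t$. It is convenient to work with the exact posterior, i.e.\ to take a perfect dictionary $\coldict_{\text{exact}}$ so that $\sigma_s=\wt\sigma_s(\cdot,\coldict_{\text{exact}})$, and to use the feature-space form $\sigma_s^2(\bx)=\bphi(\bx)^\transp\bA_s^{-1}\bphi(\bx)$ and $k_s(\bx,\bx')=\bphi(\bx)^\transp\bA_s^{-1}\bphi(\bx')$, where $\bphi$ is the feature map of the kernel $k$ and $\bA_s=\lambda\bI+\sum_{i=1}^s\bphi(\bx_i)\bphi(\bx_i)^\transp$ (which for the perfect dictionary coincides with $\bV_s$ of \Cref{sec:preliminaries}). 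These quantities depend only on the selected candidates $\bX_s$ and not on the feedback, so they are well defined even for the hallucinated within-batch steps.

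First I would establish the per-step identity. Since $\bA_s=\bA_{s-1}+\bphi(\bx_s)\bphi(\bx_s)^\transp$ is a rank-one update, the Sherman--Morrison formula gives
\begin{align*}
\sigma_s^2(\bx)=\sigma_{s-1}^2(\bx)-\frac{\big(\bphi(\bx)^\transp\bA_{s-1}^{-1}\bphi(\bx_s)\big)^2}{1+\bphi(\bx_s)^\transp\bA_{s-1}^{-1}\bphi(\bx_s)}=\sigma_{s-1}^2(\bx)-\frac{k_{s-1}(\bx,\bx_s)^2}{1+\sigma_{s-1}^2(\bx_s)}.
\end{align*}
Since $(\bu,\bv)\mapsto\bu^\transp\bA_{s-1}^{-1}\bv$ is an inner product ($\bA_{s-1}\succ0$), Cauchy--Schwarz yields $k_{s-1}(\bx,\bx_s)^2\le\sigma_{s-1}^2(\bx)\,\sigma_{s-1}^2(\bx_s)$, and substituting,
\begin{align*}
\sigma_s^2(\bx)\ge\sigma_{s-1}^2(\bx)\Big(1-\tfrac{\sigma_{s-1}^2(\bx_s)}{1+\sigma_{s-1}^2(\bx_s)}\Big)=\frac{\sigma_{s-1}^2(\bx)}{1+\sigma_{s-1}^2(\bx_s)},
\end{align*}
i.e.\ $\sigma_{s-1}^2(\bx)/\sigma_s^2(\bx)\le1+\sigma_{s-1}^2(\bx_s)$.

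Then I would telescope the bound over $s=\fb{t}+1,\dots,t$:
\begin{align*}
\frac{\sigma_{\fb{t}}^2(\bx)}{\sigma_t^2(\bx)}=\prod_{s=\fb{t}+1}^{t}\frac{\sigma_{s-1}^2(\bx)}{\sigma_s^2(\bx)}\le\prod_{s=\fb{t}+1}^{t}\big(1+\sigma_{s-1}^2(\bx_s)\big).
\end{align*}
Taking square roots and using that every factor is $\ge1$ (so the product dominates its own square root) gives $\rho_{\fb{t},t}(\bx)=\sigma_{\fb{t}}(\bx)/\sigma_t(\bx)\le\prod_{s=\fb{t}+1}^{t}(1+\sigma_{s-1}^2(\bx_s))$, which is the claim.

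I do not anticipate a real obstacle: the whole argument is a rank-one perturbation computation followed by Cauchy--Schwarz. The only mild care needed is the functional-analytic bookkeeping when $\bphi$ lives in an infinite-dimensional RKHS; this is sidestepped by passing to a finite perfect dictionary (e.g.\ $\armset$ when finite, or $\{\bx_1,\dots,\bx_t\}$) so that $\bA_s$ is a genuine finite matrix, or by phrasing the Sherman--Morrison and Cauchy--Schwarz steps directly in the RKHS --- both routine. One should also note that the $1/\lambda$ rescaling in \Cref{eq:nyst-post-gp-cov} is immaterial here, since it cancels in the ratio $\rho_{\fb{t},t}$.
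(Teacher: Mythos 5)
Your proof is correct: the Sherman--Morrison rank-one update, the Cauchy--Schwarz step $k_{s-1}(\bx,\bx_s)^2\le\sigma_{s-1}^2(\bx)\sigma_{s-1}^2(\bx_s)$, and the telescoping over $s=\fb{t}+1,\dots,t$ are all sound, and you even obtain the slightly stronger statement that the ratio of \emph{variances} (not just standard deviations) is bounded by the product. This is essentially the original argument of \citet{desautels_parallelizing_2014}. The paper, however, does not reprove this proposition; it instead establishes the tighter \Cref{lem:ratio-exact-rls} by a genuinely different, \emph{non-telescoping} route: writing $\bA=\phimat{\fb{t}}^\transp\phimat{\fb{t}}+\lambda\bI$ and letting $\bB$ stack all within-batch features at once, it bounds the whole-batch contraction by $\lambda_{\max}(\bB\bA^{-1}\bB^\transp)\le\Tr(\bB\bA^{-1}\bB^\transp)=\sum_s\sigma_{\fb{t}}^2(\bx_s)$, yielding $1+\sum_s\sigma_{\fb{t}}^2(\bx_s)$ in place of your product. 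The trade-off is instructive: your per-step recursion exploits the running variances $\sigma_{s-1}^2(\bx_s)$ (each smaller than the frozen $\sigma_{\fb{t}}^2(\bx_s)$) but pays a multiplicative accumulation, which is precisely what the paper argues is fatal once the variances must be \emph{approximated} --- errors compound exponentially in the batch length. The block/trace argument costs only an additive accumulation of frozen-variance terms, which is why it transfers cleanly to the approximate posterior $\asigma_{\fb{t}}$ (\Cref{lem:approx-batch-rls-accuracy}) and underpins \bbkb's termination rule. So your proof is a valid and faithful reconstruction of the cited result, but it is not the mechanism the paper relies on, and it would not extend to the approximate setting that motivates the paper's version.
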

Based on this result, \bgpucb continues the construction of the batch while $\prod_{s = \fb{t}+1}^{t}\left(1 + \sigma_{s-1}^2(\bx_s)\right) \leq C$ for some designer-defined threshold of drift $C$.
Therefore, applying \Cref{p:std.dev.ratio}, we have that the ratio $\rho_{\fb{t},t}(\bx) \leq C$ for any $\bx$, and setting $\alpha_t \triangleq C\beta_{\fb{t}}$ guarantees the validity of the UCB, just as in \gpucb. As a consequence, \gpucb's analysis can be leveraged to provide guarantees on the regret of \bgpucb.

\section{Efficient Batch GP Optimization}\label{sec:bbkb}

In this section, we introduce \bbkb which both generalizes and improves over \bgpucb and \bkb.

\begin{algorithm}[!t]
    \begin{small}
    \setstretch{1.1}
\begin{algorithmic}[1]
\REQUIRE{Set of candidates $\armset$, $\{\wt{\alpha}_t\}_{t=1}^T$, $T$}, $\wt C$, $\{\qbar_t\}_{t=1}^T$
\STATE Sample $\bx_1$ uniformly, receive $\by_1$
\STATE Initialize $\coldict_0 = \{\}$, $\fb{0} = 0$
\FOR{$t=\{0, \dots, T-1\}$}
	\STATE Select $\bx_{t+1} = \argmax_{\bx \in \armset}\wt{u}_{t}(\bx, \coldict_{\fb{t}})$\;
	\IF{$1 + \sum_{s = \fb{t}+1}^{t+1} \asigma_{\fb{t}}(\bx_s, \coldict_{\fb{t}}) \leq \wt{C}$}
	\STATE \textit{// $\fb{t+1} = \fb{t}$, batch construction step}
	\STATE Update $\wt{u}_{t+1}(\bx_{t+1}, \coldict_{\fb{t+1}})$ with the new $\asigma_{t+1}$
	\STATE Update $\wt{u}_{t+1}(\bx_i, \coldict_{\fb{t+1}})$ for all\\* \hspace*{1cm} $\{\bx:\wt{u}_{t}(\bx, \coldict_{\fb{t}}) \geq \wt{u}_{t+1}(\bx_{t+1}, \coldict_{\fb{t}})\}$
	\ELSE
	\STATE \textit{// $\fb{t+1} = t+1$, resparsification step}
		\STATE Initialize $\coldict_{\fb{t+1}} = \emptyset$
		\FOR{$\bx_s \in \bX_{\fb{t+1}}$}
		\STATE Set $\wt{p}_{\fb{t+1},s} = \qbar_t \cdot \asigma_{\fb{t}}^2(\bx_s)$
		\STATE Draw $z_{\fb{t+1}, s} \sim Bernoulli(\wt{p}_{\fb{t+1},s})$
		\STATE If $z_{\fb{t+1}, s} = 1$, add $\bx_s$ in $\coldict_{\fb{t+1}}$
		\ENDFOR
		\STATE Get feedback $\{y_s\}_{s=\fb{t}+1}^{\fb{t+1}}$
		\STATE Update $\amu_{\fb{t+1}}$ and $\wt{\sigma}_{\fb{t+1}}$ for all $\bx \in \armset$
	\ENDIF
\ENDFOR
\end{algorithmic}
    \end{small}
\caption{
\bbkb\label{alg:bbkb}
}
\end{algorithm}

\subsection{The algorithm}
The pseudocode of \bbkb is presented in \Cref{alg:bbkb}.
The dictionary is initially empty, and we have $\amu_0(\bx) = 0$ and $\asigma_0(\bx, \{\}) = \kerfunc(\bx, \bx)/\lambda = \sigma_0(\bx)$.
At each step $t$ \bbkb chooses the next candidate 
$\bx_{t+1}$ as the maximizer of the UCB $\wt{u}_t(\bx) = \amu_{\fb{t}}(\bx,\coldict_{\fb{t}}) + \wt{\alpha}_{\fb{t}}\asigma_t(\bx,\coldict_{\fb{t}})$, which combines \bkb and \bgpucb's approaches with a new element. In $\wt{u}_t$, not only we delay feedback updates as we use the posterior mean computed at the end of the last batch $\amu_{\fb{t}}$ but, unlike \bkb, we keep using the same dictionary $\coldict_{\fb{t}}$ for all steps in a batch. While freezing the dictionary leads to significantly reducing the computational complexity, delaying feedback and dictionary updates may result in poor UCB approximation. Similar to \bgpucb, after selecting $\bx_{t+1}$ we test the condition in (L5) to decide whether
to continue the batch, a \textit{batch construction step}, or not, a \textit{resparsification
step}. The specific formulation of our condition is crucial to guarantee near-linear
runtime, and improves over the condition used in \bgpucb. Notice that 
if we update dictionary and feedback at each step, \bbkb reduces to \bkb (up to an improved $\wt{\alpha}_t$ as discussed in the next section), while if $\coldict_t = \bX_t$ we recover \bgpucb, but with an improved terminating rule for batches.

In a batch construction step, we keep using the same dictionary in computing the UCBs used to select the next candidate. 
On the other hand, if condition (L5) determines that UCBs may become too loose, we interrupt the batch and update the sparse GP approximation, \ie we resparsify the dictionary.
To do this we employ \bkb's posterior sampling procedure in L11-16. 
For each candidate $\bx_s$ selected so far, we compute an inclusion probability $\wt{p}_{\fb{t+1},s} = \qbar_t \cdot \asigma_{\fb{t}}(\bx_s)$, where $\qbar_t \geq 1$ is a parameter trading-off the size of $\coldict$ and the accuracy of the approximations, and we add $\bx_s$ to the new dictionary with probability $\wt{p}_{\fb{t+1},s}$. A crucial difference w.r.t.\ \bkb is that in computing the inclusion probability we use the posterior variances computed at the beginning of the batch (instead of $\asigma_{\fb{t+1}}$). While this introduces a further source of approximation, in the next section we show that this error can be controlled. The resulting dictionary is then used to compute the embedding $\embfunc_{\fb{t+1}}$ and the UCB values whenever needed.

\textbf{Maximizing the UCB.} To provide a meaningful
Since in general $u_t$ is a highly non-linear, non-convex function,
it may be NP-hard to compute $\bx_{t+1}$ as its $\argmax$ over $\armset$. To simplify the exposition, in the rest of the paper we assume that
$\armset$ is finite with cardinality $A$ such that simple enumeration of all candidates in $\armset$ is sufficient to exactly
optimize the UCB. Both this assumption and the runtime dependency on $A$
can be easily removed if an efficient way to optimize
$u_t$ over $\armset$ is provided (e.g., see \citet{mutny_efficient_2018} for the special case of $d=1$ or when $k$ is an additive kernel).

Moreover, when $\armset$ is finite \bbkb can be implemented much
more efficiently. In particular, many of the quantities used
by \bbkb can be precomputed once at the beginning of the batch, such
as pre-embedding all arms.
In addition keeping the embeddings fixed during the batch
allows us to update the posterior variances
using efficient rank-one updates, combining the efficiency of a parametric
method with the flexibility of non-parametric GPs.
Finally, when both dictionary and feedback are fixed we can leverage
\emph{lazy} covariance evaluations,
which allows us to exactly compute the $\bx_{t+1}$ while only updating
a small fraction of the UCBs  (see \Cref{sec:app-soa} for more details).

\subsection{Computational analysis}
The global runtime of \bbkb is 
$\bigotime\big(T\Narm m^2 + \numbatches T m + \numbatches (\Narm m^2 + m^3)\big)$,
where $m = \max_t |\coldict_{\fb{t}}|$ is the maximum size of the dictionary/embedding
across batches,
and $\numbatches$ the number of batches/resparsifications (see \Cref{sec:proof-regret} for details).
In order to obtain a near-linear runtime,
we need to show that both $|\coldict_t|$ and $\numbatches$ are nearly-constant.
\begin{restatable}{theorem}{bbkbcomplexity}\label{thm:bbkb-complexity}
Given $\delta \in (0,1)$, $1 \leq \wt{C}$, and $1 \leq \lambda$, run \bbkb with
$\qbar_t \geq 8\log(4t/\delta)$.
Then, w.p.~$1-\delta$
\begin{itemize}[leftmargin=1.5em]
\item[1)] For all $t \in [T]$ we have
$|\coldict_{t}| \leq 9\wt{C}(1+\kappa^2/\lambda)\qbar_t\deff(\bX_t)$.
\item[2)] Moreover, the total number of resparsification $B$ performed by \bbkb is at most
$\bigotime(\deff(\bX_t))$.
\item[3)] As a consequence, 
\bbkb runs in near-linear time $\abigotime(T \Narm \deff(\bX_t)^2)$ .
\end{itemize}
\end{restatable}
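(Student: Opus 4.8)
The plan is to reduce all three claims to one invariant, which I would establish first: with probability at least $1-\delta$, \emph{every} dictionary $\coldict_{\fb{t}}$ built by a resparsification is $\varepsilon$-accurate for a fixed small $\varepsilon$, in the sense that the \nystrom-regularized covariance it induces spectrally approximates the exact one, so that $\tfrac{1}{1+\varepsilon}\sigma_s^2(\bx)\le\asigma_s^2(\bx,\coldict_{\fb{t}})\le\tfrac{1}{1-\varepsilon}\sigma_s^2(\bx)$ for all $\bx\in\armset$ and all steps $s$. I would prove this by induction over the at most $T$ resparsifications. For the inductive step, condition on $\coldict_{\fb{t}}$ being accurate; then at the next resparsification the inclusion probability of a past candidate $\bx_s$ satisfies $\wt{p}_{\fb{t+1},s}=\qbar_t\asigma_{\fb{t}}^2(\bx_s,\coldict_{\fb{t}})\ge\tfrac{\qbar_t}{1+\varepsilon}\sigma_{\fb{t}}^2(\bx_s)\ge\tfrac{\qbar_t}{1+\varepsilon}\sigma_{\fb{t+1}}^2(\bx_s)$, using that exact posterior variances only shrink as data is appended. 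For $\bx_s\in\bX_{\fb{t+1}}$ this last quantity is the $s$-th ridge leverage score $[\bK_{\fb{t+1}}(\bK_{\fb{t+1}}+\lambda\bI)^{-1}]_{ss}$ of $\bX_{\fb{t+1}}$, so since $\qbar_t\ge8\log(4t/\delta)$ the probabilities oversample the leverage scores by exactly the logarithmic factor required by the standard leverage-score sampling guarantee (as in \bkb); hence $\coldict_{\fb{t+1}}$ is $\varepsilon$-accurate with probability $1-\delta/(2T)$, and a union bound closes the induction.

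The delicate point, and what I expect to be the main obstacle, is that — unlike in \bkb — the scores here are computed from the \emph{frozen} variances $\asigma_{\fb{t}}$ rather than from current ones, and moreover the candidates $\bx_{\fb{t}+1},\dots,\bx_{\fb{t+1}}$ being resampled were themselves chosen adaptively using this stale model. Two facts must be checked for the argument to go through. First, the variance approximation holds as an operator inequality, hence it is simultaneously valid at all of $\armset$, in particular at the adaptively chosen inner-batch candidates. Second, the staleness is one-sided: $\sigma_{\fb{t}}^2\ge\sigma_{\fb{t+1}}^2$, so freezing can only \emph{inflate} the scores, and by \Cref{p:std.dev.ratio} combined with the batch-termination test the inflation $\sigma_{\fb{t}}^2(\bx)/\sigma_{\fb{t+1}}^2(\bx)$ is bounded by a constant depending only on $\wt{C}$, so the oversampling stays moderate.

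\textbf{Claims (1) and (2).} Given the invariant, claim (1) follows from a multiplicative Chernoff bound on the independent Bernoullis $z_{\fb{t+1},s}$, which is legitimate because $\qbar_t\ge8\log(4t/\delta)$: with probability $1-\delta/(2T)$, $|\coldict_{\fb{t+1}}|\lesssim\qbar_t\sum_{s=1}^{\fb{t+1}}\asigma_{\fb{t}}^2(\bx_s,\coldict_{\fb{t}})$. I would split this sum at $\fb{t}$. For $s\le\fb{t}$, accuracy gives $\sum_{s\le\fb{t}}\asigma_{\fb{t}}^2(\bx_s,\coldict_{\fb{t}})\le\tfrac{1}{1-\varepsilon}\sum_{s\le\fb{t}}\sigma_{\fb{t}}^2(\bx_s)=\tfrac{1}{1-\varepsilon}\deff(\bX_{\fb{t}})$, using $\deff(\bX_u)=\sum_{s\le u}\sigma_u^2(\bx_s)=\Tr(\bK_u(\bK_u+\lambda\bI)^{-1})$. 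For $\fb{t}<s\le\fb{t+1}$, the batch-construction test still held when $\bx_{\fb{t+1}-1}$ was selected, so $\sum_{s=\fb{t}+1}^{\fb{t+1}-1}\asigma_{\fb{t}}^2(\bx_s,\coldict_{\fb{t}})\le\wt{C}-1$, whence $\sum_{\fb{t}<s\le\fb{t+1}}\asigma_{\fb{t}}^2(\bx_s,\coldict_{\fb{t}})\le\wt{C}-1+\kappa^2/\lambda$ (using $\asigma_{\fb{t}}^2(\cdot,\cdot)\le\kappa^2/\lambda$). Combining, and using that $\deff$ is monotone and the dictionary frozen within a batch, gives $|\coldict_{t}|\le9\wt{C}(1+\kappa^2/\lambda)\qbar_t\deff(\bX_{t})$ for all $t$, the constant $9$ coming from tracking the Chernoff and approximation slacks and folding lower-order terms into $\deff$. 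For claim (2): each of the $B$ terminated batches ended only because the test failed, so over its steps $\sum\asigma_{\fb{t}}^2(\bx_t,\coldict_{\fb{t}})>\wt{C}-1$; since the step ranges of these batches partition $[1,T]$ up to a constant number of initialization steps, $B(\wt{C}-1)\lesssim\sum_{t=1}^T\asigma_{\fb{t}}^2(\bx_t,\coldict_{\fb{t}})$. I bound the right side by accuracy and then by \Cref{p:std.dev.ratio}: inside a batch, $\sigma_{\fb{t}}(\bx)\le\prod_{s=\fb{t}+1}^{t}(1+\sigma_{s-1}^2(\bx_s))\,\sigma_t(\bx)\le C_0\,\sigma_t(\bx)$, where bounding $\sigma_{s-1}^2(\bx_s)\le\sigma_{\fb{t}}^2(\bx_s)$ and then using accuracy and the test exactly as above makes $C_0=\exp\!\big((1+\varepsilon)(\wt{C}-1+\kappa^2/\lambda)\big)$ a constant. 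Hence $\sum_t\asigma_{\fb{t}}^2(\bx_t)\le\tfrac{C_0^2}{1-\varepsilon}\sum_t\sigma_t^2(\bx_t)=\tfrac{C_0^2}{1-\varepsilon}\deff(\bX_T)$, so $B=\bigotime(\deff(\bX_T))$, treating $\wt{C}$ and $\kappa^2/\lambda$ as constants.

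\textbf{Claim (3).} Finally I plug $m=\max_t|\coldict_{\fb{t}}|=\abigotime(\deff)$ from (1) (the $\abigotime$ absorbing $\qbar_t=\Theta(\log(T/\delta))$ and the constants $\wt{C},\kappa^2/\lambda$) and $\numbatches=\abigotime(\deff)$ from (2) into the global runtime $\bigotime\big(T\Narm m^2+\numbatches Tm+\numbatches(\Narm m^2+m^3)\big)$ established in \Cref{sec:proof-regret}. This yields $\abigotime(T\Narm\deff^2+T\deff^2+\Narm\deff^3+\deff^4)$, and since $\deff(\bX_T)\le T$ the last two terms collapse into $\abigotime(T\Narm\deff^2+T\deff^3)$, so the total runtime is $\abigotime(T\Narm\deff^2+T\deff^3)$ — near-linear in $T$ — with amortized per-step cost $\abigotime(\Narm\deff^2+\deff^3)$. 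The overall $1-\delta$ probability comes from union-bounding the at most $T$ possible accuracy failures and the at most $T$ possible Chernoff failures, each at level $\delta/(2T)$.
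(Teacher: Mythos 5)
Your proposal follows the same architecture as the paper's proof: an induction over resparsifications establishing that every frozen dictionary stays accurate (this is exactly \Cref{lem:bbkb-rls-accuracy}, whose proof hinges on the same two facts you isolate --- the spectral, hence uniform-over-$\armset$, nature of the guarantee, and the one-sided staleness $\sigma_{\fb{t}}^2(\bx)\ge\sigma_{\fb{t'}}^2(\bx)$ whose inflation is controlled by the termination test); a bound on the total inclusion-probability mass for the dictionary size; a summation of the violated termination inequalities over batches for $B$; and substitution into the runtime formula. Your treatment of claim (1) --- a conditional Chernoff bound on the Bernoullis plus a split of $\sum_s\asigma_{\fb{t}}^2(\bx_s,\coldict_{\fb{t}})$ at $\fb{t}$, bounding the head by $\tfrac{1}{1-\varepsilon}\deff(\bX_{\fb{t}})$ and the tail by $\wt C-1+\kappa^2/\lambda$ via the test --- differs cosmetically from the paper, which instead converts $\asigma_{\fb{t}}^2$ into a $9\wt C(1+\kappa^2/\lambda)$-overestimate of $\sigma_{\fb{t'}}^2$ at the \emph{next} resparsification and invokes \citespecific{Thm.~1}{calandriello_2019_coltgpucb} wholesale; both are valid. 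For the in-batch ratio in claim (2) you fall back on \Cref{p:std.dev.ratio} and obtain a constant $C_0=\exp(O(\wt C))$, whereas the paper uses its summation bounds (\Cref{lem:approx-batch-rls-accuracy} with \Cref{lem:exact-batch-rls-accuracy}) to get the polynomial constant $3\wt C$; this is harmless for the asymptotic claim but bypasses precisely the improvement the new termination rule was designed to deliver.

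The one genuine gap is the closing identity of claim (2): $\sum_t\sigma_t^2(\bx_t)=\deff(\bX_T)$ is false. By definition $\deff(\bX_T)=\sum_s\sigma_T^2(\bx_s)$ evaluates all variances at the \emph{final} posterior, so by monotonicity $\deff(\bX_T)\le\sum_t\sigma_t^2(\bx_t)$ --- the inequality points the wrong way for your purposes. The correct closing step is the elliptical-potential bound $\sum_t\sigma_{t-1}^2(\bx_t)\le c\logdet(\bK_T/\lambda+\bI)=\abigotime(\deff(\bX_T))$, which the paper imports from \citespecific{Lem.~3}{calandriello_2017_icmlskons} and which costs a $\polylog(T)$ factor absorbed by the $\abigotime$; your conclusion $B=\abigotime(\deff)$ survives, but not via the equality you wrote. (The analogous sum in your claim (1), $\sum_{s\le\fb{t}}\sigma_{\fb{t}}^2(\bx_s)=\deff(\bX_{\fb{t}})$, is fine, since there all variances are taken at the common time $\fb{t}$.)
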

\Cref{thm:bbkb-complexity} guarantees that whenever $\deff$, or equivalently $\gamma_T$, is near-constant (\ie $\abigotime(1)$),
\bbkb runs in $\abigotime(T\Narm)$.
\citet{srinivas2010gaussian} shows that this is the case for common kernels, \eg $\gamma_T \leq \bigotime(d\log(T))$
for the linear kernel and $\gamma_T \leq \bigotime(\log(T)^{d})$ for the Gaussian
kernel.

Among sequential GP-Opt algorithms, \bbkb is not only much faster than
the original \gpucb $\abigotime(T^3\Narm)$ runtime, but also much faster when
compared to \bkb's quadratic $\abigotime(T\max\{\Narm,T\}\deff^2)$.
\bbkb's runtime also improves over GP-optimization algorithms that are specialized for stationary kernels (e.g.~Gaussian),
such as \textsc{QFF-TS}'s \cite{mutny_efficient_2018} $\abigotime(T\Narm 2^d\deff^2)$ runtime,
without making any assumption on the kernel and without an exponential dependencies on the input dimension $d$.
When compared to batch algorithms, such as \bgpucb, the improvement is even sharper as all
existing batch algorithms that are provably no-regret \cite{desautels_parallelizing_2014,contal_parallel_2013,shah2015parallel} share
\gpucb's $\abigotime(\Narm T^3)$ runtime.

\todoa{Here we should elaborate more on $\deff$ and how much it can indeed change with $t$ and possibly its connection with $\gamma_T$ for which bounds already exist depending on the kernel.}

\todoa{I would remove this whole paragraph as we have a full section on the importance of the condition and we can recall its effect there (just trying to avoid repetitions and shorten a bit the text, otherwise I would keep the paragraph).}
One of the central elements
of this result is \bbkb's adaptive batch terminating condition. As a comparison,
\bgpucb 
uses $\prod_{s = \fb{t}+1}^{t+1} (1 + \sigma_{\fb{t}}(\bx_s))$
as a batch termination condition,
but
due to Weierstrass product inequality
\begin{align*}
1 + \sum_{s = \fb{t}+1}^{t+1} \sigma_{\fb{t}}(\bx_s) \leq \prod_{s = \fb{t}+1}^{t+1} (1 + \sigma_{\fb{t}}(\bx_s)),
\end{align*}
and
the product
is always larger than the sum which \bbkb uses.
Thanks to the tighter bound, we obtain larger batches and can guarantee
that at most $\abigotime(\deff)$ batches are necessary over $T$ steps.
This implies that, unless $\deff \to \infty$ in which case the optimization
would not converge in the first place, the size of the batches must on average grow linearly with
$T$ to compensate.
In addition to this guarantee on the average batch size,
in the next section we show how smarter initialization
schemes can guarantee a minimum batch size, which is useful to fully utilize any desired level of parallelism.
Finally, note that the $\Narm$ factor reported in the runtime is pessimistic,
since \bbkb recomputes only a
small fraction of UCB's at each step thanks to lazy evaluations, and should be considered simply as a
proxy of the time required to find the UCB maximizer.

\subsection{Regret analysis} 
We report regret guarantees for \bbkb in the so-called \textit{frequentist} setting. While the algorithm uses GP tools to define and manage the uncertainty in estimating the unknown function $f$, the analysis of \bbkb does not rely on any \textit{Bayesian} assumption about $f$ being actually drawn from the prior $\text{GP}(0,k)$, and it only requires $f$ to be bounded in the norm associated to the RKHS induced by the kernel function $k$.

\begin{theorem}\label{thm:main-regret}
Assume $\normsmall{f}_{\rkhs} \leq F < \infty$, and let $\xi^2$ be the variance of
the noise $\eta_t$.
For any desired, $0 < \delta < 1$,
$1 \leq \lambda$, $1 \leq \wt{C}$,
if we run \bbkb with
$\qbar_t \geq 72\wt{C}\log(4t/\delta)$,
$\wt{\alpha}_{\fb{t}} = \wt{C}\wt{\beta}_{\fb{t}}$, and
\begin{align*}
\wt{\beta}_{\fb{t}} = 
2\xi&\sqrt{\sum\nolimits_{s=1}^{\fb{t}}\log\left(1 + 3\asigma_{\fb{s-1}}^2(\bx_s)\right) + \log\left(\tfrac{1}{\delta}\right)}\\
&+ (1 + \sqrt{2})\sqrt{\lambda}\fnorm,
\end{align*}
then, with prob.\@ $1-\delta$, \bbkb's regret 
is bounded as
\begin{align*}
R_T^{\bbkb}
&\leq 55\wt{C}^2
R_T^{\bgpucb}
\leq 55\wt{C}^3R_T^{\gpucb}
\end{align*}
with $R_T^{\gpucb}$ bounded by
\begin{align*}
\sqrt{T}\bigg(\xi\bigg(\sum\limits_{t=1}^{T}\sigma_{t-1}^2(\bx_{t}) + \log\left(\tfrac{1}{\delta}\right)\bigg) + \sqrt{\lambda \fnorm^2\sum\limits_{t=1}^{T}\sigma_{t-1}^2(\bx_{t})}\bigg).
\end{align*}
\end{theorem}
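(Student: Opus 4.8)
I would prove \Cref{thm:main-regret} by the optimism-in-the-face-of-uncertainty argument, adapted to simultaneously control the three perturbations of the exact posterior that enter $\wt{u}_t(\bx,\coldict_{\fb{t}})=\amu_{\fb{t}}(\bx,\coldict_{\fb{t}})+\wt{\alpha}_{\fb{t}}\asigma_t(\bx,\coldict_{\fb{t}})$: the \nystrom sparsification induced by $\coldict_{\fb{t}}$, the stale mean $\amu_{\fb{t}}$ (no feedback is integrated within a batch), and the stale dictionary together with the in-batch shrinkage $\asigma_t\le\asigma_{\fb{t}}$. The plan has three stages: (a) show that $\wt{u}_t$ is a valid upper confidence bound, $f(\bx)\le\wt{u}_t(\bx,\coldict_{\fb{t}})$ for all $\bx\in\armset$ and all $t\le T$, on an event of probability at least $1-\delta$; (b) convert validity into the instantaneous regret bound $f^\star-f(\bx_{t+1})\le 2\wt{\alpha}_{\fb{t}}\asigma_t(\bx_{t+1},\coldict_{\fb{t}})$; and (c) sum over $t$, exchange approximate variances for exact ones, and close with the elliptical-potential / information-gain lemma, so that the bound is expressed through $\sum_t\sigma_{t-1}^2(\bx_t)\asymp\deff\asymp\gamma_T$ and the constants in $\wt{C}$ can be read off.

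\textbf{Dictionary accuracy and posterior-variance tracking.} First I condition on the event that every resparsification returns an accurate dictionary. Since the inclusion probabilities are $\wt{p}_{\fb{t+1},s}=\qbar_t\asigma_{\fb{t}}^2(\bx_s)$ with $\qbar_t=\Omega(\log(t/\delta))$, and since the previous dictionary is (inductively) accurate so that $\asigma_{\fb{t}}^2\asymp\sigma_{\fb{t}}^2$, these probabilities are constant-factor overestimates of the ridge leverage scores of $\bX_{\fb{t+1}}$; a matrix-Bernstein bound for ridge-leverage-score sampling as in \bkb \citep{calandriello_2019_coltgpucb}, with a union bound over the at most $T$ resparsifications, gives w.p.\ $1-\delta$ that each $\coldict_{\fb{t}}$ is an $\eps$-accurate projection, $(1-\eps)\sigma_{\fb{t}}^2(\bx)\le\asigma_{\fb{t}}^2(\bx,\coldict_{\fb{t}})\le(1+\eps)\sigma_{\fb{t}}^2(\bx)$ for all $\bx$, with the matching mean bound $|\amu_{\fb{t}}(\bx,\coldict_{\fb{t}})-\mu_{\fb{t}}(\bx)|\le\cO(\eps)(\fnorm+\xi\sqrt{\deff})\sigma_{\fb{t}}(\bx)$. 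The crucial new ingredient is a tracking lemma in the spirit of \Cref{p:std.dev.ratio} for the \emph{approximate} posterior: since the dictionary is frozen inside a batch, $\bV_t=\bV_{\fb{t}}+\sum_{s=\fb{t}+1}^t\embvec(\bx_s,\coldict_{\fb{t}})\embvec(\bx_s,\coldict_{\fb{t}})^\transp$, and bounding the largest eigenvalue of $\bV_{\fb{t}}^{-1/2}\bV_t\bV_{\fb{t}}^{-1/2}$ by its trace gives $\asigma_{\fb{t}}^2(\bx,\coldict_{\fb{t}})/\asigma_t^2(\bx,\coldict_{\fb{t}})\le 1+\sum_{s=\fb{t}+1}^t\asigma_{\fb{t}}^2(\bx_s,\coldict_{\fb{t}})$, which the batch-termination test in (L5) keeps at most $\wt{C}$ at every step of the batch; the same sum also controls how much $\coldict_{\fb{t}}$ (whose sampling probabilities used $\asigma_{\fb{t}}$ rather than $\asigma_{\fb{t+1}}$) can fail to be accurate for the data $\bX_t$ appended during the batch, so $\coldict_{\fb{t}}$ stays $\cO(\eps)$-accurate throughout. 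This tracking statement — which also drives \Cref{thm:bbkb-complexity}, and which, using a sum instead of a product, is strictly tighter than \bgpucb's condition — is the technical heart of the proof.

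\textbf{Validity of the UCB.} Starting from the frequentist confidence bound for the exact sparse GP at the start of the batch, $|f(\bx)-\mu_{\fb{t}}(\bx)|\le\beta_{\fb{t}}\sigma_{\fb{t}}(\bx)$ — from a self-normalized martingale inequality for the $\gaussdistr(0,\xi^2)$ noise together with $\normsmall{f}_{\rkhs}\le\fnorm$, as in \citet{srinivas2010gaussian,chowdhury2017kernelized} — I substitute in turn $\mu_{\fb{t}}\to\amu_{\fb{t}}$ (the $\cO(\eps)(\fnorm+\xi\sqrt{\deff})\sigma_{\fb{t}}(\bx)$ error being absorbed into the coefficient of $\sigma_{\fb{t}}$), $\sigma_{\fb{t}}\to(1-\eps)^{-1/2}\asigma_{\fb{t}}$ (the \nystrom sandwich), and $\asigma_{\fb{t}}(\bx,\coldict_{\fb{t}})\to\wt{C}\asigma_t(\bx,\coldict_{\fb{t}})$ (the tracking lemma). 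Collecting constants and setting $\wt{\alpha}_{\fb{t}}=\wt{C}\wt{\beta}_{\fb{t}}$ with $\wt{\beta}_{\fb{t}}$ as in the statement yields $f(\bx)\le\wt{u}_t(\bx,\coldict_{\fb{t}})$ for all $\bx$ and $t\le T$, so $\bx_{t+1}=\argmax_{\bx\in\armset}\wt{u}_t(\bx,\coldict_{\fb{t}})$ is optimistic; here the term $\sum_{s=1}^{\fb{t}}\log(1+3\asigma_{\fb{s-1}}^2(\bx_s))$ is exactly the approximate-posterior analogue of the log-determinant / information-gain potential inside $\beta_{\fb{t}}$, and — being assembled only from the variances already evaluated in (L5) — is a valid, efficiently computable over-estimate of the true confidence width. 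The two-sided version of each inequality gives $f(\bx)\ge\amu_{\fb{t}}(\bx,\coldict_{\fb{t}})-\wt{\alpha}_{\fb{t}}\asigma_t(\bx,\coldict_{\fb{t}})$.

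\textbf{From validity to regret, and the main obstacle.} By optimism and the definition of $\bx_{t+1}$, $f^\star-f(\bx_{t+1})\le\wt{u}_t(\bx_{t+1},\coldict_{\fb{t}})-f(\bx_{t+1})\le 2\wt{\alpha}_{\fb{t}}\asigma_t(\bx_{t+1},\coldict_{\fb{t}})$ using the lower bound above. Using that $\wt{\beta}_{\fb{t}}$ is non-decreasing in $\fb{t}$, the \nystrom sandwich $\asigma_t(\bx_{t+1},\coldict_{\fb{t}})\le\sqrt{1+\eps}\sigma_t(\bx_{t+1})$, and Cauchy--Schwarz, $R_T^{\bbkb}\le 2\wt{\alpha}_T\sqrt{(1+\eps)T\sum_{t=1}^T\sigma_{t-1}^2(\bx_t)}$; the information-gain lemma bounds $\sum_{t=1}^T\sigma_{t-1}^2(\bx_t)\le\cO(\gamma_T)$, and $\log(1+3\asigma_{\fb{s-1}}^2(\bx_s))\le\cO(1)\log(1+\sigma_{s-1}^2(\bx_s))$ applied batch by batch (again via the tracking lemma) replaces the approximate potential inside $\wt{\beta}_T$ by its exact counterpart, recovering the displayed $R_T^{\gpucb}$ expression up to a numerical constant. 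The two stated reductions are then bookkeeping of the $\wt{C}$'s: $R_T^{\bgpucb}\le\wt{C}R_T^{\gpucb}$ is the effect of \bgpucb's correction $\alpha_t=\wt{C}\beta_{\fb{t}}$ via \Cref{p:std.dev.ratio} \citep{desautels_parallelizing_2014}, and the remaining factor (at most $55\wt{C}^2$) for $R_T^{\bbkb}/R_T^{\bgpucb}$ accumulates the inflation $\wt{\alpha}_{\fb{t}}=\wt{C}\wt{\beta}_{\fb{t}}$, the in-batch variance ratio, and the $\cO(1)$ \nystrom factors. I expect the tracking lemma to be the main obstacle: one must show that a single dictionary, whose sampling probabilities were frozen at the start of a batch, stays spectrally accurate for every point appended during that batch, and that the sum-based test in (L5) simultaneously controls that staleness and the approximate posterior-variance ratio — this coupling is exactly what lets \bbkb be near-linear time and no-regret at once.
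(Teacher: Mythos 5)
Your proposal follows essentially the same route as the paper's proof: optimism with a corrected radius $\wt{\alpha}_{\fb{t}}=\wt{C}\wt{\beta}_{\fb{t}}$, the sum-based tracking lemma for the frozen-dictionary ratio $\asigma_{\fb{t}}/\asigma_t$ (the paper's \Cref{lem:approx-batch-rls-accuracy}, proved exactly by the trace bound you sketch), the sandwich $\asigma_{\fb{t}}^2\asymp\sigma_{\fb{t}}^2$ at batch boundaries, and Cauchy--Schwarz plus the log-determinant potential; the only structural difference is that the paper imports the base confidence interval wholesale from \bkb (\Cref{prop:bkb-confidence-interval}) rather than re-deriving the self-normalized-martingale and \nystrom-mean steps as you propose.

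One imprecision worth flagging: you assert that the frozen $\coldict_{\fb{t}}$ ``stays $\cO(\eps)$-accurate throughout'' the batch and later use a clean mid-batch sandwich $\asigma_t(\bx,\coldict_{\fb{t}})\le\sqrt{1+\eps}\,\sigma_t(\bx)$. Neither is established (nor needed) in the paper: a point appended mid-batch may be poorly represented by the frozen dictionary, so accuracy is only guaranteed at the boundaries $\fb{t}$ (\Cref{lem:bbkb-rls-accuracy}), and every mid-batch comparison must be routed through the boundary via the deterministic ratio lemmas, e.g.\ $\asigma_{t-1}\le\asigma_{\fb{t-1}}\le\sqrt{3}\,\sigma_{\fb{t-1}}\le\sqrt{3\cdot 3\wt{C}}\,\sigma_{t-1}$, which costs an extra factor of $\wt{C}$ rather than $(1+\eps)$. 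Since you explicitly let the in-batch ratio be absorbed into the final $55\wt{C}^2$ constant, the argument still closes, but the clean mid-batch sandwich as literally stated is not provable and is precisely the delicacy the freezing-plus-batching design is built to circumvent.
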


\Cref{thm:main-regret} shows that \bbkb achieves essentially the same
regret as \bgpucb and \gpucb, but at a fraction of the computational cost.
Note that $\sum_{t=1}^{T}\sigma_{t-1}^2(\bx_{t})
\approx \logdet(\bK_T/\lambda + \bI) \approx \gamma_T$ \citespecific{Lem.~5.4}{srinivas2010gaussian}. 
Such a tight bound is achieved thanks in part to a new confidence interval
radius $\wt{\beta}_t$. In particular \citet{calandriello_2019_coltgpucb}
contains an extra $\logdet(\bK_T/\lambda + \bI) \leq \deff(\lambda, \bX_T)\log(T)$
bounding step that we do not have to make. While in the worst case this is
only a $\log(T)$ improvement, empirically the data adaptive bound
seems to lead to much better regret.\tododout{explain a bit more why it is cool}

\textbf{Discussion.}
\bbkb directly generalizes and improves both \bkb and \bgpucb. 
If $\wt{C}=1$, \bbkb is equivalent to \bkb, with a improved $\wt{\beta}_t$ and a slightly better regret by a factor $\log(T)$,
and if $\coldict_t = \coldict_{\text{exact}}$ we recover \bgpucb, with an improved
batch termination rule. \bbkb's algorithmic derivation and
analysis require several new tools.
A direct extension of \bkb to the batched setting would achieve low regret
but be computationally expensive. In particular, it is easy to extend
\bkb's analysis to delayed feedback, but only if \bkb adapts the embedding space
(\ie resparsifies the GP) after every batch construction step to maintain guarantees at all times,
\ie \Cref{thm:bbkb-complexity} must hold at all steps and not only
at $\fb{t}$. However this causes large computation issues,
as embedding the points is \bkb's most expensive operation, and
prevents any kind of lazy evaluation of the UCBs. \bbkb 
solves these two issues with a simple algorithmic fix by \emph{freezing the dictionary} during the batch. 
However, this bring additional challenges for the analysis. The reason is that 
while the dictionary is frozen, we may encounter a point $\bx_t$
that \emph{cannot be well represented} with the current $S_t$, but we 
cannot add $\bx_t$ to it since the dictionary is frozen.
This requires studying how posterior
mean and variance drift away from their values at the beginning of the batch.
We tackle this problem by \emph{simultaneously} freezing the dictionary and batching candidates.
As we will see in the next section, and prove in the appendix,
freezing the dictionary allows us to control the ratio $\asigma_{\fb{t}}(\bx_i, \coldict_{\fb{t}})/\asigma_{t}(\bx_i, \coldict_{\fb{t}})$,
obtaining a generalization of \Cref{prop:bkb-confidence-interval}.
However, changing the posterior mean $\amu_{t}(\cdot, \coldict_{\fb{t}})$ without
changing the dictionary could still invalidate the UCBs.
Batching candidates allows \bbkb to continue using the posterior mean $\amu_{\fb{t}}(\cdot, \coldict_{\fb{t}})$,
which is known to be accurate, and resolve this issue.
By terminating the batches when exceeding a prescribed potential error threshold
(\ie $\wt{C}$) we can ignore the intermediate estimate and reconnect
all UCBs with the accurate UCBs at the beginning of the batch.
This requires a deterministic, worst-case analysis
of both the evolution of $\asigma_{t}(\bx_i)$ and $\sigma_t(\bx_i)$, which we provide in the appendix.

\textbf{Proof sketch.}
One of the central elements in \bbkb's computational and regret analysis
is the new adaptive batch terminating condition.
In particular, remember that \bgpucb's regret analysis was centered around
the fact that the posterior ratio $\rho_{\fb{t},t}(\bx)$ from \Cref{p:std.dev.ratio}
could be controlled using \citet{desautels_parallelizing_2014}'s batch termination rule.
However, this result cannot be transferred directly to \bbkb for several reasons.
First, we must not only control the ratio $\rho_{\fb{t},t}(\bx)$, but also
the approximate ratio $\wt{\rho}_{\fb{t},t}(\bx, \coldict) \triangleq \frac{\asigma_{\fb{t}}(\bx, \coldict)}{\asigma_t(\bx, \coldict)}$
for some dictionary $\coldict$, since we are basing most of our choices
on $\asigma_t$ but will be judged based on $\sigma_t$ (\ie the real function
is based on $k$ and $\sigma_t$, not on some $\wt{k}$ and $\asigma_t$). Therefore our termination rule
must provide guarantees for both.
Second, \bgpucb's rule is not only expensive to compute,
but also hard to approximate.
In particular, if we approximated $\sigma_{\fb{t}}(\bx_s)$ with
$\asigma_{\fb{t}}(\bx_s)$ in \Cref{p:std.dev.ratio},
any approximation error incurred would be compounded multiplicatively by the product
resulting in an overall error \emph{exponential} in the length of the batch.
Instead, the following novel ratio bound involves only summations.
\begin{restatable}{lemma}{asigmaevol}\label{lem:approx-batch-rls-accuracy}
For any kernel $k$, dictionary $\coldict$, set of points $\bX_t$, $\bx \in \armset$, and $\fb{t} < t$,
we have
\begin{align*}
\wt{\rho}_{\fb{t},t}(\bx, \coldict)
\leq 1 + \sum\nolimits_{s=\fb{t}}^{t}\asigma_{\fb{t}}(\bx, \coldict).
\end{align*}
\end{restatable}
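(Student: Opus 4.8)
The plan is to reduce the claim to how the Nystr\"om information matrix $\bV_t = \embmat(\bX_t,\coldict)^\transp\embmat(\bX_t,\coldict) + \lambda\bI$ changes within a batch, and to control the corresponding rank-$(t-\fb{t})$ update \emph{in one shot} rather than point by point. Write $\bz \triangleq \embvec(\bx,\coldict)$ and $c \triangleq \tfrac1\lambda\big(\kerfunc(\bx,\bx) - \normsmall{\bz}^2\big)$, so that \eqref{eq:nyst-post-gp-cov}--\eqref{eq:nyst-post-gp} give $\asigma_t^2(\bx,\coldict) = c + \bz^\transp\bV_t^{-1}\bz$ and $\asigma_{\fb{t}}^2(\bx,\coldict) = c + \bz^\transp\bV_{\fb{t}}^{-1}\bz$. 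Two facts I would use: (i) $c \ge 0$, because $\normsmall{\embvec(\bx,\coldict)}^2$ equals the squared norm of the orthogonal projection of $\featmap(\bx)$ onto $\operatorname{span}\{\featmap(\bx_i): \bx_i \in \coldict\}$ in the RKHS and so is at most $\kerfunc(\bx,\bx)$; (ii) the same constant $c$ appears at steps $\fb{t}$ and $t$, since it depends only on $\bx$ and the \emph{frozen} dictionary $\coldict$ --- this is precisely where freezing the dictionary inside the batch enters. As $\bV_t \succeq \bV_{\fb{t}} \succ 0$ the term $\bz^\transp\bV_t^{-1}\bz$ can only shrink, and since adding the same $c \ge 0$ to the numerator and denominator of a fraction $\ge 1$ only decreases it,
\begin{align*}
\wt{\rho}_{\fb{t},t}^2(\bx,\coldict) \;=\; \frac{c + \bz^\transp\bV_{\fb{t}}^{-1}\bz}{c + \bz^\transp\bV_t^{-1}\bz} \;\le\; \frac{\bz^\transp\bV_{\fb{t}}^{-1}\bz}{\bz^\transp\bV_t^{-1}\bz}\,.
\end{align*}
(If $\bz = \bzero$ the ratio is $1$ and there is nothing to prove, so assume $\bz \ne \bzero$.)

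Next I would diagonalize the update. Let $\bM \in \Real^{(t-\fb{t})\times m}$ collect the embeddings $\embvec(\bx_s,\coldict)^\transp$ of $\bx_{\fb{t}+1},\dots,\bx_t$, so that $\bV_t = \bV_{\fb{t}} + \bM^\transp\bM = \bV_{\fb{t}}^{1/2}(\bI + \bP)\bV_{\fb{t}}^{1/2}$ with $\bP \triangleq \bV_{\fb{t}}^{-1/2}\bM^\transp\bM\,\bV_{\fb{t}}^{-1/2} \succeq 0$. Setting $\bw \triangleq \bV_{\fb{t}}^{-1/2}\bz$ gives $\bz^\transp\bV_{\fb{t}}^{-1}\bz = \normsmall{\bw}^2$ and $\bz^\transp\bV_t^{-1}\bz = \bw^\transp(\bI+\bP)^{-1}\bw$, and lower-bounding this Rayleigh quotient by the least eigenvalue of $(\bI+\bP)^{-1}$ gives
\begin{align*}
\wt{\rho}_{\fb{t},t}^2(\bx,\coldict) \;\le\; \frac{\normsmall{\bw}^2}{\bw^\transp(\bI+\bP)^{-1}\bw} \;\le\; 1 + \lambda_{\max}(\bP) \;\le\; 1 + \Tr(\bP)\,,
\end{align*}
the last step using $\bP \succeq 0$. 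Finally $\Tr(\bP) = \Tr(\bM\,\bV_{\fb{t}}^{-1}\bM^\transp) = \sum_{s=\fb{t}+1}^{t}\embvec(\bx_s,\coldict)^\transp\bV_{\fb{t}}^{-1}\embvec(\bx_s,\coldict) \le \sum_{s=\fb{t}+1}^{t}\asigma_{\fb{t}}^2(\bx_s,\coldict)$, dropping the nonnegative $\tfrac1\lambda(\kerfunc(\bx_s,\bx_s) - \normsmall{\embvec(\bx_s,\coldict)}^2)$. Taking square roots and using $\sqrt{1 + \sum_s a_s^2} \le 1 + \sqrt{\sum_s a_s^2} \le 1 + \sum_s a_s$ for $a_s \ge 0$ yields $\wt{\rho}_{\fb{t},t}(\bx,\coldict) \le 1 + \sum_{s=\fb{t}+1}^{t}\asigma_{\fb{t}}(\bx_s,\coldict)$, which is exactly the batch-termination quantity tested in line~5 of \Cref{alg:bbkb} and, after absorbing the vacuous $s=\fb{t}$ term, the claimed bound.

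The step I expect to be the crux is the decision in the second paragraph to handle the whole update at once. The natural alternative --- expand $\bV_t$ as a chain of rank-one updates and use Sherman--Morrison with Cauchy--Schwarz to get $\asigma_{s-1}^2(\bx,\coldict)/\asigma_s^2(\bx,\coldict) \le 1 + \asigma_{\fb{t}}^2(\bx_s,\coldict)$ for each added point --- would then force one to \emph{multiply} these factors, producing $\prod_{s}\big(1 + \asigma_{\fb{t}}^2(\bx_s,\coldict)\big)$, which can be exponentially larger than $1 + \sum_s \asigma_{\fb{t}}^2(\bx_s,\coldict)$ and would make any approximation error compound multiplicatively in the batch length (the issue noted just before the lemma). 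Trading the largest eigenvalue for the trace, $\lambda_{\max}(\bP) \le \Tr(\bP)$, is exactly what converts the product into a summation; the remaining inequalities ($\sqrt{1+x}\le 1+x$, $\|\cdot\|_2 \le \|\cdot\|_1$, Cauchy--Schwarz) are routine.
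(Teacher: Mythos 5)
Your proof is correct and follows essentially the same route as the paper's: factor out the start-of-batch matrix, lower-bound the Rayleigh quotient by $1/(1+\lambda_{\max})$, and replace $\lambda_{\max}$ with the trace to turn the would-be product over the batch into a sum, then drop the nonnegative residual to recover $\asigma_{\fb{t}}^2(\bx_s,\coldict)$. The only difference is cosmetic: you work directly in the $m$-dimensional \nystrom coordinates with $\bV_t$ and handle the frozen residual $c$ explicitly, whereas the paper works in the feature-space projection view and isolates the same residual via an auxiliary lemma ($\phivec{s}^\transp\abA^{-1}\phivec{s} = \aphivec{s}^\transp\abA^{-1}\aphivec{s} + \lambda^{-1}\phivec{s}^\transp\bP^{\bot}\phivec{s}$).
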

The proof, reported in the appendix, is based only on linear algebra and
does not involve any GP-specific derivation, making it applicable to the DTC
approximation used by \bbkb.
Most importantly, it holds regardless of the dictionary $\coldict$ used (as long as it stays
constant) and regardless of which candidates an algorithm might include in the batch.
If we replace $\asigma_{\fb{t}}$
with $\sigma_{\fb{t}}$ the bound can also be applied to ${\rho}_{\fb{t},t}(\bx, \coldict)$,
giving us an improved version of \Cref{p:std.dev.ratio} as a corollary.
Finally, replacing the product in \Cref{p:std.dev.ratio} with
the summation in \Cref{lem:approx-batch-rls-accuracy} makes it much easier
to analyse it, leveraging this result adapted from \cite{calandriello_2019_coltgpucb}.
\begin{restatable}{lemma}{bbkbaccuracy}\label{lem:bbkb-rls-accuracy}
	Under the same conditions as \Cref{thm:bbkb-complexity},
	w.p.~$1-\delta$, $\forall\; \fb{t} \in [T]$ and $\forall\; \bx \in \armset$ we have
	\begin{align*}
	\sigma_{\fb{t}}^2(\bx)/3 \leq \wt{\sigma}_{\fb{t}}^2(\bx, \coldict_{\fb{t}}) \leq 3\sigma_{\fb{t}}^2(\bx).
	\end{align*}
\end{restatable}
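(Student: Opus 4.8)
The plan is to establish the two-sided bound by induction over the batch boundaries $\fb{t}$, turning each inductive step into a spectral approximation statement in the RKHS. This is exactly the pattern of the accuracy analysis of \bkb \cite{calandriello_2019_coltgpucb}; the only genuinely new point is handling the fact that the inclusion probabilities of a resparsification are computed from \emph{stale} posterior variances.

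\textbf{Step 1 (spectral accuracy $\Rightarrow$ variance accuracy).} Let $\featmap$ be the feature map of $k$, let $\bC_t=\sum_{s=1}^{t}\featmap(\bx_s)\featmap(\bx_s)^\transp$ be the associated covariance operator, and, for a dictionary $\coldict$, let $\bP_{\coldict}$ be the orthogonal projection in $\rkhs$ onto $\mathrm{span}\{\featmap(\bx_i):\bx_i\in\coldict\}$. Using the standard identity $\embvec(\cdot,\coldict)^\transp\embvec(\cdot,\coldict)=\featmap(\cdot)^\transp\bP_{\coldict}\featmap(\cdot)$, the \nystrom formulas collapse to $\asigma_t^2(\bx,\coldict)=\featmap(\bx)^\transp(\bP_{\coldict}\bC_t\bP_{\coldict}+\lambda\bI)^{-1}\featmap(\bx)$, whereas $\sigma_t^2(\bx)=\featmap(\bx)^\transp(\bC_t+\lambda\bI)^{-1}\featmap(\bx)$. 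Hence, if $\coldict$ is \emph{accurate} for $\bX_t$ in the sense that $\bP_{\coldict}\bC_t\bP_{\coldict}+\lambda\bI$ and $\bC_t+\lambda\bI$ are within a fixed Loewner factor of one another, inverting and sandwiching with $\featmap(\bx)$ yields $\sigma_t^2(\bx)/3\le\asigma_t^2(\bx,\coldict)\le 3\,\sigma_t^2(\bx)$ simultaneously for all $\bx\in\armset$. It therefore suffices to prove that every $\coldict_{\fb{t}}$ is accurate for $\bX_{\fb{t}}$.

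\textbf{Step 2 (each $\coldict_{\fb{t}}$ is accurate, by induction).} The base case $\coldict_0=\{\}$ is immediate since $\asigma_0=\sigma_0$. For the inductive step, consider the resparsification producing $\coldict_{\fb{t+1}}$. Conditioning on the history $\history_{\fb{t}}$ — which fixes $\bX_{\fb{t+1}}$, the previous dictionary $\coldict_{\fb{t}}$, and therefore all probabilities $\wt{p}_{\fb{t+1},s}=\qbar_t\,\asigma_{\fb{t}}^2(\bx_s,\coldict_{\fb{t}})$ — the Bernoulli draws $z_{\fb{t+1},s}$ are fresh and independent, so the \bkb-style matrix-Bernstein argument applies: if every $\bx_s$ is retained with probability at least $\Theta(\log(4t/\delta))$ times its true ridge leverage score $\sigma_{\fb{t+1}}^2(\bx_s)$ with respect to $\bX_{\fb{t+1}}$, then $\coldict_{\fb{t+1}}$ is accurate for $\bX_{\fb{t+1}}$ with high conditional probability. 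The required lower bound on $\wt{p}_{\fb{t+1},s}$ is where the staleness is dissolved: combining the induction hypothesis at $\fb{t}$ (which is uniform over $\armset$, hence applies to each $\bx_s$, including those chosen inside the batch) with monotonicity of the posterior variance under adding data,
\begin{align*}
\wt{p}_{\fb{t+1},s}=\qbar_t\,\asigma_{\fb{t}}^2(\bx_s,\coldict_{\fb{t}})\;\ge\;\tfrac{\qbar_t}{3}\,\sigma_{\fb{t}}^2(\bx_s)\;\ge\;\tfrac{\qbar_t}{3}\,\sigma_{\fb{t+1}}^2(\bx_s),
\end{align*}
where the first step is Step 1 at $\fb{t}$ and the second is $(\bM+\bN+\lambda\bI)^{-1}\preceq(\bM+\lambda\bI)^{-1}$ for $\bN\succeq\bzero$, applied to $\bM=\bC_{\fb{t}}$ and $\bN=\bC_{\fb{t+1}}-\bC_{\fb{t}}\succeq\bzero$. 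Since $\deff(\bX_{\fb{t+1}})\le t$, the threshold only asks for $\Theta(\log(4t/\delta))$ oversampling, which $\qbar_t/3\ge\tfrac{8}{3}\log(4t/\delta)$ meets.

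\textbf{Step 3 (union bound) and the obstacle.} There are at most $B\le T$ resparsifications; choosing the per-resparsification failure probability so that it sums to at most $\delta$ — the $\log(4t/\delta)$ scaling of $\qbar_t$ makes a $\sum_t\delta/(4t^2)$-type bound work — and intersecting with the success of all earlier resparsifications closes the induction, proving the lemma for all $\fb{t}\in[T]$ and all $\bx\in\armset$ with probability $1-\delta$. The main difficulty is not any single estimate but correctly arguing that the stale probabilities $\wt{p}_{\fb{t+1},s}$ — built from a smaller dataset and a frozen dictionary, even for the within-batch points — are still legitimate: the resolution, made explicit above, is that they \emph{over}-estimate the current ridge leverage scores up to the inductive factor $3$, and oversampling only inflates $|\coldict_{\fb{t+1}}|$ (a cost absorbed by the extra $\wt{C}$ in \Cref{thm:bbkb-complexity}) without ever hurting accuracy. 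What remains is bookkeeping: tracking the constants through Steps 1–2 so that the modest oversampling $\qbar_t\ge 8\log(4t/\delta)$ still delivers exactly the factor $3$.
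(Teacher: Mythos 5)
Your proposal follows essentially the same route as the paper's proof: an induction over batch boundaries in which the inductive accuracy guarantee ($\asigma_{\fb{t}}^2 \geq \sigma_{\fb{t}}^2/3$) combined with monotonicity of the posterior variance shows that the stale inclusion probabilities still dominate the current ridge leverage scores, so the \bkb/RLS-sampling concentration result of \citet{calandriello_2019_coltgpucb} applies batch-by-batch. The only divergence is in the constant bookkeeping (the paper states a $24\log(4T/\delta)$ oversampling threshold, which your $\qbar_t/3$ does not literally meet), but the paper itself glosses over this by ``adjusting $\qbar$,'' so this is a shared looseness rather than a gap in your argument.
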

\Cref{lem:bbkb-rls-accuracy} shows that at the beginning
of each batch \bbkb, similarly to \bkb, does not
underestimate uncertainty, \ie unlike existing approximate batched methods it does not suffer from variance starvation
\cite{wang2018batched},
Applying~\Cref{lem:bbkb-rls-accuracy} to \Cref{lem:approx-batch-rls-accuracy}
we show that our batch terminating condition can provide
guarantees on both the approximate ratio $\asigma_{\fb{t}}/\asigma_t \leq \wt{C}$,
as well as the exact posterior ratio $\sigma_{\fb{t}}/\sigma_t \leq 3\wt{C}$
paying only an extra constant approximation factor.
Both of these conditions are necessary to obtain the final regret bound.

\todom{comment related to the discussion with Daniele, about adversary screwing us ----
in our setting, we are not  doing contextual GPs UCB, i.e., it is the learner that is choosing $x$ unlike for PROS'N'KONS,
that means we are not facing a adversary that can screw us ----
yet is seems not ease to guarantee reconstructions \emph{in the middle of the batch}
do you agree with all this?}
\tododout{I agree, but i would not know how to present it}

\section{Extensions}
In this section
we discuss two important extensions of \bbkb: \textbf{1)} how to leverage
initialization to improve experimental parallelism and accuracy, \textbf{2)} how to further trade-off a small amount of extra computation
to improve parallelism.

\textbf{Initialization to guarantee minimum batch size.}
In many cases it is desirable to achieve at least a certain prescribed
level of parallelism $\minbatchsize$, \eg to be able to fully utilize a server farm
with $\minbatchsize$ machines or a lab with $\minbatchsize$ analysis
machines\footnote{For simplicity here we assume that all evaluations require
the same time and that batch sizes are a multiple of $\minbatchsize$.
This can be easily relaxed at the only expense of a more complex notation.}.
However, \bbkb's batch termination rule is designed only to control the ratio
error, and might generate batches smaller than $\minbatchsize$, especially
in the beginning when posterior variances are large and their sum
can quickly reach the threshold $\wt{C}$.
However, it is easy to see that if at step $\fb{t}$ we have
$\max_{\bx \in \armset} \asigma_{\fb{t}}^2(\bx) \leq 1/P$ for all $\bx$,
then the batch will be at least as large as $P$.

The same problem of controlling the maximum posterior variance of a GP
was studied by \citet{desautels_parallelizing_2014}, who showed that
a specific initialization scheme (see \Cref{sec:app-soa} for details) called uncertainty sampling (US)
can guarantee that after $\Tinit$ initialization samples, 
we have that
$\max_{\bx \in \armset} \sigma_{\Tinit}^2(\bx) \leq \gamma_{\Tinit}/\Tinit$.
Since it is known that for many covariances $k$ the maximum information
gain $\gamma_t$ grows sub-linearly in $t$, we have that $\gamma_{\Tinit}/\Tinit$
eventually reaches the desired $1/\minbatchsize$. For example,
for the linear kernel $\Tinit \leq \minbatchsize d\log(\minbatchsize)$ suffices,
and $\Tinit \leq \log(\minbatchsize)^{d+1}$ for the Gaussian kernel.
All of these guarantees can be transferred to our approximate setting thanks to \Cref{lem:bbkb-rls-accuracy} and to the monotonicity of $\sigma_t$.
In particular, after a sufficient number $\Tinit$ of steps of US, and
for any $\fb{t} > \Tinit$ we have that
\begin{align*}
\asigma_{\fb{t}}(\bx, \coldict_{\fb{t}}) \leq 3\sigma_{\fb{t}}(\bx)
\leq 3\sigma_{\Tinit}(\bx) \leq 3/\minbatchsize,
\end{align*}
and US can be used to control \bbkb's batch size as well.

\textbf{Initialization to leverage existing data.}
In many domains GP-optimization is applied to existing problems in hope
to improve performance over an existing decision system (e.g., replace
uniform exploration or A/B testing with a more sophisticated alternative).
In this case, existing historical data can be used to initialize the GP model
and improve regret, as it is essentially ``free'' exploration. However
this still present a computational challenge, since computing the GP
posterior scales with the number of total evaluations, which includes the
initialization. In this aspect, \bbkb can be seamlessly integrated with
initialization using pre-existing data. All that is necessary is to
pre-compute a provably accurate dictionary $\coldict_{\Tinit}$ using any
batch sampling technique that provides guarantees equivalent to those of
\Cref{lem:bbkb-rls-accuracy}, see e.g.,\citet{calandriello_2017_icmlskons,NIPS2018_7810}.
The algorithm then continues as normal from step $\Tinit + 1$ using the embeddings
based on $\coldict_{\Tinit}$, maintaining all computational and regret guarantees.

\textbf{Local control of posterior ratios}
Finally, we want to highlight that the termination rule of \bbkb is just one
of many possible rules to guarantee that the posterior ratio is controlled.
In particular, while \bbkb's rule improves over \bgpucb's, it is still not optimal.
For example, one could imagine
recomputing \emph{all} posterior variances at each step and check that
$\max_{\bx \in \armset} \asigma_{\fb{t}}^2(\bx, \coldict_{\fb{t}})/\asigma_{t}^2(\bx, \coldict_{\fb{t}}) \leq \wt{C}$.

However this kind of local (\ie specific to a $\bx$) test is computationally expensive, as it 
requires a sweep over $\armset$ and at least $\bigotime(|\coldict_{\fb{t}}|^2)$
time to compute each variance, which is why 
\bbkb and \bgpucb's termination rule use only
global information. To try to combine the best of both worlds, we propose a novel efficient local
termination rule.
\begin{lemma}\label{lem:posterior-ratio-local-approx}
For any kernel $k$, dictionary $\coldict$, set of points $\bX_t$, $\bx \in \armset$, and $\fb{t} < t$,
\begin{align*}
\wt{\rho}_{\fb{t},t}(\bx, \coldict)
&\leq 1 + \frac{\sum_{s=\fb{t}}^t\wt{k}_{\fb{t}}^2(\bx, \bx_s, \coldict_{\fb{t}})}{\asigma^2_{\fb{t}}(\bx, \coldict_{\fb{t}})}
\end{align*}
\end{lemma}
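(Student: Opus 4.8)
The plan is to reduce the claim to a one-shot bound on how much the approximate variance at $\bx$ can drop over the current batch, and then to sharpen the naive estimate with a concavity argument. Write $\embvec(\cdot)\triangleq\embvec(\cdot,\coldict_{\fb{t}})$ for the frozen \nystrom embedding and let $\bZ\in\Re^{(t-\fb{t})\times m}$ be the matrix whose rows are $\embvec(\bx_s)^\transp$, $s=\fb{t}+1,\dots,t$, so that $\bV_t=\bV_{\fb{t}}+\bZ^\transp\bZ$. The residual term $\tfrac1\lambda\big(\kerfunc(\bx,\bx)-\normsmall{\embvec(\bx)}^2\big)$ in~\eqref{eq:nyst-post-gp-cov} does not depend on $t$, so within the batch the only part of $\asigma^2_{\bullet}(\bx,\coldict_{\fb{t}})$ that changes is $\embvec(\bx)^\transp\bV_{\bullet}^{-1}\embvec(\bx)$. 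Since $\wt{\rho}_{\fb{t},t}(\bx,\coldict_{\fb{t}})$ is the square root of $\asigma^2_{\fb{t}}(\bx,\coldict_{\fb{t}})/\asigma^2_t(\bx,\coldict_{\fb{t}})$ and $\sqrt{1+x}\le 1+x$ for $x\ge 0$, it suffices to prove the variance-ratio inequality $\asigma^2_{\fb{t}}(\bx,\coldict_{\fb{t}})/\asigma^2_t(\bx,\coldict_{\fb{t}})\le 1+\big(\sum_s\wt{k}^2_{\fb{t}}(\bx,\bx_s,\coldict_{\fb{t}})\big)/\asigma^2_{\fb{t}}(\bx,\coldict_{\fb{t}})$.

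For this, diagonalise $\bR\triangleq\bV_{\fb{t}}^{-1/2}\bZ^\transp\bZ\,\bV_{\fb{t}}^{-1/2}=\sum_i r_i\bq_i\bq_i^\transp$ with $r_i\ge 0$, and expand $\bv\triangleq\bV_{\fb{t}}^{-1/2}\embvec(\bx)=\sum_i c_i\bq_i$. Using $(\bV_{\fb{t}}+\bZ^\transp\bZ)^{-1}=\bV_{\fb{t}}^{-1/2}(\bI+\bR)^{-1}\bV_{\fb{t}}^{-1/2}$ one gets $\embvec(\bx)^\transp\bV_t^{-1}\embvec(\bx)=\bv^\transp(\bI+\bR)^{-1}\bv$, hence
\begin{align*}
\asigma^2_{\fb{t}}(\bx,\coldict_{\fb{t}})-\asigma^2_t(\bx,\coldict_{\fb{t}})=\bv^\transp\big(\bI-(\bI+\bR)^{-1}\big)\bv=\sum_i c_i^2\,\tfrac{r_i}{1+r_i}.
\end{align*}
Set $a_0\triangleq\sum_i c_i^2=\embvec(\bx)^\transp\bV_{\fb{t}}^{-1}\embvec(\bx)\le\asigma^2_{\fb{t}}(\bx,\coldict_{\fb{t}})$ (the dropped residual is non-negative) and $S_0\triangleq\sum_i c_i^2 r_i=\normsmall{\bZ\bV_{\fb{t}}^{-1}\embvec(\bx)}^2=\sum_s\big(\embvec(\bx_s)^\transp\bV_{\fb{t}}^{-1}\embvec(\bx)\big)^2=\sum_s\wt{k}^2_{\fb{t}}(\bx,\bx_s,\coldict_{\fb{t}})$, the last identity once the frozen \nystrom residual correction in $\wt{k}_{\fb{t}}$ is accounted for. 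The key step is that $r\mapsto r/(1+r)$ is concave on $[0,\infty)$, so Jensen's inequality with weights $c_i^2/a_0$ gives $\sum_i c_i^2\tfrac{r_i}{1+r_i}\le a_0\cdot\tfrac{S_0/a_0}{1+S_0/a_0}=\tfrac{a_0 S_0}{a_0+S_0}\le\tfrac{\asigma^2_{\fb{t}}(\bx,\coldict_{\fb{t}})\,S_0}{\asigma^2_{\fb{t}}(\bx,\coldict_{\fb{t}})+S_0}$, the last step since $x\mapsto xS_0/(x+S_0)$ is increasing and $a_0\le\asigma^2_{\fb{t}}(\bx,\coldict_{\fb{t}})$. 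Rearranging gives $\asigma^2_t(\bx,\coldict_{\fb{t}})\ge\asigma^2_{\fb{t}}(\bx,\coldict_{\fb{t}})/\big(1+S_0/\asigma^2_{\fb{t}}(\bx,\coldict_{\fb{t}})\big)$, \ie the variance-ratio inequality, which proves the lemma.

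I expect two points to need the most care. First, the bookkeeping of the DTC residual $\tfrac1\lambda(\kerfunc-\embvec^\transp\embvec)$: it cancels in the variance difference because it is frozen over the batch, but identifying $\embvec(\bx_s)^\transp\bV_{\fb{t}}^{-1}\embvec(\bx)$ with $\wt{k}_{\fb{t}}(\bx,\bx_s,\coldict_{\fb{t}})$ must be done carefully (the two agree when $\coldict_{\fb{t}}$ represents $\bx$ and the $\bx_s$ exactly, and the correction only loosens the denominator since it is non-negative). Second, and more conceptual, the crude estimate $r/(1+r)\le r$, \ie $(\bI+\bR)^{-1}\preceq\bI$, is \emph{not} enough: it only yields $\asigma^2_{\fb{t}}-\asigma^2_t\le S_0$, which does not close the ratio bound once the batch is long, whereas the ``saturating'' factor $S_0/(a_0+S_0)$ produced by concavity is exactly what is needed --- the same mechanism by which \Cref{lem:approx-batch-rls-accuracy} replaces the Weierstrass product of \Cref{p:std.dev.ratio} with a sum. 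Finally, bounding each summand by the \nystrom Cauchy--Schwarz inequality $\wt{k}^2_{\fb{t}}(\bx,\bx_s,\coldict_{\fb{t}})\le\asigma^2_{\fb{t}}(\bx,\coldict_{\fb{t}})\,\asigma^2_{\fb{t}}(\bx_s,\coldict_{\fb{t}})$ recovers an $\bx$-uniform bound in the spirit of \Cref{lem:approx-batch-rls-accuracy}, so the local rule is at least as tight.
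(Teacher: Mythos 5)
Your proof is correct in its core and, on one point, is actually sharper than the paper's own derivation. The paper expands $\asigma^2_t$ via the Woodbury identity and applies the crude bound $(\bI+\wt{\bB}\abA^{-1}\wt{\bB}^\transp)^{-1}\preceq\bI$ to obtain $\asigma^2_{\fb{t}}-\asigma^2_t\le S_0$, and then ``inverts''. As you correctly observe, that difference bound alone only yields $\wt{\rho}^2_{\fb{t},t}\le 1+S_0/\asigma^2_t$ (equivalently $(1-S_0/\asigma^2_{\fb{t}})^{-1}$), which is weaker than the stated $1+S_0/\asigma^2_{\fb{t}}$. Your Jensen step on the concave map $r\mapsto r/(1+r)$ in the eigenbasis of $\bV_{\fb{t}}^{-1/2}\bZ^\transp\bZ\bV_{\fb{t}}^{-1/2}$ supplies exactly the missing Cauchy--Schwarz refinement $(\bv^\transp\bv)^2\le(\bv^\transp(\bI+\bR)^{-1}\bv)(\bv^\transp(\bI+\bR)\bv)$ that legitimizes the denominator $\asigma^2_{\fb{t}}$, and your bookkeeping of the frozen DTC residual ($a_0\le\asigma^2_{\fb{t}}$ together with the monotonicity of $x\mapsto xS_0/(x+S_0)$, then the rearrangement) is sound. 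This is a genuinely different route from the appendix, and a more self-contained justification of the final inequality.

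The one place where your argument is not airtight is the identification $\sum_s\bigl(\embvec(\bx_s)^\transp\bV_{\fb{t}}^{-1}\embvec(\bx)\bigr)^2=\sum_s\wt{k}^2_{\fb{t}}(\bx,\bx_s,\coldict_{\fb{t}})$. By the definition in \Cref{eq:nyst-post-gp-cov}, the two summands differ by the cross-residual $\lambda^{-1}\bigl(\kerfunc(\bx,\bx_s)-\embvec(\bx)^\transp\embvec(\bx_s)\bigr)=\lambda^{-1}\featmap(\bx)^\transp\bP^{\bot}\featmap(\bx_s)$, which sits in the \emph{numerator} of the bound (not the denominator) and is \emph{not} sign-definite, so your parenthetical justification is wrong on both counts; squaring does not preserve the comparison when the correction can flip the sign of the sum. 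What you have actually proved is the lemma with $\featmap(\bx)^\transp\abA^{-1}\aphivec{s}$ in place of $\wt{k}_{\fb{t}}(\bx,\bx_s,\coldict_{\fb{t}})=\featmap(\bx)^\transp\abA^{-1}\featmap(\bx_s)$. Passing from the former to the latter is precisely what the second half of the paper's proof is devoted to (where it introduces the conservative $\min$/$\max$ termination criterion and argues the domination of one quantity by the other). You should either state the result in terms of the embedded-space quantity or import that additional comparison explicitly, rather than asserting it as an identity.
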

Note that $\wt{k}_{\fb{t}}^2(\bx, \bx_s) \leq \asigma_{\fb{t}}^2(\bx)\asigma_{\fb{t}}^2(\bx_s)$,
due to Cauchy-Bunyakovsky-Schwarz,
and therefore this termination rule is tighter than the one in \Cref{lem:approx-batch-rls-accuracy}.
Moreover, with an argument similar to the one in \Cref{lem:approx-batch-rls-accuracy}
we can again show that the termination provably controls both the ratio of exact
and approximate posteriors.

Computationally, after a $\abigotime(\deff^2)$ cost to update $\bV_t^{-1}$,
computing multiple $\wt{k}_{\fb{t}}(\bx, \bx_s, \coldict_{\fb{t}})$ for a fixed
$\bx_s$ requires only $\abigotime(\deff)$ time,
\ie it requires only a vector-vector multiplication in the embedded space.
Therefore, the total cost of updating the posterior ratio estimates
using \Cref{lem:posterior-ratio-local-approx} is $\abigotime(\Narm \deff + \deff^2)$,
while recomputing all variances requires $\bigotime(\Narm \deff^2)$. However,
it still requires a full sweep over all candidates introducing
a dependency on $\Narm$. As commented in the case of posterior
maximization, lazy updates can be used to empirically alleviate this dependency.
Finally,
it is possible to combine both bounds: at first use the global bound from \Cref{lem:approx-batch-rls-accuracy}, and
then switch to the more computationally expensive local bound of \Cref{lem:posterior-ratio-local-approx} only if the constructed batch is not ``large enough''.
\section{Experiments}\label{sec:exp}

\begin{figure}[]
\includegraphics[width=\linewidth]{./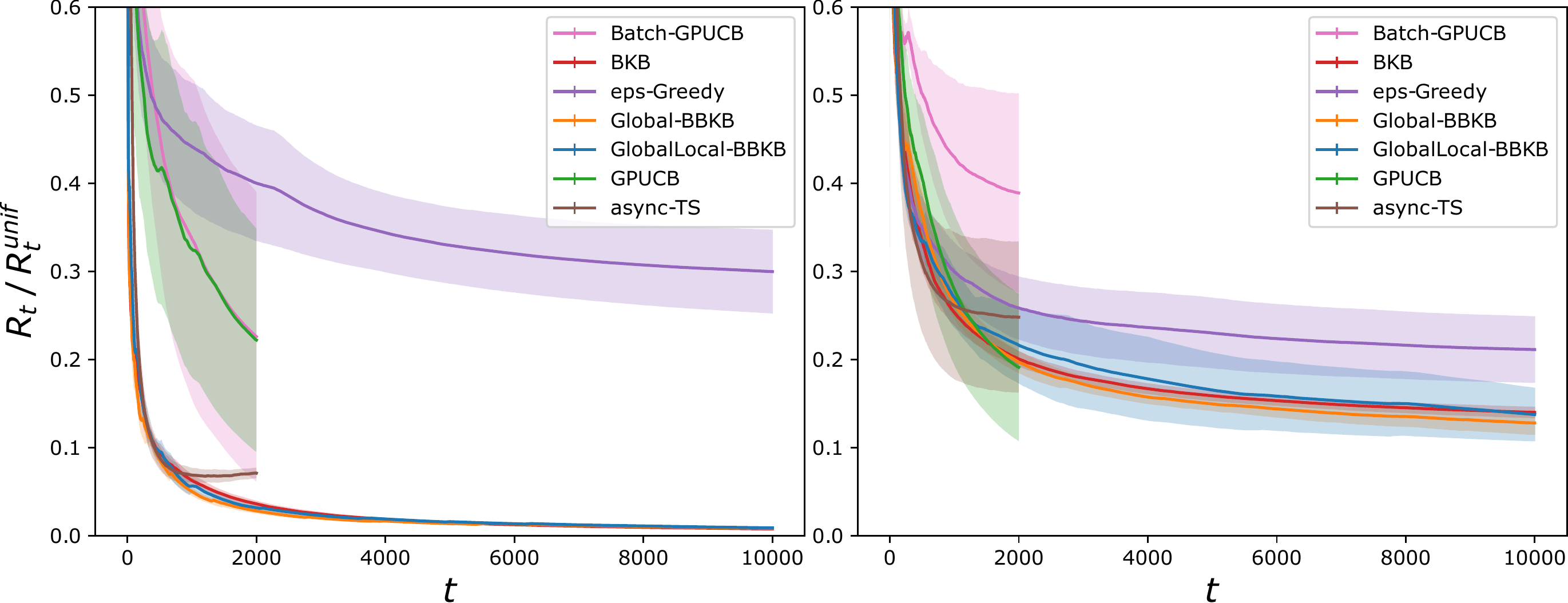}
\caption{\small Regret-ratio on Abalone (left) and Cadata (right)}\label{fig:rr_ac}
\end{figure}
\begin{figure}
\includegraphics[width=\linewidth]{./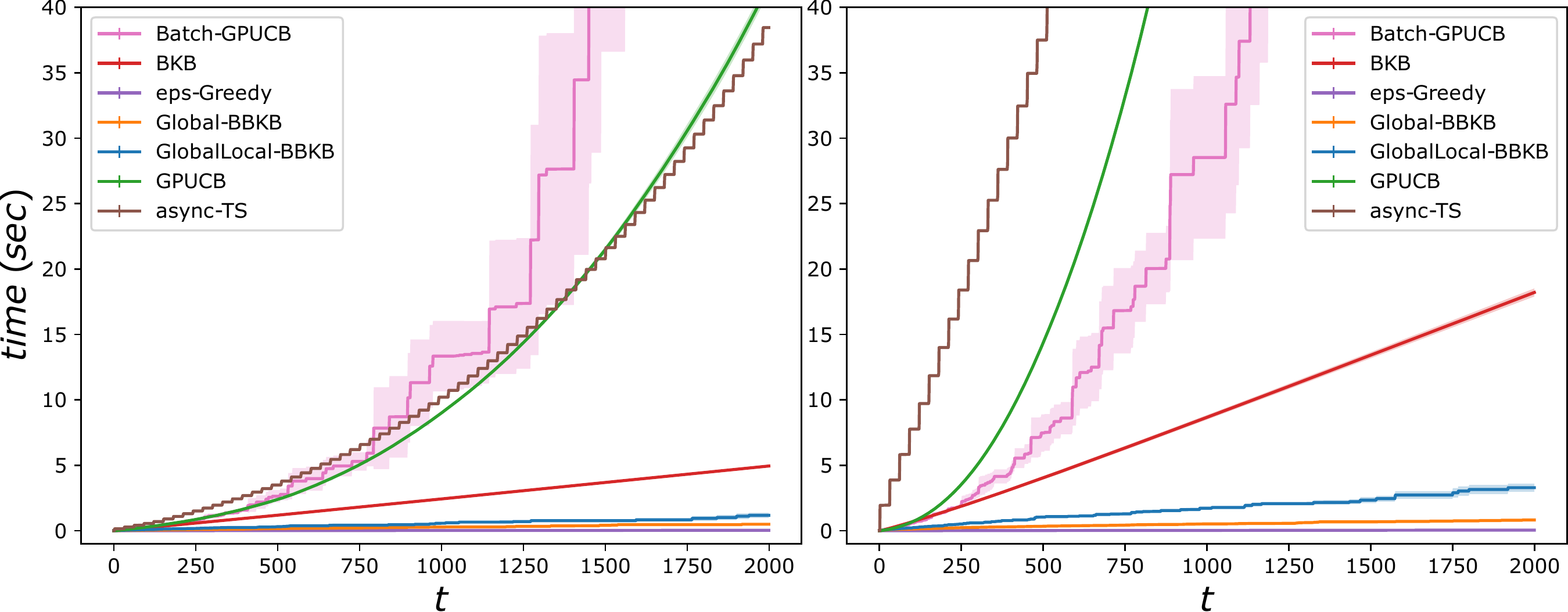}
\caption{\small Time on Abalone (left) and Cadata (right)}\label{fig:t_ac}
\end{figure}

\begin{figure*}[!tb]
\minipage{0.33\textwidth}
\includegraphics[width=\linewidth]{./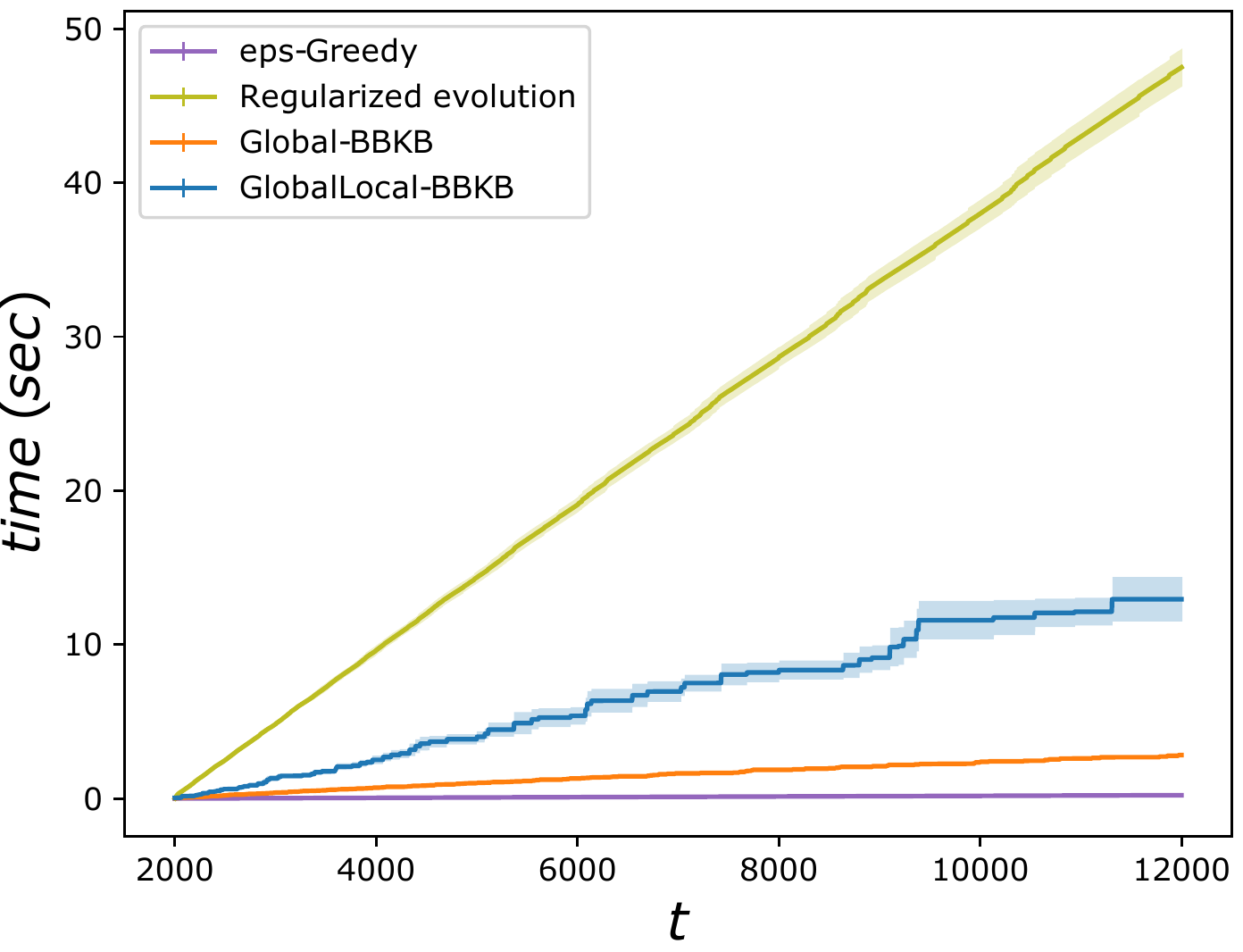}
\endminipage\hfill
\minipage{0.33\textwidth}
\includegraphics[width=\linewidth]{./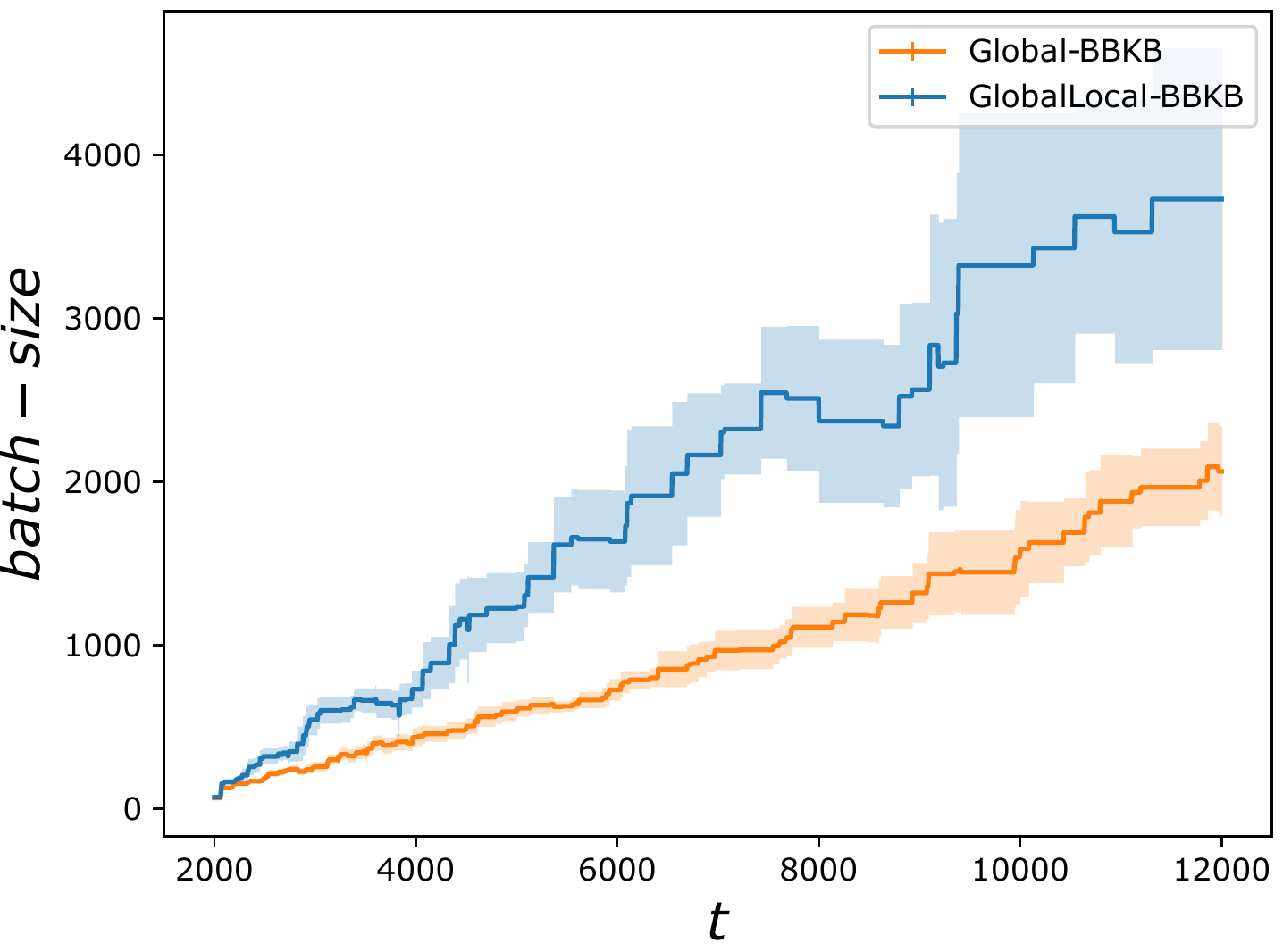}
\endminipage\hfill
\minipage{0.33\textwidth}
\includegraphics[width=\linewidth]{./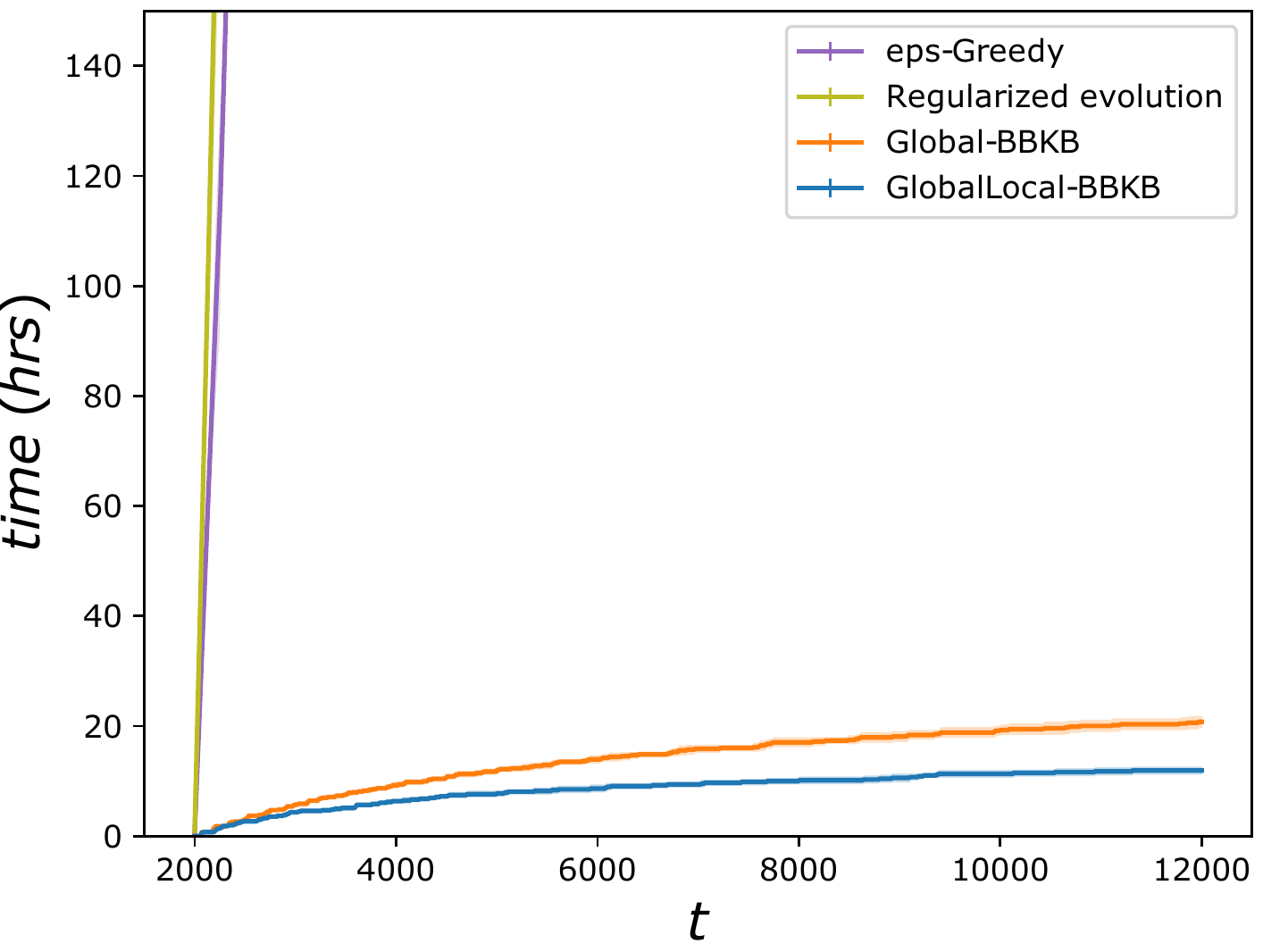}
\endminipage\hfill

\caption{\small From left to right time without experimental costs, batch-size and total runtime on NAS-bench-101}\label{fig:nas-time}
\end{figure*}

\begin{figure*}[!tb]
\minipage{0.33\textwidth}
\includegraphics[width=\linewidth]{./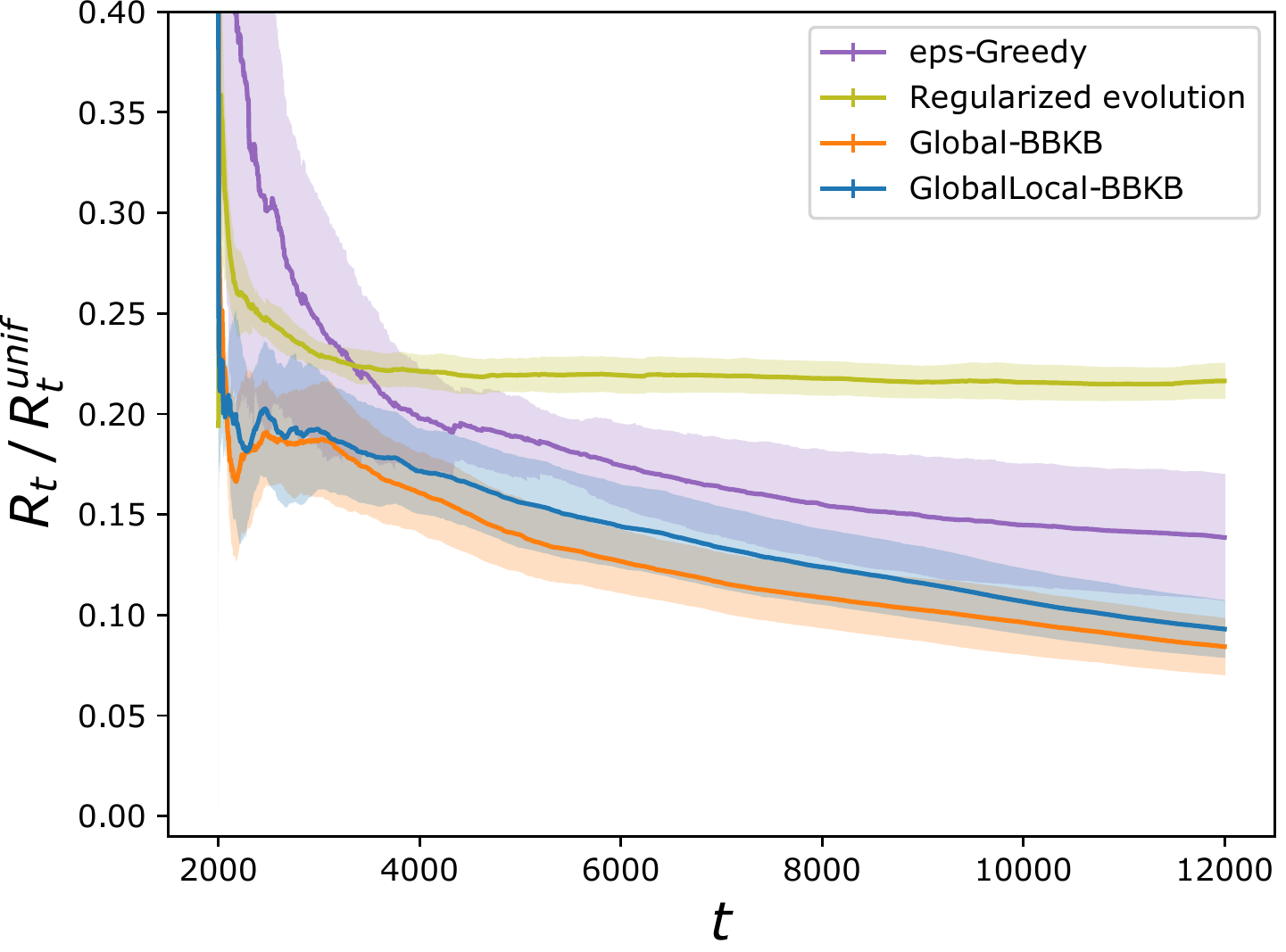}
\endminipage\hfill
\minipage{0.33\textwidth}
\includegraphics[width=\linewidth]{./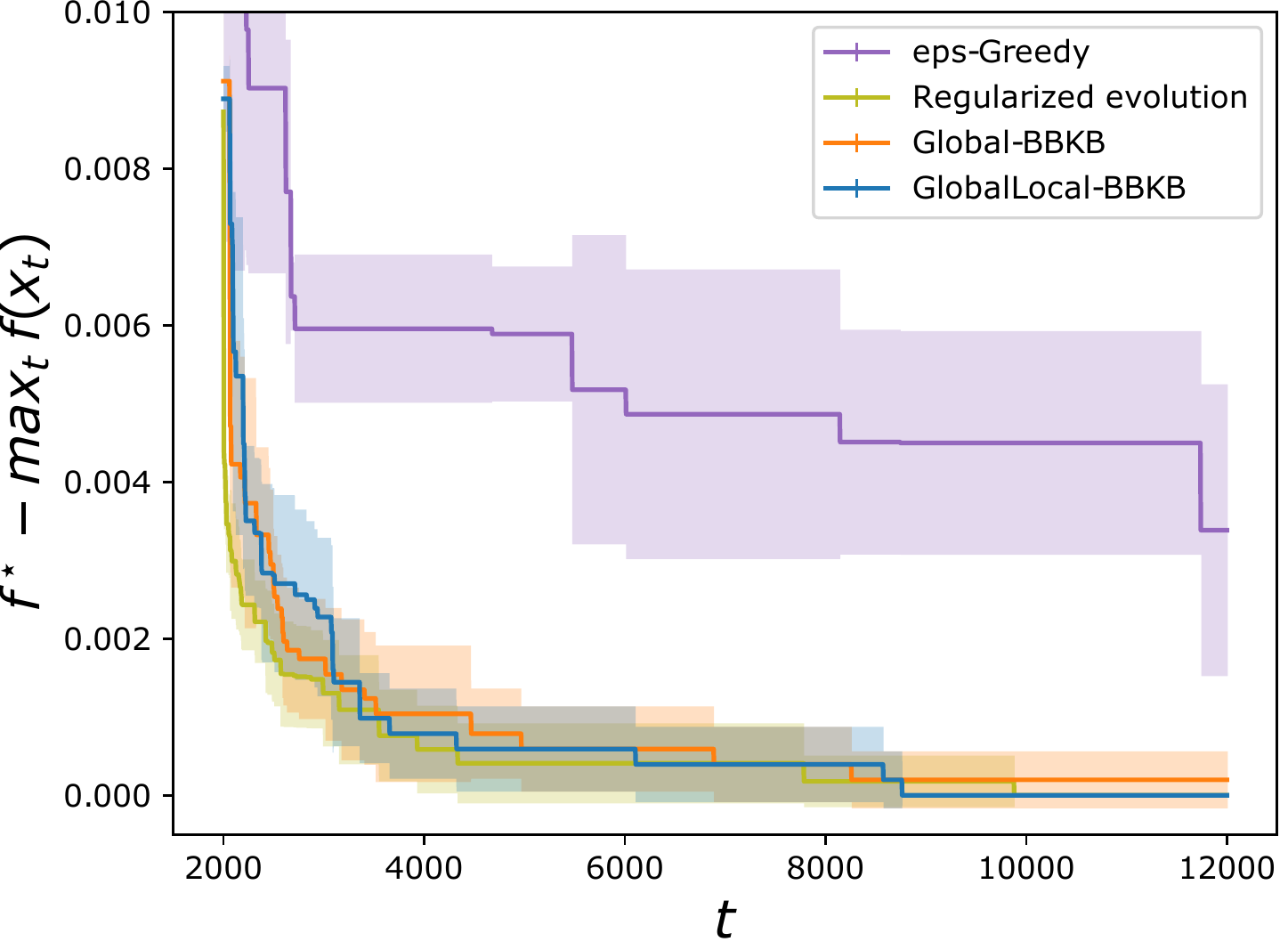}
\endminipage\hfill
\minipage{0.33\textwidth}
\includegraphics[width=\linewidth]{./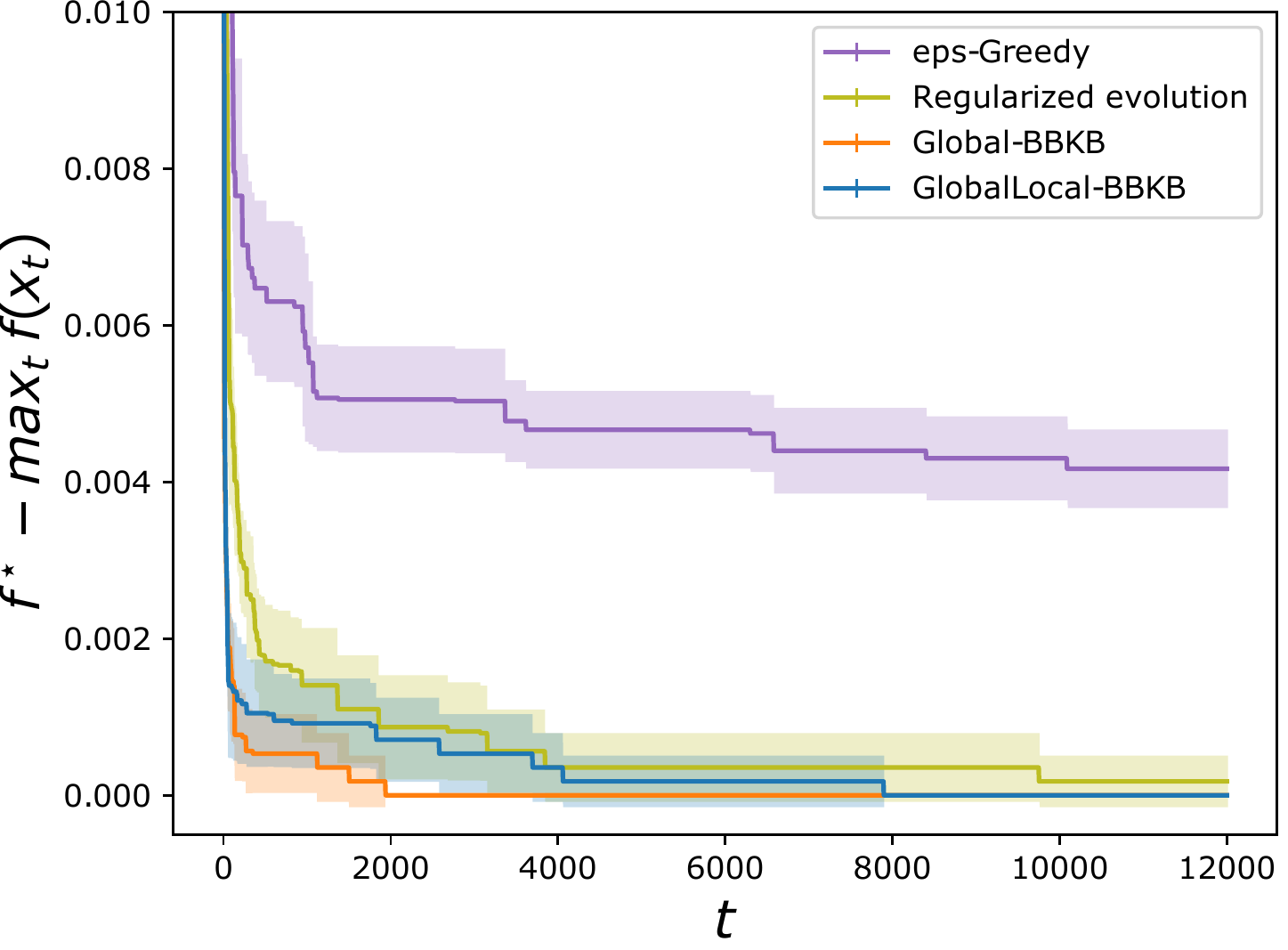}
\endminipage\hfill

\caption{\small From left to right regret-ratio, simple regret and simple regret without initialization on NAS-bench-101}\label{fig:nas-regret}
\end{figure*}

In this section we empirically study the performance in regret and computational costs of \bbkb compared to EpsGreedy, \gpucb, \bgpucb, \bkb, batch Thompson sampling (\bts) \cite{kandasamy2018parallelised} and genetic algorithms (\regev) \cite{real2019regularized}.
For \bbkb we use both the batch stopping rules presented in \Cref{lem:approx-batch-rls-accuracy} and \Cref{lem:posterior-ratio-local-approx} calling the two versions Global-\bbkb and GlobalLocal-\bbkb respectively.
For each experimental result we report mean and 95\% confidence interval using 10 repetitions.
The experiments are implemented in \path{python} using the \path{numpy}, \path{scikit-learn} and \path{botorch} library, and run on a 16 core dual-CPU server using parallelism when allowed by the libraries.
All algorithm use the hyper-parameters suggested by theory. When not applicable, cross validated parameters that perform the best for each individual algorithm are used (e.g. the kernel bandwidth). All the detailed choices and further experiments are reported in the \Cref{sec:app_exp}.

We first perform experiments on two regression datasets Abalone ($A = 4177$, $d = 8$) and Cadata ($A = 20640$, $d = 8$) datasets.
We first rescale the regression target $y$ to $[0,1]$, and construct a noisy function to optimize by artificially adding a gaussian noise with zero mean and standard deviation $\xi = 0.01$.
For a horizon of $T = 10^4$ iterations, we show in
\Cref{fig:rr_ac} 
the ratio between the cumulative {regret} $R_t$ of the desired algorithm and the cumulative regret $R_t^\text{unif}$ achieved by a baseline policy that selects candidates uniformly at random. We use this metric because it is invariant to the scale of the feedback.
We also report in \Cref{fig:t_ac} the {runtime} of the first $2\times 10^3$ iterations.
For both datasets, \bbkb achieves the smallest regret, using only a fraction of the time of the baselines. 
Moreover, note that the time reported do not take into account experimentation costs, as the function is evaluated instantly.

To test how much batching can improve experimental runtime, we then perform experiments on the NAS-bench-101 dataset \cite{ying2019bench}, a neural network architecture search (NAS) dataset. After preprocessing we are left with $A=12416$ candidates in $d=19$ dimensions (details in \Cref{sec:app_exp}).
For each candidate, the dataset contains 3 evaluations of the trained network, which we transform in a noisy function by returning one uniformly at random,
and the time required to train the network.
To simulate a realistic NAS scenario, we assume to start with already $\Tinit = 2000$ evaluated network architectures, selected uniformly at random.
Initializing \bbkb using this data is straightforward. To generate an initial dictionary we use the BLESS algorithm \cite{NIPS2018_7810}, with a time cost of $2.5s$.
\\
In \Cref{fig:nas-time} we first report on the left the runtime of each algorithm without considering experimental costs. While both \bbkb variants outperform baselines,
due to the more expensive ratio estimator GlobalLocal-\bbkb is slower than Global-\bbkb. However, while both termination rules guarantee linearly increasing batch-sizes, we can see that the local rule outperforms the global rule. When taking into account training time, this not only shows that the batched algorithm are faster than sequential \regev, but also that GlobalLocal-\bbkb with its larger batches becomes faster than Global-\bbkb.

In \Cref{fig:nas-regret} we report cumulative and simple regret of \bbkb against \regev, the best algorithm from \cite{ying2019bench}.
To measure the regret, we plot the regret ratio $R_t / R_t^\text{unif}$ and the simple regret (the gap between the best candidate and the best candidate found by the algorithm up to time $t$). We consider the simple regret metric because it is used in the NAS-bench-101 paper to evaluate \regev.
From the plot of the regret ratio $R_t / R_t^\text{unif}$, we can observe how \bbkb is significantly better than \regev as this latter algorithm has not been designed to minimize the cumulative regret. Further, \bbkb is able to match \regev's simple regret (the main target for this latter algorithm).

Finally, in the rightmost plot of
\Cref{fig:nas-regret} we report simple regret when $\Tinit = 0$ (\ie without using initialization). Surprisingly, while the performance of \regev decreases the performance of \bbkb actually increases, outperforming \regev's. This might hint that initialization is not always beneficial in Bayesian optimization. It remains an open question to verify whether this is because the uniformly sampled initialization data makes the GP harder to approximate, or because it promotes an excessive level of exploration by increasing $\beta_t$ but reducing variance only in suboptimal parts of $\armset$.
\\
\\
\noindent {\small {\bf Acknowledgement}\\ \noindent This material is based upon work supported by the Center for Brains, Minds and Machines (CBMM), funded by NSF STC award CCF-1231216, and the Italian Institute of Technology. We gratefully acknowledge the support of NVIDIA Corporation for the donation of the Titan Xp GPUs and the Tesla K40 GPU used for this research. L. R. acknowledges the financial support of the European Research Council (grant SLING 819789), the AFOSR projects FA9550-17-1-0390  and BAA-AFRL-AFOSR-2016-0007 (European Office of Aerospace Research and Development), and the EU H2020-MSCA-RISE project NoMADS - DLV-777826.}

\bibliographystyle{icml2020}

\newpage
\appendix
\onecolumn

\section{Expanded discussion}\label{sec:app-soa}
\subsection{Relationship of $\amu_t$ and $\asigma_t$ with Bayesian GP posteriors.}
To clarify the comparison between \bbkb and existing GP optimization methods,
it is important to clarify the relationship between \bbkb's approximate posterior
(\ie \Cref{eq:nyst-post-gp-mean,eq:nyst-post-gp-cov,eq:nyst-post-gp} introduced in \cite{calandriello_2019_coltgpucb}, which we will call the \bkb approximation)
and the traditional definition of GP posterior
commonly found in the literature (\eg the one found in \cite{rasmussen_gaussian_2006}).

To begin, let us first consider the case
of a perfect dictionary $\coldict_{\text{exact}}$ (\eg $\coldict_{\text{exact}} = \armset$ or
$\coldict_{\text{exact}} = \bX_t$). Then \Cref{eq:nyst-post-gp-mean,eq:nyst-post-gp-cov,eq:nyst-post-gp} can be simplified to
\begin{align}
&\mu_t\left(\bx \condbar \bX_t, \by_t\right) \!=\! \bk_{t}(\bx)^\transp(\bK_{t} + \lambda\bI)^{-1}\by_{t}\label{eq:gpucb-posterior-mean},\\
&k_{t}\left(\bx, \bx' \condbar \bX_t\right) \!=\! \tfrac{1}{\lambda}\left(\kerfunc(\bx, \bx') - \bk_{t}(\bx)^\transp(\bK_{t} + \lambda\bI)^{-1}\bk_{t}(\bx')\right)\label{eq:gpucb-posterior-variance},\\
&\sigma_{t}^2\left(\bx \condbar \bX_t\right) \!=\! k_{t}\left(\bx, \bx \condbar \bX_t\right),
\end{align}
where $\bK_t \in \Real^{t \times t}$ is the kernel matrix with $[\bK_t]_{i,j} = \kerfunc(\bx_i, \bx_j)$ for $\bx_i,\bx_j$ in $\bX_t$, and $\bk_t(\bx) = [\kerfunc(\bx_1, \bx), \dots, \kerfunc(\bx_t, \bx)]^\transp$.
Comparing this with e.g.,~Eq.~2.23 and 2.24 from \citet{rasmussen_gaussian_2006},
which we will call $\mu_t^{\text{bay}}$ and $\sigma_t^{\text{bay}}$,
we see that when $\lambda = \xi^2$ the definition of the posterior mean $\mu_t$ is identical to $\mu_t^{\text{bay}}$
while the posterior variance $\sigma_t^{\text{bay}} = \lambda\sigma_t = \xi^2\sigma_t$
is rescaled by a $\lambda$ factor. This rescaling is not justified
in a Bayesian prior/posterior sense, and therefore $\sigma_t$
is not a posterior in the Bayesian sense.

However note that in the context of GP optimization with a variant of \gpucb
this distinction becomes less relevant. In particular, we are mostly interested
in comparing $\beta_t\sigma_t$ against $\beta_t^{\text{bay}}\sigma_t^{\text{bay}}$
rather than simply $\sigma_t$ to $\sigma_t^{\text{bay}}$. In this case,
looking at \citet{srinivas2010gaussian} or \citet{chowdhury2017kernelized}
shows that when we choose $\lambda = \xi^2$, then $\beta_t^{\text{bay}} = \beta_t/\lambda$,
and therefore $\beta_t\sigma_t = \beta_t^{\text{bay}}\sigma_t^{\text{bay}}$.
As a consequence, when $\lambda = \xi^2$ we can modify \citet{calandriello_2019_coltgpucb}'s notation
to match the standard Bayesian notation, and the difference becomes only
a matter of simplifications. However, frequentist analysis of Kernelized-UCB
algorithms show that sometimes $\lambda \neq \xi^2$ is the optimal choice
\cite{valko2013finite}, and in this case the two views are not so easily reconcilable.
In this paper we chose to err on the side of generality, maintaining $\lambda$
separate from $\xi^2$, but also on the side of familiarity and continue to
refer to $\sigma_t$ as a posterior, with a slight abuse of terminology.

A similar argument can be made for $\amu_t$ and $\asigma_t$ and their
Bayesian counterparts $\amu_t^{\text{bay}}$ and $\asigma_t^{\text{bay}}$,
known as the deterministic training conditional (DTC) \cite{quinonero-candela_approximation_2007}
or projected process \cite{seeger2003fast}.
In particular, we have once again that the approximate posterior means $\amu_t = \amu_t^{\text{bay}}$ coincide,
while the approximate posterior variance $\asigma_t = \lambda\asigma_t^{\text{bay}}$
differ by a $\lambda$ factor. Note however that although \citet{quinonero-candela_approximation_2007}
call the DTC approximation an approximate posterior, they also remark that it
does not correspond to a GP posterior because it is not consistent.
Therefore, regardless of the rescaling $\lambda$, it is improper to
refer to the DTC or the \bkb approximation as posteriors.

We choose to maintain \citet{calandriello_2019_coltgpucb}'s notation because,
as we will see in the rest of the appendix, it makes $\sigma_t$ coincide
with the confidence intervals induced by OFUL \cite{abbasi2011improved}
and with a quantity known in randomized linear algebra as ridge leverage score
\cite{alaoui2014fast}. Since both of these tools are crucial in our derivation,
using a notation based on $\sigma_t^{\text{bay}}$ would require frequent
rescalings by a factor $\lambda$, which although trivial might become tedious
and make the exposition heavier.

\subsection{Historical overview of the \gpucb family}

There are several ways to leverage a GP posterior to
choose useful candidates to evaluate. Here we review those based on the \gpucb algorithm \cite{srinivas2010gaussian}.
All \gpucb-based algorithms rely on the construction of an \textit{acquisition function} $u_t(\cdot): \armset \to \Real$ that acts as an upper confidence bound (UCB)
for the unknown function $f$.
Whenever $u_t(\bx)$ is a valid UCB (\ie $f(\bx) \leq u_t(\bx)$) and it converges to $f(\bx)$
sufficiently fast, then selecting candidates that are optimal \wrt to $u_t$ leads to low regret,
\ie the value of the candidate $\bx_{t+1} = \argmax_{\bx \in \armset}u_t(\bx)$ tends to $\max_{\bx \in \armset}f(\bx)$ as $t$ increases.

\textbf{\gpucb.}  The original \gpucb formulation defines the UCB 
as $u_t(\bx) = \mu_t(\bx) + \beta_t\sigma_t(\bx)$.
An important property of this estimator is that the posterior variance
is strictly non-increasing as more data is collected, \ie $\sigma_{t+1}(\bx) \leq \sigma_{t}(\bx)$,
and therefore $u_t(\bx)$ naturally converges to $\mu_t(\bx)$, which in turn tends to $f(\bx)$.
\citet{srinivas2010gaussian} found an appropriate schedule for $\beta_t$
that guarantees that this happens \whp, and that therefore $u_t$ is a valid UCB at all steps.
However \gpucb is computationally and experimentally slow, as evaluating $u_t(\bx)$
requires $\bigotime(t^2)$ per-step and no parallel experiments are possible.

\textbf{\bkb.} A common approach to improve computational scalability in GPs
is to replace the exact posterior with an approximate sparse GP posterior.
The main advantage of this approximation is that if we use a dictionary $\coldict$
with size $m = |\coldict|$, then we can embed the GP in $\Real^m$ using
\Cref{eq:nyst-post-gp-mean,eq:nyst-post-gp}, and keep updating the posterior
in $\bigotime(m^2)$ time rather than $\bigotime(t^2)$.
However, this can lead to sub-optimal choices and large regret if
the dictionary $\coldict$ is not sufficiently accurate.
This brings about a trade-off between larger and more accurate dictionaries,
or smaller and more efficient ones. Moreover, we are only interested in
approximating the part of the space that we transverse in our optimization process.
Therefore, a dictionary $\coldict_t$ should naturally change over time
to reflect which part of the space $\armset$ is being tested.
\citet{calandriello_2019_coltgpucb} proposed to solve these problems in the \bkb
algorithm by replacing $u_t$ with an approximate version $u_t^{\bkb}(\bx) = \amu_t(\bx,\coldict_t) + \wt{\beta}_t\asigma_t(\bx,\coldict_t)$,
and using a procedure called posterior variance sampling (see \Cref{alg:bbkb}) and \cite{calandriello_2019_coltgpucb} for more details)
to update the dictionary $\coldict_t$ at each step.
They prove that combining these approaches guarantees that ${u}_t^{\bkb}$ is a UCB, and
that \bkb achieves low regret.
However, posterior sampling requires $\bigotime(t)$ per step to update the
dictionary $\coldict_t$ at each iteration, reducing \bkb's computational scalability,
and the algorithm still has poor experimental scalability since
the candidates are selected sequentially.

        \begin{figure}[t]
		\begin{algorithm}[H]
				\begin{algorithmic}
					\REQUIRE{Set of candidates $\armset$, UCB parameters $\{\alpha_t\}_{t=1}^T$}, threshold $C$
					\STATE Initialize $\fb{0} = 0$
					\FOR{$t=\{0, \dots, T-1\}$}
					\STATE Compute $\bu_{t}^{\bgpucb}(\bx) \gets \mu_{\fb{t}}(\bx) + \alpha_t\sigma_{t}(\bx)$
					\STATE Select $\bx_{t+1} \gets \argmax_{\bx \in \armset}\bu_t^{\bgpucb}(\bx)$\;
                    \IF{$\prod_{s = \fb{t}+1}^{t+1}\left(1 + \sigma_{s-1}^2(\bx_s)\right) \leq {C}$}
                    \STATE $\fb{t+1} = \fb{t}$
                    \STATE Update $\wt{\bu}_{t+1}$ with the new $\asigma_{t+1}$
                    \ELSE
                    \STATE $\fb{t+1} = t+1$
                        \STATE Get feedback $\{y_s\}_{s=\fb{t}+1}^{\fb{t+1}}$
                        \STATE Update ${\bu}_{t+1}^{\bgpucb}$ with the new $\mu_{\fb{t+1}}$ and ${\sigma}_{\fb{t+1}}$
                    \ENDIF
					\ENDFOR
				\end{algorithmic}
			\caption{
				\bgpucb\label{alg:bgpucb}}
		\end{algorithm}
        \end{figure}

\textbf{Batch \gpucb.} 
Finally, \bgpucb \cite{desautels_parallelizing_2014} tries to increase \gpucb's
experimental scalability by selecting a batch\footnote{
Since we present only one of the batched \gpucb variants from \citet{desautels_parallelizing_2014}, we refer for simplicity to
it as \bgpucb. Note that the particular variant
with adaptive batching we compare with is called \textsc{GP-AUCB} in~\cite{desautels_parallelizing_2014},
as an adaptive variant of what they refer to as \bgpucb.
} of candidates that
are all evaluated in parallel.
The complete structure of \bgpucb~\citep{desautels_parallelizing_2014} is illustrated in~\Cref{alg:bgpucb}.
\bgpucb exploits the fact that $\sigma_t$ does not depend on the feedback and within a batch it defines the UCB ${u}^{\bgpucb}_t = \mu_{\fb{t}}(\bx) + \alpha_t\sigma_t(\bx)$, where the mean $\mu_{\fb{t}}$ uses only the feedback up the end of the last batch, while $\sigma_t(\bx)$ is updated within each batch depending on the candidates until $t$. The batch is constructed by selecting candidates as $\bx_{t+1} = \arg\max_{\bx \in \armset} {u}^{\bgpucb}_t (\bx)$, then they all are evaluated in parallel, their feedback is received, and $\mu_t$ is updated. Notice that at the beginning of batch the UCB coincides with the one computed by \gpucb, \ie ${u}^{\bgpucb}_{\fb{t}} = u_{\fb{t}}$.
For all steps \textit{within} a batch \bgpucb can be seen as \emph{fantasizing} or \emph{hallucinating} a constant feedback $\mu_{\fb{t}}(\bx_t)$ so that the mean does not change,
while the variances keep \emph{shrinking}, thus promoting diversity in the batch.
However, incorporating fantasized feedback instead of actual feedback causes
\bgpucb's ${u}^{\bgpucb}_t$ criteria to drift away from $u_t$, which
might not make it a valid UCB anymore. \citet{desautels_parallelizing_2014} show that this issue can be managed simply by adjusting \gpucb's parameter $\beta_t$. In fact, it is possible to take the valid h.p.\,\gpucb confidence bound at the beginning of the batch, and \emph{correct} it to hold for each hallucinated step as
\begin{align}\label{eq:corrected.ucb}
f(\bx)
\leq  \mu_{\fb{t}}(\bx) + \beta_{\fb{t}}\sigma_{\fb{t}}(\bx)
\leq  \mu_{\fb{t}}(\bx) + \rho_{\fb{t},t}(\bx)\beta_{\fb{t}}\sigma_t(\bx),
\end{align}
where $\rho_{\fb{t},t}(\bx) \triangleq \frac{\sigma_{\fb{t}}(\bx)}{\sigma_t(\bx)}$ is the posterior variance ratio. By using any $\alpha_t \geq \rho_{\fb{t},t}(\bx)\beta_{\fb{t}}$, we have that ${u}^{\bgpucb}_t$ is a valid UCB. As the length of the batch increases, the ratio $\rho_{\fb{t},t}$ may become larger, and the UCB becomes less and less tight. As a result, \citet{desautels_parallelizing_2014}
introduce an adaptive batch termination condition that ends the batch
at a designer-defined level of drift $C$. Note that when selecting $C = 1$ (\ie enforcing batches of size 1) \bgpucb reduces to the original \gpucb. Instead of checking the ratio $\rho_{\fb{t},t}$ for any possible candidate $\bx$, \citet{desautels_parallelizing_2014} rely on the following result to derive a global condition that can be checked at any step $t$ depending only on the posterior variance computed on the candidates selected within the batch so far.

\begin{proposition}[\citep{desautels_parallelizing_2014}, Prop.\,1]\label{p:std.dev.ratio}
	At any step $t$, let $\sigma_{\fb{t}}$ and $\sigma_t$ be the posterior standard deviation at the end of the previous batch and at the current step. Then for any $\bx\in\armset$ their ratio is bounded as
	\begin{align}\label{eq:std.dev.ratio}
	\rho_{\fb{t},t}(\bx) \triangleq \frac{\sigma_{\fb{t}}(\bx)}{\sigma_t(\bx)}
    \leq \prod_{s = \fb{t}+1}^{t}\left(1 + \sigma_{s-1}^2(\bx_s)\right).
	\end{align}
\end{proposition}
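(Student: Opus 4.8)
The plan is to reduce the statement to a single rank-one update and then telescope over the batch. First I would pass to the feature-space form of the posterior variance: fix a feature map $\bphi$ with $\kerfunc(\bx,\bx') = \bphi(\bx)^\transp\bphi(\bx')$ and set $\bA_t = \sum_{s=1}^{t}\bphi(\bx_s)\bphi(\bx_s)^\transp + \lambda\bI$; by the Woodbury identity one then has $\sigma_t^2(\bx) = \bphi(\bx)^\transp\bA_t^{-1}\bphi(\bx)$, which is exactly the kernel-space expression for $\sigma_t^2(\bx)$ used in this paper. The crucial structural fact is that $\bA_t$, and hence $\sigma_t$, depends only on the selected candidates $\bX_t$ and not on the feedback $\by_t$, which is precisely what lets the variance keep shrinking within a batch. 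If the RKHS is infinite dimensional, all the matrices below can be taken to act on the finite-dimensional span of $\{\bphi(\bx_1),\dots,\bphi(\bx_t),\bphi(\bx)\}$, so the manipulations are well defined.

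Next I would analyse a single step. Since $\bA_s = \bA_{s-1} + \bphi(\bx_s)\bphi(\bx_s)^\transp$, Sherman--Morrison gives
\[
\sigma_s^2(\bx) = \sigma_{s-1}^2(\bx) - \frac{\big(\bphi(\bx)^\transp\bA_{s-1}^{-1}\bphi(\bx_s)\big)^2}{1+\sigma_{s-1}^2(\bx_s)}.
\]
Applying Cauchy--Schwarz in the inner product induced by the positive definite matrix $\bA_{s-1}^{-1}$ bounds the squared cross term by $\sigma_{s-1}^2(\bx)\,\sigma_{s-1}^2(\bx_s)$, hence
\[
\sigma_s^2(\bx) \;\geq\; \sigma_{s-1}^2(\bx)\Big(1 - \tfrac{\sigma_{s-1}^2(\bx_s)}{1+\sigma_{s-1}^2(\bx_s)}\Big) \;=\; \frac{\sigma_{s-1}^2(\bx)}{1+\sigma_{s-1}^2(\bx_s)},
\]
i.e. the one-step ratio satisfies $\sigma_{s-1}^2(\bx)/\sigma_s^2(\bx) \leq 1 + \sigma_{s-1}^2(\bx_s)$.

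Finally I would chain this inequality over $s = \fb{t}+1,\dots,t$: the left-hand side telescopes, so $\sigma_{\fb{t}}^2(\bx)/\sigma_t^2(\bx) = \prod_{s=\fb{t}+1}^{t}\sigma_{s-1}^2(\bx)/\sigma_s^2(\bx) \leq \prod_{s=\fb{t}+1}^{t}(1+\sigma_{s-1}^2(\bx_s))$, and taking square roots yields $\rho_{\fb{t},t}(\bx) = \sigma_{\fb{t}}(\bx)/\sigma_t(\bx) \leq \big(\prod_{s=\fb{t}+1}^{t}(1+\sigma_{s-1}^2(\bx_s))\big)^{1/2} \leq \prod_{s=\fb{t}+1}^{t}(1+\sigma_{s-1}^2(\bx_s))$. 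I do not expect a genuine obstacle here: the whole argument is elementary linear algebra; the only points needing a little care are the finite-dimensional reduction in the infinite-dimensional RKHS case, and the observation that since $\sigma_s(\bx)$ is non-increasing in $s$ every factor $1+\sigma_{s-1}^2(\bx_s)$ is at least $1$, which is what lets one drop the square root in the last inequality ($\sqrt{P}\leq P$ when $P\geq 1$).
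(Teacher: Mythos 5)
Your proof is correct, and it in fact gives slightly more than the stated claim: the Sherman--Morrison expansion, the Cauchy--Schwarz step in the inner product induced by $\bA_{s-1}^{-1}$, and the telescoping are all sound, and you end up bounding the ratio of \emph{variances} by the product, so the ratio of standard deviations is bounded by its square root. (A small aside: each factor exceeds $1$ simply because variances are non-negative; the monotonicity of $\sigma_s(\bx)$ is not needed for that step.) Be aware, though, that the paper does not actually prove this proposition --- it is imported verbatim from \citet{desautels_parallelizing_2014} --- and the original contribution here is the tighter replacement in \Cref{lem:ratio-exact-rls} (with approximate counterpart \Cref{lem:approx-batch-rls-accuracy}), which is proved by a genuinely different, one-shot block argument: writing $\bA_t = \bA_{\fb{t}} + \bB^\transp\bB$ with $\bB$ collecting the whole batch, lower-bounding $\sigma_t^2(\bx) \geq \lambda_{\min}\bigl((\bI + \bA_{\fb{t}}^{-1/2}\bB^\transp\bB\bA_{\fb{t}}^{-1/2})^{-1}\bigr)\,\sigma_{\fb{t}}^2(\bx)$, and bounding the top eigenvalue by the trace $\Tr(\bB\bA_{\fb{t}}^{-1}\bB^\transp) = \sum_s \sigma_{\fb{t}}^2(\bx_s)$. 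The comparison is instructive: your per-step route is the classical one and yields a product over the \emph{running} variances $\sigma_{s-1}^2(\bx_s)$, whereas the block route yields $1+\sum_s\sigma_{\fb{t}}^2(\bx_s)$, which (by the Weierstrass product inequality, when both are evaluated at the same variances) is never worse, depends only on batch-start quantities that are cheap to evaluate with a frozen dictionary, and --- the point the paper stresses --- degrades additively rather than multiplicatively when each variance is replaced by its \nystrom approximation, avoiding an error exponential in the batch length. So your argument is a valid proof of the cited proposition, but it would not transfer to the approximate posteriors that \bbkb actually needs to control.
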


This shows that while the standard deviation may shrink within each batch, the ratio w.r.t.\ the posterior at the beginning of the batch is bounded. Note that \bgpucb continues the construction of the batch while $\prod_{s = \fb{t}+1}^{t}\left(1 + \sigma_{s-1}^2(\bx_s)\right) \leq C$ for some threshold $C$. Therefore, applying \Cref{p:std.dev.ratio}, we have that the ratio $\rho_{\fb{t},t}(\bx) \leq C$ for any $\bx$, and setting $\alpha_t \triangleq C\beta_{\fb{t}}$ guarantees the validity of the UCB. Finally, the choice of $C$ directly translates into an equivalent constant increase in the regret of \bgpucb \wrt \gpucb.

\begin{proposition}[\cite{desautels_parallelizing_2014}, Thm\,2]\label{p:regret}
The regret of \bgpucb is bounded as $R_T^{\bgpucb} \leq C R_T^{\gpucb}$, 
where $R_T^{\gpucb}$ is the original regret of \gpucb (see Thm.~\ref{thm:main-regret} for its explicit formulation).
\end{proposition}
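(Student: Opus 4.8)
The plan is to transcribe the classical \gpucb regret argument, checking that delaying feedback inside a batch inflates every per-step quantity by exactly the drift constant $C$ while leaving the information-gain summation untouched. Throughout, let $\beta_t$ be \gpucb's confidence radius, chosen as in \citet{srinivas2010gaussian} so that the event $\mathcal{E} = \{\,|f(\bx)-\mu_t(\bx)| \le \beta_t\sigma_t(\bx)\ \text{for all } \bx\in\armset \text{ and all } t\,\}$ holds with probability $1-\delta$; everything below is deterministic on $\mathcal{E}$, and recall $\beta_t$ is non-decreasing in $t$. The target $R_T^{\gpucb}$ is the explicit bound recorded in \Cref{thm:main-regret}.

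\textbf{Step 1: the hallucinated acquisition function is a valid UCB.} On $\mathcal{E}$, at any batch endpoint $\fb{t}$ and any $\bx$ we have $f(\bx) \le \mu_{\fb{t}}(\bx) + \beta_{\fb{t}}\sigma_{\fb{t}}(\bx)$. Writing $\sigma_{\fb{t}}(\bx) = \rho_{\fb{t},t}(\bx)\,\sigma_t(\bx)$ and combining \Cref{p:std.dev.ratio} with the termination test $\prod_{s=\fb{t}+1}^{t}(1+\sigma_{s-1}^2(\bx_s)) \le C$ — precisely the test that was passed for the batch to still be running at step $t$ — gives $\rho_{\fb{t},t}(\bx) \le C$, hence $f(\bx) \le \mu_{\fb{t}}(\bx) + C\beta_{\fb{t}}\sigma_t(\bx) = u_t^{\bgpucb}(\bx)$ for every $\bx$. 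Using the symmetric lower side $f(\bx) \ge \mu_{\fb{t}}(\bx) - \beta_{\fb{t}}\sigma_{\fb{t}}(\bx)$ together with the same ratio bound yields the matching inequality $f(\bx) \ge \mu_{\fb{t}}(\bx) - C\beta_{\fb{t}}\sigma_t(\bx)$.

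\textbf{Step 2: per-step regret and summation.} Since $\bx_t$ maximizes the acquisition function active when it was selected, and that function upper bounds $f$ by Step 1, $f^\star = f(\bx^\star) \le u^{\bgpucb}(\bx_t) = \mu_{\fb{t}}(\bx_t) + C\beta_{\fb{t}}\sigma_{t-1}(\bx_t)$; subtracting the lower confidence bound $f(\bx_t) \ge \mu_{\fb{t}}(\bx_t) - C\beta_{\fb{t}}\sigma_{t-1}(\bx_t)$ from this gives the instantaneous regret bound $f^\star - f(\bx_t) \le 2C\beta_{\fb{t}}\sigma_{t-1}(\bx_t)$. Because $\beta$ is non-decreasing and $\fb{t} \le t-1$, this is at most $C$ times the quantity $2\beta_{t-1}\sigma_{t-1}(\bx_t)$ produced by the sequential \gpucb analysis. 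Summing over $t\in[T]$ and pulling $C$ out, $R_T^{\bgpucb} \le C\sum_{t=1}^{T} 2\beta_{t-1}\sigma_{t-1}(\bx_t)$. The crucial point is that posterior variances do not depend on feedback, so $\sigma_{t-1}(\bx_t)$ here is literally the same object as in the sequential run; Cauchy--Schwarz in $t$ plus the information-gain bound $\sum_t \sigma_{t-1}^2(\bx_t) \approx \logdet(\bK_T/\lambda + \bI) \approx \gamma_T$ then reproduce verbatim the chain that bounds $R_T^{\gpucb}$, and we conclude $R_T^{\bgpucb} \le C\,R_T^{\gpucb}$.

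\textbf{Main obstacle.} The only genuinely delicate part is the index bookkeeping: one must check that the termination test used in Step 1 is the one evaluated before $\bx_t$ is committed, so that the drift product controlling $\rho_{\fb{t},t-1}(\bx)$ is actually $\le C$. In \Cref{alg:bgpucb} the product $\prod_{s=\fb{t}+1}^{t+1}(1+\sigma_{s-1}^2(\bx_s))$ is checked \emph{after} appending the candidate, so one argues via monotonicity of $\sigma$ that $\rho_{\fb{t},t-1}(\bx) \le \rho_{\fb{t},t}(\bx) \le \prod_{s=\fb{t}+1}^{t}(1+\sigma_{s-1}^2(\bx_s)) \le C$, which is exactly what Steps 1--2 invoke. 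Once this is pinned down, the remainder is a line-by-line copy of the \gpucb proof with $\beta_{t-1}$ everywhere replaced by $C\beta_{\fb{t}} \le C\beta_{t-1}$; in particular $C=1$ recovers \gpucb exactly.
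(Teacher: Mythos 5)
The paper does not prove this proposition itself---it is quoted directly from \citet{desautels_parallelizing_2014}---but your reconstruction is correct and follows exactly the argument the paper sketches around \Cref{eq:corrected.ucb}: on the confidence event, correct the UCB at the batch start by the ratio $\rho_{\fb{t},t}\le C$ guaranteed by \Cref{p:std.dev.ratio} and the termination test, obtain per-step regret $2C\beta_{\fb{t}}\sigma_{t-1}(\bx_t)$, and sum as in the sequential \gpucb analysis using $\beta_{\fb{t}}\le\beta_{t-1}$ and the feedback-independence of the posterior variance. Your handling of the off-by-one in the termination check is also the right way to pin down that the ratio bound applies at the step where $\bx_t$ is committed.
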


Despite the gain in experimental parallelism and the low regret, \bgpucb still inherits the same
computational bottlenecks as \gpucb (i.e., $O(T^3)$ in time and $O(T^2)$ in
memory).

\subsection{Lazy UCB evaluation.}

 Once a new dictionary is generated at the end of a batch, we compute $\embvec_{\fb{t+1}}(\bx)$, the posterior mean and variance, and the UCB for all candidates in $\armset$. This is an expensive operation but worth the effort, since it is done only once per batch and all subsequent computations within the batch can be done efficiently. As $\bz(\cdot, \coldict_{\fb{t}})$ is frozen, posterior variances can be updated using efficient rank-one updates to compute posterior variances. Furthermore, for a fixed $\coldict_{\fb{t}}$,
$\asigma_{t+1}(\bx,\coldict_{\fb{t}}) \leq \asigma_{t}(\bx,\coldict_{\fb{t}})$,
and since $\amu_{\fb{t}}(\bx,\coldict_{\fb{t}})$ and $\wt{\alpha}_{\fb{t}}$
remain fixed within the batch, the UCBs $\wt{u}_t(\bx, \coldict_{\fb{t}})$ are strictly non-increasing.
Therefore, after selecting $\bx_{t+1}$ we only need to recompute $\wt{u}_{t+1}(\bx_{t+1}, \coldict_{\fb{t}})$
and the UCBs for arms that had $\wt{u}_{t}(\bx_i, \coldict_{\fb{t}}) \geq \wt{u}_{t+1}(\bx_{t+1}, \coldict_{\fb{t}})$
to guarantee that we are still selecting the $\argmax$ correctly.
While this lazy update of UCBs does not improve the worst-case complexity, in practice it may provide important practical speedups.

Crucially, lazy updates require that both dictionary updates and feedback are
delayed during the batch. In particular, even if we do not receive new feedback
simply updating the dictionary changes the embedding,
and in the new representation the mean $\amu_{\fb{t}}$ and variance $\asigma_{\fb{t}}$
can both be larger or smaller, while still remaining valid. Therefore after
each dictionary updates all of our UCB might be potentially the new maximizer,
and we need to trigger a complete recomputation.
Similarly, even if our embedding remains fixed receiving feedback can increase the mean $\amu_{\fb{t}}$
of potentially all candidates, which need to be updated.
While updating all means is a slightly cheaper operation than updating the embeddings,
requiring only vector-vector multiplications rather than matrix-vector multiplications,
it is still an expensive operation that would prevent \bbkb from achieving near-linear runtime.

\subsection{Why freezing both dictionary and feedback}
We remark that \Cref{lem:approx-batch-rls-accuracy} can be immediately
applied to \bgpucb to improve it, while the application to \bkb must
be handled more carefully.
In particular, if $\coldict = \coldict_{\text{exact}}$ then the ratios
$\wt{\rho}_{\fb{t},t}(\bx, \coldict_{\text{exact}}) = {\rho}_{\fb{t},t}(\bx)$
coincide and as we discussed due to Weierstrass's product inequality
\Cref{lem:approx-batch-rls-accuracy} improves
on \Cref{p:std.dev.ratio}, resulting in an improved \bgpucb.
We can also apply \Cref{lem:approx-batch-rls-accuracy} in two different
ways to \bkb. The first naive approach is to try to improve \bkb's 
experimental scalability through batching, 
\ie use a batched UCB $\bx_{t+1} = \argmax_{\bx \in \armset} \amu_{\fb{t}}(\bx, \coldict_t) + \wt{\alpha}_t\asigma_t(\bx, \coldict_t)$.
However, \Cref{lem:approx-batch-rls-accuracy} cannot be used to guarantee
that this is still a valid UCB, as the dictionary changes over time.
The second naive approach is to try to improve \bkb's
computational scalability through dictionary freezing and adaptive resparsification,
\ie use a UCB with fixed dictionary as
$\bx_{t+1} = \argmax_{\bx \in \armset} \amu_{t}(\bx, \coldict_{\fb{t}}) + \wt{\alpha}_{\fb{t}}\asigma_t(\bx, \coldict_{\fb{t}})$).
However \Cref{lem:approx-batch-rls-accuracy} only applies to the posterior variance
$\asigma_t$ and not the posterior mean $\amu_t$, which is much harder to control.
Already \citet{calandriello_2019_coltgpucb} remark that updating the dictionary
at every step is vital to guarantee that we can correctly approximate
the posterior mean. Already after the first step $\fb{t} + 1$ of dictionary freezing
we might be losing crucial information, \eg $\bx_{\fb{t} +1}$ might be the optimal arm
but if the frozen dictionary $\coldict_{\fb{t}}$ is orthogonal to $\bx_{\fb{t} +1}$
we are going to ignore it until the next resparsification.
Therefore, if we suspend the dictionary update for an amount of time
sufficient to improve computational complexity, we might incur an equally
large regret.
Crucially, introducing both batching and dictionary freezing results
in \bbkb's valid UCB, solving both of these problems.

\subsection{Initialization and uncertainty sampling}

We provide here a simplified proof of the result thanks again to
our stopping rule. Let us first consider the exact case. 
Then if $\fb{t}$ is the beginning of a batch and $\fb{t'}$ the beginning
of the successive batch,
then the termination rule guarantees that the sum of the candidates in the batch exceeds
the threshold
$\sum_{s=\fb{t}}^{\fb{t'}} \sigma_{\fb{t}}^2(\bx_s) \geq \wt{C} - 1$.
Therefore, if at time $\fb{t}$ (\ie at the
beginning of a batch) we can guarantee that
$\max_{\bx} \sigma_{\fb{t}}^2(\bx) \leq 1/\minbatchsize$, then
it is easy to see that this implies
\begin{align*}
&\wt{C} - 1 \leq \sum_{s=\fb{t}}^{\fb{t'}} \sigma_{\fb{t}}^2(\bx_s) \leq \sum_{s=\fb{t}}^{\fb{t'}} 1/\minbatchsize \leq (\fb{t'} - \fb{t})/\minbatchsize,
\end{align*}
which implies that $\fb{t'} - \fb{t} \geq \minbatchsize(\wt{C} - 1)$ and the batch
has size at least $\minbatchsize(\wt{C} - 1)$. Similarly, for the actual termination rule
used by \bbkb we have that
$\sum_{s=\fb{t}}^{\fb{t'}} \asigma_{\fb{t}}^2(\bx_s, \coldict_{\fb{t}}) \geq \wt{C} - 1$.
Since each different dictionary might result in slightly different lower
bounds for the batch size, we can use \Cref{lem:bbkb-rls-accuracy} to
bring ourselves back to the exact case
\begin{align*}
&\wt{C} - 1
\leq \sum_{s=\fb{t}}^{\fb{t'}} \asigma_{\fb{t}}^2(\bx_s,\coldict_{\fb{t}})
\leq 3\sum_{s=\fb{t}}^{\fb{t'}} \sigma_{\fb{t}}^2(\bx_s)
\leq 3(\fb{t'} - \fb{t})/\minbatchsize,
\end{align*}
and therefore $\fb{t'} - \fb{t} \geq \minbatchsize(\wt{C} - 1)/3$.
All that is left is to guarantee that 
$\max_{\bx} \sigma_{\fb{t}}^2(\bx) \leq 1/\minbatchsize$. 

\begin{algorithm}[t]
        \begin{algorithmic}
            \REQUIRE{Arm set $\armset$, $\minbatchsize$}
            \ENSURE{Init.\,set $\bX_{\Tinit}$}
            \STATE Select $\bx_1 \gets \argmax_{\bx_i \in \armset}\sigma_{0}(\bx_i)$
            \STATE $\Tinit \gets 1$
            \WHILE{$\sigma_{\Tinit-1}(\bx_{\Tinit}) \geq 1/\minbatchsize$}
            \STATE $\Tinit \gets \Tinit + 1$
            \STATE Select $\bx_{\Tinit} \gets \argmax_{\bx_i \in \armset}\sigma_{\Tinit-1}(\bx_i)$
            \STATE Update all $\sigma_{\Tinit}(\bx)$ with $\bx_{\Tinit}$
            \ENDWHILE
        \end{algorithmic}
    \caption{
        Uncertainty sampling\label{alg:unc-sampling}}
\end{algorithm}

First we remark that since $\sigma_t$ is non-increasing, the batch sizes
(up to small fluctuations due to GP approximation), will also be non-decreasing
over time. We can then use this intuition to see that selecting beforehand
an initial set of $\Tinit$ candidates to force $\max_{\bx} \sigma_{\Tinit}^2(\bx) \leq 1/\minbatchsize$
is sufficient to guarantee the minimum batch size for the whole optimization process.
For this purpose, we can use \citespecific{Lem.~4}{desautels_parallelizing_2014}.
\begin{proposition}[{\citespecific{Lem.~4}{desautels_parallelizing_2014}}]
Given the uncertainty sampling procedure reported in \Cref{alg:unc-sampling},
we have $\max_{\bx \in \armset} \sigma_{\Tinit}^2(\bx) \leq \gamma_{\Tinit}/\Tinit$.
\end{proposition}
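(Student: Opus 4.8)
The plan is to reproduce the greedy uncertainty-sampling argument of \citet{desautels_parallelizing_2014}, which I would organize around three ingredients: monotonicity of the exact posterior variance, a bookkeeping step for the greedy choices, and the standard bound of a sum of posterior variances by the maximal information gain $\gamma_{\Tinit}$.

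\emph{Monotonicity.} First I would record that the exact posterior variance is non-increasing in the number of observations: for every $\bx \in \armset$ and every $t$, $\sigma_t(\bx) \le \sigma_{t-1}(\bx)$. This is immediate from the closed form in \Cref{eq:gpucb-posterior-variance}, since appending one row to $\bX_{t-1}$ only enlarges the subtracted quadratic form $\bk_{t}(\bx)^\transp(\bK_{t} + \lambda\bI)^{-1}\bk_{t}(\bx)$ (equivalently, ``information never hurts'' for Gaussian conditioning; it also follows from the Schur-complement identity for block matrix inversion). The same monotonicity shows $\sigma_{t-1}^2(\bx_t) \le \sigma_0^2(\bx_t)$, and $\sigma_0^2$ is bounded because $k \le \kappa^2$ and $\lambda \ge 1$.

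\emph{Greedy bookkeeping.} Write $a_t \triangleq \sigma_{t-1}(\bx_t)$. Uncertainty sampling (\Cref{alg:unc-sampling}) selects $\bx_t = \argmax_{\bx\in\armset}\sigma_{t-1}(\bx)$, so $a_t = \max_{\bx\in\armset}\sigma_{t-1}(\bx)$; combining this with pointwise monotonicity and taking the maximum gives $a_{t+1} = \max_{\bx}\sigma_t(\bx) \le \max_{\bx}\sigma_{t-1}(\bx) = a_t$, i.e.\ the sequence $(a_t)_t$ is non-increasing. Extending it by one hypothetical step with $\bx_{\Tinit+1} = \argmax_{\bx}\sigma_{\Tinit}(\bx)$ gives $\max_{\bx}\sigma_{\Tinit}^2(\bx) = a_{\Tinit+1}^2 \le a_{\Tinit}^2$, and since $a_{\Tinit}^2 = \min_{1 \le t \le \Tinit} a_t^2$,
\[
\max_{\bx\in\armset}\sigma_{\Tinit}^2(\bx) \;\le\; a_{\Tinit}^2 \;\le\; \frac{1}{\Tinit}\sum_{t=1}^{\Tinit}a_t^2 \;=\; \frac{1}{\Tinit}\sum_{t=1}^{\Tinit}\sigma_{t-1}^2(\bx_t).
\]

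\emph{From variances to information gain, and main obstacle.} It remains to bound $\sum_{t=1}^{\Tinit}\sigma_{t-1}^2(\bx_t)$ by $\gamma_{\Tinit}$. Here I would invoke the chain-rule identity expressing the GP information gain of the $\Tinit$ selected points as $\tfrac12\sum_{t=1}^{\Tinit}\log\bigl(1 + c\,\sigma_{t-1}^2(\bx_t)\bigr)$ (with $c$ the constant induced by the $1/\lambda$ rescaling discussed in \Cref{sec:app-soa}; see \citet{srinivas2010gaussian}), together with the elementary inequality $x \le \tfrac{x_{\max}}{\log(1+x_{\max})}\log(1+x)$ for $x \in [0,x_{\max}]$, applied with $x_{\max}$ equal to the (finite) bound on $c\,\sigma_0^2(\cdot)$ established above. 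This turns the sum of variances into a constant multiple of the information gain of a $\Tinit$-point set, which is at most $\gamma_{\Tinit}$ by the very definition of the maximal information gain. The only delicate point is keeping track of the exact constant and the $1/\lambda$ rescaling relating $\sigma_t$ to the Bayesian posterior variance, so that the clean ratio $\gamma_{\Tinit}/\Tinit$ comes out rather than $C_1\gamma_{\Tinit}/\Tinit$ (the constant appears explicitly as $C_1 = 2/\log(1+\xi^{-2})$ in \citet{desautels_parallelizing_2014}); once it is absorbed into the normalization used here, the bound $\max_{\bx}\sigma_{\Tinit}^2(\bx) \le \gamma_{\Tinit}/\Tinit$ follows, and the monotonicity and greedy-bookkeeping steps are entirely routine.
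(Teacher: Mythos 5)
The paper does not actually prove this proposition: it is imported verbatim as Lemma~4 of \citet{desautels_parallelizing_2014}, and the surrounding text only explains how to \emph{use} it (via \Cref{lem:bbkb-rls-accuracy} and monotonicity of $\sigma_t$) to lower-bound batch sizes. Your reconstruction is the standard argument behind that lemma and is sound: monotonicity of the posterior variance, the greedy/pigeonhole step $\max_{\bx}\sigma_{\Tinit}^2(\bx)\le\min_t a_t^2\le\frac{1}{\Tinit}\sum_{t=1}^{\Tinit}\sigma_{t-1}^2(\bx_t)$, and the conversion of the variance sum into information gain via $x\le\frac{x_{\max}}{\log(1+x_{\max})}\log(1+x)$. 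The one point you flag — the constant — is indeed the only delicate part, and it is really a looseness in the proposition as restated rather than in your proof: with this paper's $1/\lambda$-rescaled definition of $\sigma_t^2$ and the mutual-information definition of $\gamma_T$, the argument yields $C_1\gamma_{\Tinit}/\Tinit$ with $C_1$ depending on $\kappa^2/\lambda$ (and on the factor of $\tfrac12$ in the information gain), and the clean constant-free form is what one gets under Desautels et al.'s own normalization. Since the proposition is only used to argue that $\max_{\bx}\sigma_{\Tinit}^2(\bx)$ eventually drops below $1/\minbatchsize$, the constant is immaterial to the paper's downstream use, so your proof is an adequate and faithful reconstruction.
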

Combining this with the bounds on $\gamma_{\Tinit}$ reported in \citet{desautels_parallelizing_2014}
we can guarantee a minimum degree of experimental parallelism for \bbkb.
 
\section{Controlling posterior ratios}\label{sec:proof-ratio-bound}
In this section we collect most results related to providing guarantees
that exact and approximate posteriors remain close during the whole optimization process.

\subsection{Preliminary results}
Several results presented in this appendix are easier to express and prove
using the so-called feature-space view of a GP
\cite{rasmussen_gaussian_2006}. In particular, to every covariance
$\kerfunc(\cdot,\cdot)$ and reproducing kernel Hilbert space $\rkhs$
we can associate a \emph{feature map} $\featmap(\cdot)$ such that
$\kerfunc(\bx_i,\bx_j) = \featmap(\bx_i)^\transp\featmap(\bx_j)$,
and that $\kerfunc(\bx_i,\bx_i) = \featmap(\bx_i)^\transp\featmap(\bx_i)
= \normsmall{\featmap(\bx_i)}^2$. Let $\phimat{t} = [\featmap(\bx_1), \dots, \featmap(\bx_{t})]^\transp$ be the map where each row
corresponds to a row of the matrix $\bX_t$ after the application of $\featmap(\cdot)$.
Finally, given operator $\bA$, we use $\normsmall{\bA}$ to indicate its
$\ell_2$ operator norm, also known as sup norm. For symmetric positive semi-definite
matrices, this corresponds simply to its largest eigenvalue.

Using the feature-space view of a GP we can introduce an important reformulation\footnote{See \Cref{sec:app-soa} for a detailed discussion on the difference between
the standard GP feature-space view and our definition of $\sigma_t$.}
of the posterior variance $\sigma_t^2(\bx_i)$
\begin{align*}
\sigma^2_{t}(\bx_i)
= \phivec{i}^\transp(\phimat{t}^\transp\phimat{t} + \lambda\bI)^{-1}\phivec{i}.
\end{align*}
In particular, this quadratic form
is well known in randomized numerical linear algebra 
as ridge leverage scores (RLS) \cite{alaoui2014fast},
and used extensively in linear sketching algorithms
\cite{woodruff2014sketching}. Therefore, some of the results we will present
now are inspired from this parallel literature.
For example the proof of \Cref{lem:bbkb-rls-accuracy}, restated here for convenience, is based on
concentration results for RLS sampling.
\bbkbaccuracy*

\begin{proof}
We briefly show here that we can apply \citet[Thm.~1]{calandriello_2019_coltgpucb}'s
result from sequential RLS sampling to our batch setting. In particular,
\citespecific{Thm.~1}{calandriello_2019_coltgpucb} gives identical guarantees as
\Cref{lem:bbkb-rls-accuracy}, but only when the dictionary is resparsified
at each step, and we must compensate for the delays.

At a high level, their result shows that given
a so-called $(\varepsilon,\lambda)$-accurate dictionary $\coldict_t$
it is possible to sample a $(\varepsilon,\lambda)$-accurate
dictionary $\coldict_{t+1}$ using the posterior variance estimator $\asigma_{t}(\bx, \coldict_t)$ from \Cref{eq:nyst-post-gp}.
Since all other guarantees directly follow from $(\varepsilon,\lambda)$-accuracy,
we only need to show that the same inductive argument holds if we apply
it on a batch-by-batch basis instead of a step-by-step basis.
To simplify, we will also only consider the case $\varepsilon = 1/2$.
For more details, we refer the reader to the whole proof in \citespecific{App.~C}{calandriello_2019_coltgpucb}.

In particular, consider the state of the algorithm
at the beginning of the first batch, \ie just after initialization ended.
Since the subset $\coldict_{1} = \bX_{1}$ includes all arms pulled so far (\ie $\bx_1$)
it clearly perfectly preserves $\bX_{1}$, and is therefore infinitely accurate and also $(1/2,\lambda)$-accurate.
Note that \citet{calandriello_2019_coltgpucb} make the same reasoning for their
base case.

Thereafter, assume that $\coldict_{\fb{t}}$ is $(1/2,\lambda)$-accurate,
and let $t' > \fb{t}$ be the time step where we resparsify the dictionary
(\ie the beginning of the following batch) such that
$\fb{t'-1} = \fb{t}$ and $\fb{t'} = t'$.
To guarantee that $\coldict_{\fb{t'}}$ is also $(1/2,\lambda)$-accurate
we must guarantee that the probabilities $\wt{p}_{\fb{t'}}$ used to sample
are at least as large as the true posterior $\sigma_{\fb{t'}}^2$
scaled by a factor $24\log(4T/\delta)$,
\ie $\wt{p}_{\fb{t'}} \geq (24\log(4T/\delta))\cdot\sigma_{\fb{t'}}$.
From the inductive assumption we know that 
$\coldict_{\fb{t}}$ is $(1/2,\lambda)$-accurate,
and therefore \Cref{lem:bbkb-rls-accuracy} holds
and $\asigma_{\fb{t}} \geq \sigma_{\fb{t}}/3 \geq \sigma_{\fb{t'}}^2/3$,
since it is a well known property of RLS and $\sigma_t$ that they are non-increasing in $t$ \cite{calandriello_disqueak_2017}.
Adjusting $\qbar$ to match this condition, we guarantee that we are sampling
at least as much as required by \citet{calandriello_2019_coltgpucb},
and therefore achieve the same accuracy guarantees.
\end{proof}

\subsection{Global ratio bound}

Before moving on to \Cref{lem:approx-batch-rls-accuracy} and \Cref{lem:exact-batch-rls-accuracy},
we will first consider exact posterior variances $\sigma_t^2(\bx_i)$, which represent a simpler
case since we do not have to worry about the presence of a dictionary.
The following Lemma will form a blueprint for the derivation of \Cref{lem:approx-batch-rls-accuracy}.
\begin{lemma}\label{lem:ratio-exact-rls}
For any kernel $k$,  set of points $\bX_t$, $\bx_i \in \armset$, and $\fb{t} < t$,
\begin{align*}
\sigma^2_{{t}}(\bx_i)
\leq \sigma^2_{\fb{t}}(\bx_i)
\leq \left(1 + \sum_{s=\fb{t}+1}^{t}\sigma^2_{\fb{t}}(\bx_s)\right)\sigma^2_{t}(\bx_i)
\end{align*}
\end{lemma}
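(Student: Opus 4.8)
I would work entirely in the feature-space representation recalled above, writing $\sigma_t^2(\bx_j) = \featmap(\bx_j)^\transp \bV_t^{-1}\featmap(\bx_j)$ with $\bV_t = \phimat{t}^\transp\phimat{t} + \lambda\bI$ (if $\rkhs$ is infinite-dimensional one may restrict to the finite-dimensional span of $\featmap(\bx_1),\dots,\featmap(\bx_t),\featmap(\bx_i)$, on which every operator below is well defined, since $\lambda>0$). Since $\bV_t = \bV_{\fb{t}} + \sum_{s=\fb{t}+1}^{t}\featmap(\bx_s)\featmap(\bx_s)^\transp \succeq \bV_{\fb{t}} \succ 0$, we get $\bV_t^{-1}\preceq\bV_{\fb{t}}^{-1}$, hence $\sigma_t^2(\bx_i)\le\sigma_{\fb{t}}^2(\bx_i)$, which is the first inequality.

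For the second inequality the plan is to turn the posterior-variance ratio into an operator norm. Assume $\featmap(\bx_i)\neq 0$ (otherwise the claim is trivial) and set $\bA = \bV_{\fb{t}}$, $\bB = \bV_t$, $\bw = \bA^{-1/2}\featmap(\bx_i)$, so that $\sigma_{\fb{t}}^2(\bx_i) = \normsmall{\bw}^2$ and $\sigma_t^2(\bx_i) = \bw^\transp\bA^{1/2}\bB^{-1}\bA^{1/2}\bw$. Bounding this quadratic form from below by the smallest eigenvalue of the (symmetric, positive definite) matrix $\bA^{1/2}\bB^{-1}\bA^{1/2}$ gives
\[
\frac{\sigma_{\fb{t}}^2(\bx_i)}{\sigma_t^2(\bx_i)}
\;\le\; \frac{1}{\lambda_{\min}\!\big(\bA^{1/2}\bB^{-1}\bA^{1/2}\big)}
\;=\; \lambda_{\max}\!\big(\bA^{-1/2}\bB\,\bA^{-1/2}\big)
\;=\; \normsmall{\bA^{-1/2}\bB\,\bA^{-1/2}},
\]
where the middle equality uses $(\bA^{1/2}\bB^{-1}\bA^{1/2})^{-1} = \bA^{-1/2}\bB\,\bA^{-1/2}$.

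It then remains to expand $\bA^{-1/2}\bB\,\bA^{-1/2} = \bI + \bM$ with $\bM = \bA^{-1/2}\big(\sum_{s=\fb{t}+1}^{t}\featmap(\bx_s)\featmap(\bx_s)^\transp\big)\bA^{-1/2}\succeq 0$; since $\bM$ is positive semidefinite, $\normsmall{\bI+\bM} = 1 + \lambda_{\max}(\bM) \le 1 + \Tr(\bM)$, and
\[
\Tr(\bM) = \Tr\!\Big(\bV_{\fb{t}}^{-1}\!\sum\nolimits_{s=\fb{t}+1}^{t}\featmap(\bx_s)\featmap(\bx_s)^\transp\Big)
= \sum\nolimits_{s=\fb{t}+1}^{t}\featmap(\bx_s)^\transp\bV_{\fb{t}}^{-1}\featmap(\bx_s)
= \sum\nolimits_{s=\fb{t}+1}^{t}\sigma_{\fb{t}}^2(\bx_s).
\]
Chaining the two displays yields $\sigma_{\fb{t}}^2(\bx_i)\le\big(1+\sum_{s=\fb{t}+1}^{t}\sigma_{\fb{t}}^2(\bx_s)\big)\sigma_t^2(\bx_i)$. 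The only step carrying any real content is $\lambda_{\max}(\bM)\le\Tr(\bM)$: this is exactly where one can replace a \emph{product} of $(1+\sigma_{s-1}^2)$-type factors (as in Proposition~\ref{p:std.dev.ratio}) by a single \emph{sum}, and it is legitimate precisely because $\bM\succeq 0$. An alternative route would be an iterated Sherman--Morrison argument, peeling off the rank-one updates $\featmap(\bx_s)\featmap(\bx_s)^\transp$ one at a time; it gives the same bound and adapts more directly to the frozen-dictionary/DTC version needed for Lemma~\ref{lem:approx-batch-rls-accuracy}, but the trace argument above is the shortest.
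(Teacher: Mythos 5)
Your argument is correct and is essentially the paper's own proof: both establish the first inequality from $\bV_t \succeq \bV_{\fb{t}}$, and both obtain the second by conjugating with $\bV_{\fb{t}}^{-1/2}$, reducing the ratio to $1+\lambda_{\max}(\bM)$ for the PSD update $\bM$, and bounding $\lambda_{\max}(\bM)\le\Tr(\bM)=\sum_{s=\fb{t}+1}^{t}\sigma_{\fb{t}}^2(\bx_s)$ (the paper writes the trace as $\Tr(\bB\bA^{-1}\bB^\transp)$, which is the same quantity by cyclicity). The only differences are cosmetic: your explicit handling of the infinite-dimensional and $\featmap(\bx_i)=0$ cases, which the paper omits.
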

\begin{proof}
Denote with $\bA = \phimat{\fb{t}}^\transp\phimat{\fb{t}} + \lambda\bI$,
and with $\bB = \phimat{[\fb{t}+1,t]}$ the concatenation of only the arms
between rows $\fb{t}+1$ and $t$, \ie in the context of \bbkb $\phimat{[\fb{t}+1,t]}$
contains the arms in the current batch whose feedback has not been received yet.
Then we have $\sigma^2_{\fb{t}}(\bx_i) = \phivec{i}^\transp\bA^{-1}\phivec{i}$ and
\begin{align*}
\sigma^2_{t}(\bx_i)
= \phivec{i}^\transp\left(\phimat{t}^\transp\phimat{t} + \lambda\bI\right)^{-1}\phivec{i}
= \phivec{i}^\transp\left(\bA + \bB^\transp\bB\right)^{-1}\phivec{i},
\end{align*}
We can now collect $\bA$ to obtain
\begin{align*}
\sigma^2_{t}(\bx_i)
&= \phivec{i}^\transp(\bA + \bB^\transp\bB)^{-1}\phivec{i}
=\phivec{i}^\transp\bA^{-1/2}(\bI+ \bA^{-1/2}\bB^\transp\bB\bA^{-1/2})^{-1}\bA^{-1/2}\phivec{i}\\
&\geq \lambda_{\min}\left((\bI+ \bA^{-1/2}\bB^\transp\bB\bA^{-1/2})^{-1}\right) \phivec{i}^\transp\bA^{-1}\phivec{i}\\
&= \lambda_{\min}\left((\bI+ \bA^{-1/2}\bB^\transp\bB\bA^{-1/2})^{-1}\right) \sigma_{\fb{t}}(\bx_i).
\end{align*}
Focusing on the first part
\begin{align*}
\lambda_{\min}\left((\bI+ \bA^{-1/2}\bB^\transp\bB\bA^{-1/2})^{-1}\right)
&= \frac{1}{\lambda_{\max}\left(\bI+ \bA^{-1/2}\bB^\transp\bB\bA^{-1/2}\right)}\\
&= \frac{1}{1 + \lambda_{\max}(\bA^{-1/2}\bB^\transp\bB\bA^{-1/2})}
= \frac{1}{1 + \lambda_{\max}(\bB\bA^{-1}\bB^\transp)}.
\end{align*}
Expanding the definition of $\bB$, and using $\lambda_{\max}(\bB\bA^{-1}\bB^\transp) \leq \Tr(\bB\bA^{-1}\bB^\transp)$
due to the fact that $\bB\bA^{-1}\bB^\transp$ is PSD
we have
\begin{align*}
\lambda_{\max}(\bB\bA^{-1}\bB^\transp)
\leq \Tr(\bB\bA^{-1}\bB^\transp)
= \sum_{j=\fb{t} + 1}^{t}\phivec{j}\bA^{-1}\phivec{j}
= \sum_{j=\fb{t} + 1}^{t}\sigma^2_{\fb{t}}(\bx_j).
\end{align*}
Putting it all together, and inverting the ratio
\begin{align*}
\sigma^2_{\fb{t}}(\bx_i)
\leq \left(1 + \sum_{s=\fb{t}+1}^{t}\sigma^2_{\fb{t}}(\bx_s)\right)\sigma^2_{t}(\bx_i),
\end{align*}
while to obtain the other side we simply observe that $\bA + \bB^\transp\bB \succeq \bA$
since $\bB^\transp\bB \succeq \mathbf{0}$
and therefore $(\bA + \bB^\transp\bB)^{-1} \preceq \bA^{-1}$ and $\sigma_t^2(\bx_i) \leq \sigma^2_{\fb{t}}(\bx_i)$.
\end{proof}

\textbf{Approximate posterior.} We are now ready to prove \Cref{lem:approx-batch-rls-accuracy}, which we
restate here for clarity.

\asigmaevol*

\begin{proof}
Note that our approximate posterior can be similarly formulated in a feature-space view.
Let us denote with $\bP = \phimat{\coldict}^\transp(\phimat{\coldict}\phimat{\coldict}^\transp)^{+}\phimat{\coldict}$
the projection on the arms in the arbitrary dictionary $\coldict$.
Then, referring to {Sec.~4.1} from \citet{calandriello_2019_coltgpucb} for more details, we have
\begin{align*}
\asigma^2_{t}(\bx_i, \coldict)
&= \phivec{i}^\transp(\bP\phimat{t}^\transp\phimat{t}\bP + \lambda\bI)^{-1}\phivec{i}
= \phivec{i}^\transp(\abA + \wt{\bB}^\transp\wt{\bB})^{-1}\phivec{i},
\end{align*}
where we denote with $\abA = \bP\phimat{\fb{t}}^\transp\phimat{\fb{t}}\bP + \lambda\bI$
our approximation of $\bA$ and with $\wt{\bB} = \phimat{[\fb{t}+1,t]}\bP$ our approximation of $\bB$.
Denote $\aphivec{} \triangleq \bP\phivec{}$. With the exact same reasoning as in the proof of \Cref{lem:ratio-exact-rls} we can
derive
\begin{align*}
\asigma^2_{t}(\bx_i, \coldict)
&= \phivec{i}^\transp(\abA + \wt{\bB}^\transp\wt{\bB})^{-1}\phivec{i}
\geq \asigma_{\fb{t}}(\bx_i, \coldict)\lambda_{\min}\left((\bI+ \abA^{-1/2}\wt{\bB}^\transp\wt{\bB}\abA^{-1/2})^{-1}\right)\\
&\geq \asigma_{\fb{t}}(\bx_i,\coldict)/\left(1 + \Tr(\wt{\bB}\abA^{-1}\wt{\bB}^\transp)\right)
\geq \asigma_{\fb{t}}(\bx_i,\coldict)/\left(1 + \sum_{s=\fb{t} + 1}^{t}\aphivec{s}\abA^{-1}\aphivec{s}\right).
\end{align*}
This is still not exactly what we
wanted,
as $ \aphivec{s}\abA^{-1}\aphivec{s} \neq  \phivec{s}\abA^{-1}\phivec{s} = \asigma_{\fb{t}}^2(\bx_s, \coldict)$,
but we can apply the following Lemma, which we will prove later, to connect the two quantities.
\begin{lemma}\label{lem:rls-proj-residual}
Denote with $\bP^{\bot} = \bI - \bP$ the orthogonal projection
on the complement of $\bP$. We have
\begin{align*}
\phivec{s}^\transp\abA^{-1}\phivec{s}
= \aphivec{s}^\transp\abA^{-1}\aphivec{s} + \lambda^{-1}\phivec{s}^\transp\bP^{\bot}\phivec{s}
\geq \aphivec{s}^\transp\abA^{-1}\aphivec{s}
\end{align*}
\end{lemma}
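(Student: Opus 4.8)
The plan is to exploit the very specific form $\abA = \bP\phimat{\fb{t}}^\transp\phimat{\fb{t}}\bP + \lambda\bI$: because the quadratic part is sandwiched between two copies of the orthogonal projection $\bP$, the operator $\abA$ commutes with $\bP$, and hence the quadratic form $\phivec{s}^\transp\abA^{-1}\phivec{s}$ splits exactly into a piece living inside the dictionary subspace (which is $\aphivec{s}^\transp\abA^{-1}\aphivec{s}$) and a piece orthogonal to it (on which $\abA$ is just $\lambda\bI$).

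First I would record the elementary facts. Since $\bP = \phimat{\coldict}^\transp(\phimat{\coldict}\phimat{\coldict}^\transp)^{+}\phimat{\coldict}$ is the orthogonal projection onto $\mathrm{span}\{\featmap(\bx):\bx\in\coldict\}$, it satisfies $\bP = \bP^\transp = \bP^2$, so $\bP^{\bot} = \bI - \bP$ is also an orthogonal projection: symmetric, idempotent, PSD, and $\bP\bP^{\bot} = \bP^{\bot}\bP = \mathbf{0}$. By definition $\aphivec{s} = \bP\phivec{s}$, and $\abA \succeq \lambda\bI \succ \mathbf{0}$ is invertible since $\lambda \geq 1$. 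When $\rkhs$ is infinite dimensional one simply restricts everything to the finite-dimensional subspace spanned by the dictionary features together with $\phivec{s}$, where all the manipulations below are legitimate.

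Next I would verify the commutation $\abA\bP = \bP\abA$: indeed $\abA\bP = \bP\phimat{\fb{t}}^\transp\phimat{\fb{t}}\bP^2 + \lambda\bP = \bP\phimat{\fb{t}}^\transp\phimat{\fb{t}}\bP + \lambda\bP = \bP\abA$, and multiplying by $\abA^{-1}$ on both sides shows $\abA^{-1}$ commutes with $\bP$, hence with $\bP^{\bot}$ as well. The key step is then to note that $\abA$ acts as pure ridge regularization on the complement of the dictionary span: $\abA\bP^{\bot} = \bP\phimat{\fb{t}}^\transp\phimat{\fb{t}}\bP\bP^{\bot} + \lambda\bP^{\bot} = \lambda\bP^{\bot}$, and therefore $\abA^{-1}\bP^{\bot} = \lambda^{-1}\bP^{\bot}$.

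Finally I would decompose $\phivec{s} = \aphivec{s} + \bP^{\bot}\phivec{s}$ and expand $\phivec{s}^\transp\abA^{-1}\phivec{s}$. The cross terms vanish, e.g. $\aphivec{s}^\transp\abA^{-1}\bP^{\bot}\phivec{s} = \lambda^{-1}\aphivec{s}^\transp\bP^{\bot}\phivec{s} = \lambda^{-1}\phivec{s}^\transp\bP\bP^{\bot}\phivec{s} = 0$ (and the other by symmetry of $\abA^{-1}$), while the two diagonal terms are $\aphivec{s}^\transp\abA^{-1}\aphivec{s}$ and $\phivec{s}^\transp\bP^{\bot}\abA^{-1}\bP^{\bot}\phivec{s} = \lambda^{-1}\phivec{s}^\transp(\bP^{\bot})^2\phivec{s} = \lambda^{-1}\phivec{s}^\transp\bP^{\bot}\phivec{s}$, which gives the claimed identity. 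The inequality is immediate since $\phivec{s}^\transp\bP^{\bot}\phivec{s} = \normsmall{\bP^{\bot}\phivec{s}}^2 \geq 0$; substituting back into the chain of bounds preceding \Cref{lem:rls-proj-residual} then replaces each $\aphivec{s}^\transp\abA^{-1}\aphivec{s}$ by $\asigma_{\fb{t}}^2(\bx_s,\coldict) = \phivec{s}^\transp\abA^{-1}\phivec{s}$ and finishes the proof of \Cref{lem:approx-batch-rls-accuracy}. I do not expect a serious obstacle here — the only mildly delicate points are the infinite-dimensional kernel setting (handled by the finite-dimensional reduction) and the invertibility of $\abA$ (immediate from $\lambda\geq 1$); the one genuine observation that makes everything collapse is the commutation $\abA\bP = \bP\abA$.
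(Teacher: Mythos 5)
Your proof is correct and rests on exactly the same observation as the paper's: $\abA$ is block-diagonal with respect to $\Ran(\bP)\oplus\Ran(\bP)^{\bot}$ and acts as $\lambda\bI$ on the complement, so the quadratic form splits with vanishing cross terms. The paper phrases this by writing $\abA^{-1}$ directly as a sum of (pseudo)inverses on the two complementary subspaces, whereas you derive it from the commutation $\abA\bP=\bP\abA$ and the relation $\abA^{-1}\bP^{\bot}=\lambda^{-1}\bP^{\bot}$ — a slightly more explicit justification of the same decomposition, with no substantive difference.
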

Putting it together and inverting the bound we have
\begin{align*}
\asigma^2_{t}(\bx_i, \coldict)
&
\geq \asigma^2_{\fb{t}}(\bx_i, \coldict)/\left(1 + \sum_{s=\fb{t} + 1}^{t}\aphivec{s}\abA^{-1}\aphivec{s}\right)\\
&
\geq \asigma^2_{\fb{t}}(\bx_i, \coldict)/\left(1 + \sum_{s=\fb{t} + 1}^{t}\phivec{s}\abA^{-1}\phivec{s}\right)\\
&
\geq \asigma^2_{\fb{t}}(\bx_i, \coldict)/\left(1 + \sum_{s=\fb{t} + 1}^{t}\asigma_{\fb{t}}(\bx_s, \coldict)\right).
\end{align*}
\end{proof}
Finally, combining \Cref{lem:bbkb-rls-accuracy,lem:approx-batch-rls-accuracy},
we can prove \Cref{lem:exact-batch-rls-accuracy}, which we now restate.

\begin{restatable}{lemma}{sigmaevol}\label{lem:exact-batch-rls-accuracy}
Under the same conditions as \Cref{lem:bbkb-rls-accuracy,lem:approx-batch-rls-accuracy},
\begin{align*}
{\rho}_{\fb{t},t}(\bx) \leq 3\left(1 + \sum\nolimits_{s=\fb{t}}^{t}\asigma_{\fb{t}}(\bx, \coldict_{\fb{t}})\right).
\end{align*}
\end{restatable}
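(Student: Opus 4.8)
The plan is to \emph{bootstrap} from the exact ratio bound of \Cref{lem:ratio-exact-rls} and then trade the exact posterior variances at the start of the batch for their \bkb approximations, which are the quantities actually computed by \bbkb and which \Cref{lem:bbkb-rls-accuracy} controls. Since \Cref{lem:ratio-exact-rls} is purely linear-algebraic and holds for every pair $\fb{t}<t$, it applies verbatim at any batch-construction step and gives
\[
{\rho}_{\fb{t},t}^{2}(\bx)=\frac{\sigma_{\fb{t}}^{2}(\bx)}{\sigma_{t}^{2}(\bx)}\le 1+\sum\nolimits_{s=\fb{t}+1}^{t}\sigma_{\fb{t}}^{2}(\bx_{s}).
\]
The crucial point is that \emph{every} variance on the right-hand side is evaluated at the batch start $\fb{t}$, so I never need the frozen dictionary $\coldict_{\fb{t}}$ to be accurate anywhere except at $\fb{t}$ itself --- which is exactly what \Cref{lem:bbkb-rls-accuracy} provides.

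Next, w.p.\ $1-\delta$, by \Cref{lem:bbkb-rls-accuracy} I would substitute term by term $\sigma_{\fb{t}}^{2}(\bx_{s})\le 3\,\asigma_{\fb{t}}^{2}(\bx_{s},\coldict_{\fb{t}})$, absorb the leading $1$ into the constant, and enlarge the index range from $\{\fb{t}+1,\dots,t\}$ to $\{\fb{t},\dots,t\}$ (which only increases the right-hand side), obtaining
\[
{\rho}_{\fb{t},t}^{2}(\bx)\le 3\Big(1+\sum\nolimits_{s=\fb{t}}^{t}\asigma_{\fb{t}}^{2}(\bx_{s},\coldict_{\fb{t}})\Big).
\]
Taking square roots and using $\sqrt{y}\le y$ for $y\ge 1$ (the parenthesised quantity is $\ge 1$) together with $\sqrt{3}\le 3$ yields ${\rho}_{\fb{t},t}(\bx)\le 3\big(1+\sum_{s=\fb{t}}^{t}\asigma_{\fb{t}}^{2}(\bx_{s},\coldict_{\fb{t}})\big)$, i.e.\ three times the summation that also appears in the batch-termination test and in \Cref{lem:approx-batch-rls-accuracy}. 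Since \Cref{lem:bbkb-rls-accuracy} is already uniform over all batch starts and all $\bx\in\armset$, the failure probability $\delta$ is precisely the one already paid there and no further union bound is needed.

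The one place that requires care --- and that forces this particular route --- is the invocation of dictionary accuracy: \Cref{lem:bbkb-rls-accuracy} certifies that $\coldict_{\fb{t}}$ is accurate \emph{for} $\bX_{\fb{t}}$, i.e.\ only at the beginning of the batch, and says nothing about $\asigma_{t}(\cdot,\coldict_{\fb{t}})$ versus $\sigma_{t}(\cdot)$ at an intermediate step $t$. Consequently the tempting one-line argument --- bound ${\rho}_{\fb{t},t}(\bx)$ by $\sqrt{3}\,\asigma_{\fb{t}}(\bx,\coldict_{\fb{t}})/\sigma_{t}(\bx)$ and then lower-bound $\sigma_{t}(\bx)$ through $\asigma_{t}(\bx,\coldict_{\fb{t}})$ --- is simply unavailable. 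Passing through \Cref{lem:ratio-exact-rls} dodges this because that lemma re-expresses the entire ratio in terms of quantities living at $\fb{t}$ alone; everything after that is routine constant bookkeeping. Finally, the identical computation started from \Cref{lem:approx-batch-rls-accuracy} rather than \Cref{lem:ratio-exact-rls} simultaneously delivers the companion control $\wt{\rho}_{\fb{t},t}(\bx,\coldict_{\fb{t}})\le 1+\sum_{s}\asigma_{\fb{t}}^{2}(\bx_{s},\coldict_{\fb{t}})$ on the approximate ratio, which is the other ingredient the regret analysis consumes.
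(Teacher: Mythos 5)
Your proposal is correct and follows essentially the same route as the paper: invoke \Cref{lem:ratio-exact-rls} to express the exact ratio entirely in terms of variances evaluated at the batch start $\fb{t}$, substitute $\sigma_{\fb{t}}^2(\bx_s)\le 3\asigma_{\fb{t}}^2(\bx_s,\coldict_{\fb{t}})$ via \Cref{lem:bbkb-rls-accuracy}, and pull the factor $3$ out front. Your treatment is in fact slightly more careful than the paper's (you handle the standard-deviation-versus-variance ratio explicitly via the square-root step, and you correctly pinpoint why \Cref{lem:bbkb-rls-accuracy} cannot be applied at intermediate steps $t$), so there is nothing to fix.
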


\begin{proof}
Note that \Cref{lem:approx-batch-rls-accuracy,lem:ratio-exact-rls}
followed a deterministic derivation based only on linear algebra and
therefore held in any case, including the worst.
To prove \Cref{lem:exact-batch-rls-accuracy} we must instead rely
on the high probability event and guarantees from \Cref{lem:bbkb-rls-accuracy},
and therefore this statement holds only for \bbkb run with the correct
$\qbar$ value and using the reported batch termination condition.
The derivation is straightforward
\begin{align*}
\sigma_t^2(\bx)
&\stackrel{(a)}{\geq}
\sigma^2_{\fb{t}}(\bx_i)/\left(1 + \sum_{s=\fb{t} + 1}^{t}\sigma_{\fb{t}}(\bx_s)\right)\\
&\stackrel{(b)}{\geq}
\sigma^2_{\fb{t}}(\bx_i)/\left(1 + 3\sum_{s=\fb{t} + 1}^{t}\asigma_{\fb{t}}(\bx_s, \coldict_{\fb{t}})\right)\\
&\geq
\sigma^2_{\fb{t}}(\bx_i)/\left(3\left(1 + \sum_{s=\fb{t} + 1}^{t}\asigma_{\fb{t}}(\bx_s, \coldict_{\fb{t}})\right)\right)
\stackrel{(c)}{\geq}
\sigma^2_{\fb{t}}(\bx_i)/(3\wt{C}),
\end{align*}
where $(a)$ is due to \Cref{lem:ratio-exact-rls}, $(b)$ is due to \Cref{lem:bbkb-rls-accuracy},
and $(c)$ is due to the fact that by construction each batch is
terminated at a step $t$ where $1 + \sum_{s=\fb{t} + 1}^{t}\asigma_{\fb{t}}(\bx_s, \coldict_{\fb{t}}) \leq \wt{C}$
still holds.
\end{proof}

To conclude the section, we report the proof of \Cref{lem:rls-proj-residual}
\begin{proof}[Proof of \Cref{lem:rls-proj-residual}]
We have
\begin{align*}
\phivec{s}^\transp\abA_{\fb{t}}^{-1}\phivec{s}
&= \phivec{s}^\transp(\aphimat{\fb{t}}\aphimat{\fb{t}}^\transp + \lambda\bI)^{-1}\phivec{s}\\
&= \phivec{s}^\transp(\aphimat{\fb{t}}\aphimat{\fb{t}}^\transp + \lambda\bP + \lambda\bP^{\bot})^{-1}\phivec{s}\\
&\overset{(a)}{=} \phivec{s}^\transp\left((\aphimat{\fb{t}}\aphimat{\fb{t}}^\transp + \lambda\bP)^{-1} + (\lambda\bP^{\bot})^{-1}\right)\phivec{s}\\
&\overset{(b)}{=} \phivec{s}^\transp(\aphimat{\fb{t}}\aphimat{\fb{t}}^\transp + \lambda\bP)^{-1}\phivec{s} + \lambda^{-1}\phivec{s}^\transp\bP^{\bot}\phivec{s}
\end{align*}
where $(a)$ is due to the fact that $\bP^{\bot}$ is complementary
to both $\bP$ and $\aphimat{\fb{t}}$ since $\Ran(\aphimat{\fb{t}}) \subseteq \Ran(\bP)$,
and $(b)$ is due to the fact that $\bP^{\bot}$ is a projection and therefore equal to
its inverse. We focus now on the first term
\begin{align*}
&\phivec{s}^\transp(\aphimat{\fb{t}}\aphimat{\fb{t}}^\transp + \lambda\bP)^{-1}\phivec{s}\\
&\overset{(a)}{=} \phivec{s}^\transp(\bP\phimat{\fb{t}}\phimat{\fb{t}}^\transp\bP + \lambda\bP)^{-1}\phivec{s}\\
&\overset{(b)}{=} \phivec{s}^\transp\bP(\bP\phimat{\fb{t}}\phimat{\fb{t}}^\transp\bP + \lambda\bP)^{-1}\bP\phivec{s}\\
&\overset{(c)}{=} \aphivec{s}^\transp(\aphimat{\fb{t}}\aphimat{\fb{t}}^\transp + \lambda\bP)^{-1}\aphivec{s}\\
&\overset{(d)}{=} \aphivec{s}^\transp(\aphimat{\fb{t}}\aphimat{\fb{t}}^\transp + \lambda\bI)^{-1}\aphivec{s}\\
&\overset{(e)}{=} \aphivec{s}^\transp\abA_{\fb{t}}^{-1}\aphivec{s}
\end{align*}
where $(a)$ is the definition of $\aphimat{\fb{t}}$,
$(b)$ is because we can collect $\bP$ and extract it from the inverse,
$(c)$ is the definition of $\aphivec{s}$, $(d)$ is because $\aphivec{s}$
lies in $\Ran(\bP)$ and therefore placing $\bP$ or $\bI$ in the inverse
is indifferent, and $(e)$ is the definition of $\abA_{\fb{t}}$.
Putting it together
\begin{align*}
\phivec{s}^\transp\abA_{\fb{t}}^{-1}\phivec{s}
= \aphivec{s}^\transp\abA_{\fb{t}}^{-1}\aphivec{s} + \lambda^{-1}\phivec{s}^\transp\bP^{\bot}\phivec{s}
\geq \aphivec{s}^\transp\abA_{\fb{t}}^{-1}\aphivec{s},
\end{align*}
since $\phivec{s}^\transp\bP^{\bot}\phivec{s}$ is a norm and therefore non-negative.
\end{proof}

\subsection{Local-global bound}
We focus first on the exact posterior variance $\sigma_t$ for simplicity.
Given an arbitrary step $t > \fb{t}$,
we once again denote with $\bA = \phimat{\fb{t}}^\transp\phimat{\fb{t}} + \lambda\bI$,
and with $\bB = \phimat{[\fb{t}+1,t]}$ the concatenation of only the arms
between steps $\fb{t}+1$ and $t$.
Using the Woodbury matrix identity we can obtain a different expansion of the posterior
variance
\begin{align*}
\sigma^2_{t}(\bx_i)
&= \phivec{i}^\transp(\bA + \bB^\transp\bB)^{-1}\phivec{i}\\
&=\phivec{i}^\transp\bA^{-1}\phivec{i}
-\phivec{i}^\transp\bA^{-1}\bB^\transp(\bI+ \bB\bA^{-1}\bB^\transp)^{-1}\bB\bA^{-1}\phivec{i}.
\end{align*}
However this quantity is computationally expensive to compute. In particular,
updating the inverse $(\bI+ \bB\bA^{-1}\bB^\transp)^{-1}$ at each step
is time consuming. For this reason, we instead focus on the following lower
bound
\begin{align*}
\sigma^2_{t}(\bx_i)
&\geq \phivec{i}^\transp\bA^{-1}\phivec{i}
-\phivec{i}^\transp\bA^{-1}\bB^\transp\bB\bA^{-1}\phivec{i}\\
&= \phivec{i}^\transp\bA^{-1}\phivec{i}
-\sum_{j=\fb{t}+1}^t\phivec{i}^\transp\bA^{-1}\phivec{j}\phivec{j}^\transp\bA^{-1}\phivec{i}\\
&= \phivec{i}^\transp\bA^{-1}\phivec{i}
-\sum_{j=\fb{t}+1}^t(\phivec{i}^\transp\bA^{-1}\phivec{j})^2
= \sigma^2_{\fb{t}}(\bx_i)
-\sum_{j=\fb{t}+1}^t\kerfunc_{\fb{t}}^2(\bx_i, \bx_j),
\end{align*}
where we used the fact that $\bB\bA^{-1}\bB^\transp \succeq \mathbf{0}$ and therefore $(\bI+ \bB\bA^{-1}\bB^\transp)^{-1} \preceq \bI$,
the fact that $\bB^\transp\bB =\sum_{j=\fb{t}+1}^t\phivec{j}\phivec{j}^\transp$, and the definition
of $\kerfunc_{\fb{t}}^2(\bx_i, \bx_j)$.
After inversion this bound becomes
\begin{align*}
\frac{\sigma^2_{\fb{t}}(\bx_i)}{\sigma^2_{\fb{t}+B}(\bx_i)}
\leq 1 + \frac{\sum_{j=\fb{t}+1}^t\kerfunc_{\fb{t}}^2(\bx_i, \bx_j)}{\sigma^2_{\fb{t}}(\bx_i)}.
\end{align*}
Note also that thanks to Cauchy-Bunyakovsky-Schwarz's inequality we have
\begin{align*}
\kerfunc_{\fb{t}}^2(\bx_i, \bx_j)
= (\phivec{i}^\transp\bA^{-1}\phivec{j})^2
\leq  \phivec{i}^\transp\bA^{-1}\phivec{i}\phivec{j}^\transp\bA^{-1}\phivec{j}
= \sigma^2_{\fb{t}}(\bx_i)\sigma^2_{\fb{t}}(\bx_j).
\end{align*}
and therefore
the local-global bound is tighter of the global bound,
\begin{align*}
1 + \frac{\sum_{j=1}^B\kerfunc_{\fb{t}}^2(\bx_i, \bx_j)}{\sigma^2_{\fb{t}}(\bx_i)}
\leq 1 + \sum_{j=1}^B\sigma^2_{\fb{t}}(\bx_j)
\end{align*}
and falls back to the global bound in the worst case.

\textbf{Approximate posterior:} with the same derivation, but applied to the
approximate posterior $\asigma_t$ we obtain
\begin{align*}
\asigma^2_{t}(\bx_i)
&=\phivec{i}^\transp\abA^{-1}\phivec{i}
-\phivec{i}^\transp\abA^{-1}\wt{\bB}^\transp(\bI+ \wt{\bB}\abA^{-1}\wt{\bB}^\transp)^{-1}\wt{\bB}\abA^{-1}\phivec{i}\\
&\geq \phivec{i}^\transp\abA^{-1}\phivec{i}
-\phivec{i}^\transp\abA^{-1}\wt{\bB}^\transp\wt{\bB}\abA^{-1}\phivec{i},
\end{align*}
where once again we denote with $\abA = \bP\phimat{\fb{t}}^\transp\phimat{\fb{t}}\bP + \lambda\bI$
our approximation of $\bA$ and with $\wt{\bB} = \phimat{[\fb{t}+1,t]}\bP$ our approximation of $\bB$.
Inverting the bound we obtain
\begin{align*}
\frac{\asigma^2_{\fb{t}}(\bx_i)}{\asigma^2_{\fb{t}+B}(\bx_i)}
&\leq 1 + 
\frac{\sum_{j=\fb{t}+1}^t(\phivec{i}^\transp\abA^{-1}\aphivec{j})^2}{\asigma^2_{\fb{t}}(\bx_i)}.
    \end{align*}
Note that this is different from having $\sum_{j=\fb{t}+1}^t\wt{\kerfunc}_{\fb{t}}^2(\bx_i, \bx_j)$,
since
\begin{align*}
\wt{\kerfunc}_{\fb{t}}^2(\bx_i, \bx_j) = \phivec{i}^\transp\abA^{-1}\phivec{j} \neq \phivec{i}^\transp\abA^{-1}\aphivec{j}.
\end{align*}
Computing this upper bound requires only $\bigotime(\deff^3)$ per step to pre-compute
$\abA^{-1}\aphivec{i}$, and $\bigotime(\Narm\deff)$ time to finish the computation of $\phivec{i}^\transp\abA^{-1}\aphivec{j}$.
However, a new problem arises. With this new stopping criterion
we terminate the batch when
\begin{align*}
\max_i 
1 + 
\frac{\sum_{j=\fb{t}+1}^t(\phivec{i}^\transp\abA^{-1}\aphivec{j})^2}{\asigma^2_{\fb{t}}(\bx_i)}
= \wt{C},
\end{align*}
which guarantees 
$\frac{\asigma^2_{\fb{t}}(\bx_i)}{\asigma^2_{t}(\bx_i)} \leq \wt{C}$.
However this stopping condition cannot give us a similar bound on
$\frac{\sigma^2_{\fb{t}}(\bx_i)}{\sigma^2_{t}(\bx_i)}$.
Note that in
the previous global bound we used the fact that $\bA$ and $\abA$ are close
(\ie \Cref{thm:bbkb-complexity}) to bound $\frac{\sigma^2_{\fb{t}}(\bx_i)}{\sigma^2_{t}(\bx_i)}$. In this local-global bound
the approximate posterior is computed using $\abA + \wt{\bB}^\transp\wt{\bB}$
instead of $\bA + {\bB}^\transp{\bB}$,
and while $\abA$ and $\bA$ are close, nothing can be said on $\wt{\bB}^\transp\wt{\bB}$
and ${\bB}^\transp{\bB}$ because freezing the projection $\bP_{\fb{t}}$
results in a loss of guarantees.

To compensate, when moving to the approximate setting we will use a slightly
different terminating criterion for the batch. In particular we will
terminate based on a worst case between both possibilities
\begin{align*}
\frac{\asigma^2_{\fb{t}}(\bx_i)}{\min\{\phivec{i}^\transp(\abA + \wt{\bB}^\transp\wt{\bB})^{-1}\phivec{i} , \phivec{i}^\transp(\abA + \bB^\transp\bB)^{-1}\phivec{i}\}}
\end{align*}
It is easy to see that this termination rule
is more conservative than the normal ratio as
\begin{align*}
\frac{\asigma^2_{\fb{t}}(\bx_i)}{\min\{\phivec{i}^\transp(\abA + \wt{\bB}^\transp\wt{\bB})^{-1}\phivec{i} , \phivec{i}^\transp(\abA + \bB^\transp\bB)^{-1}\phivec{i}\}}
\geq
\frac{\asigma^2_{\fb{t}}(\bx_i)}{\phivec{i}^\transp(\abA + \wt{\bB}^\transp\wt{\bB})^{-1}\phivec{i}}
 =\frac{\asigma^2_{\fb{t}}(\bx_i)}{\asigma^2_{t}(\bx_i)}.
\end{align*}
Moreover, using the guarantees on $\asigma_t$ and $\sigma_t$ from \Cref{lem:exact-batch-rls-accuracy}, we also have
\begin{align*}
& \frac{\asigma^2_{\fb{t}}(\bx_i)}{\min\{\phivec{i}^\transp(\abA + \wt{\bB}^\transp\wt{\bB})^{-1}\phivec{i} , \phivec{i}^\transp(\abA + \bB^\transp\bB)^{-1}\phivec{i}\}}
\geq \frac{\asigma^2_{\fb{t}}(\bx_i)}{\phivec{i}^\transp(\abA + \bB^\transp\bB)^{-1}\phivec{i}}\\
&\geq \epsfrac^{-1}\frac{\asigma^2_{\fb{t}}(\bx_i)}{\phivec{i}^\transp(\bA + \bB^\transp\bB)^{-1}\phivec{i}}
\geq \epsfrac^{-1}\frac{\asigma^2_{\fb{t}}(\bx_i)}{\sigma^2_{t}(\bx_i)}
\geq \epsfrac^{-2}\frac{\sigma^2_{\fb{t}}(\bx_i)}{\sigma^2_{t}(\bx_i)}.
\end{align*}
Therefore, to provide guarantees on both exact and approximate ratios it is sufficient
to choose a stopping condition such that
\begin{align*}
&\frac{\asigma^2_{\fb{t}}(\bx_i)}{\min\{\phivec{i}^\transp(\abA + \wt{\bB}^\transp\wt{\bB})^{-1}\phivec{i} , \phivec{i}^\transp(\abA + \bB^\transp\bB)^{-1}\phivec{i}\}}\\
&=\max\left\{
\frac{\asigma^2_{\fb{t}}(\bx_i)}{\phivec{i}^\transp(\abA + \wt{\bB}^\transp\wt{\bB})^{-1}\phivec{i}},
\frac{\asigma^2_{\fb{t}}(\bx_i)}{\phivec{i}^\transp(\abA + \bB^\transp\bB)^{-1}\phivec{i}}\right\}\\
&\leq \max\left\{
1 + \frac{\phivec{i}^\transp\abA^{-1}\wt{\bB}^\transp\wt{\bB}\abA^{-1}\phivec{i}}{\asigma^2_{\fb{t}}(\bx_i)},
1 + \frac{\phivec{i}^\transp\abA^{-1}\bB^\transp\bB\abA^{-1}\phivec{i}}{\asigma^2_{\fb{t}}(\bx_i)}\right\}\\
&= \max\left\{
1 + \frac{\sum_{j=\fb{t}+1}^t(\phivec{i}^\transp\abA^{-1}\aphivec{j})^2 }{\asigma^2_{\fb{t}}(\bx_i)},
1 + \frac{\sum_{j=\fb{t}+1}^t(\phivec{i}^\transp\abA^{-1}\phivec{j})^2}{\asigma^2_{\fb{t}}(\bx_i)}
\right\}.
\end{align*}
Finally, note that while both $\frac{\asigma^2_{\fb{t}}(\bx_i)}{\phivec{i}^\transp(\abA + \wt{\bB}^\transp\wt{\bB})^{-1}\phivec{i}}$
and $\frac{\asigma^2_{\fb{t}}(\bx_i)}{\phivec{i}^\transp(\abA + \bB^\transp\bB)^{-1}\phivec{i}}$ could be the larger
element in the $\max$ (\ie one does not dominate the other),
after we upper bound $\frac{\asigma^2_{\fb{t}}(\bx_i)}{\phivec{i}^\transp(\abA + \wt{\bB}^\transp\wt{\bB})^{-1}\phivec{i}}
\leq 1 + \frac{\phivec{i}^\transp\abA^{-1}\wt{\bB}^\transp\wt{\bB}\abA^{-1}\phivec{i}}{\asigma^2_{\fb{t}}(\bx_i)}$
and $\frac{\asigma^2_{\fb{t}}(\bx_i)}{\phivec{i}^\transp(\abA + \bB^\transp\bB)^{-1}\phivec{i}}
\leq 1 + \frac{\phivec{i}^\transp\abA^{-1}\bB^\transp\bB\abA^{-1}\phivec{i}}{\asigma^2_{\fb{t}}(\bx_i)}$
we do have that one dominates the other. In other words, one of the two bounding operations
is looser. In particular, 
let us denote with $\wb{\bA}$ the operator such that
\begin{align*}
\wb{\bA} &= \bP\phimat{\fb{t}}^\transp\phimat{\fb{t}}\bP + \lambda\bP\\
\abA &= 
\bP\phimat{\fb{t}}^\transp\phimat{\fb{t}}\bP + \lambda\bI
= \bP\phimat{\fb{t}}^\transp\phimat{\fb{t}}\bP + \lambda\bP + \lambda\bP^{\bot}
= \wb{\bA} + \lambda\bP^{\bot}.
\end{align*}
Moreover, note that $\abA^{-1} = \wb{\bA}^{-1} + \bP^{\bot}/\lambda$, and that
$\bB\wb{\bA}^{-1}\bB^\transp = \wt{\bB}\wb{\bA}^{-1}\wt{\bB}^\transp$.
Then
\begin{align*}
\phivec{i}^\transp\abA^{-1}\aphivec{j}
= \phivec{i}^\transp\wb{\bA}^{-1}\aphivec{j} + \phivec{i}^\transp\bP^{\bot}\aphivec{j}/\lambda
= \aphivec{i}^\transp\wb{\bA}^{-1}\aphivec{j} + 0,
\end{align*}
while
\begin{align*}
\phivec{i}^\transp\abA^{-1}\phivec{j}
&= \phivec{i}^\transp\wb{\bA}^{-1}\phivec{j} + \phivec{i}^\transp\bP^{\bot}\phivec{j}/\lambda\\
&= \aphivec{i}^\transp\wb{\bA}^{-1}\aphivec{j} + \phivec{i}^\transp\bP^{\bot}\phivec{j}/\lambda\\
&= \aphivec{i}^\transp\wb{\bA}^{-1}\aphivec{j} + (\phivec{i}^\transp\phivec{j} - \aphivec{i}^\transp\aphivec{j})/\lambda
\geq \phivec{i}^\transp\abA^{-1}\aphivec{j},
\end{align*}
and therefore $\phivec{i}^\transp\abA^{-1}\phivec{j}$ dominates $\phivec{i}^\transp\abA^{-1}\aphivec{j}$.
Putting all together we obtain that the local-global bound terminates a batch
when
\begin{align*}
1 + \frac{\sum_{j=\fb{t}+1}^t(\phivec{i}^\transp\abA^{-1}\phivec{j})^2}{\asigma^2_{\fb{t}}(\bx_i)}
= 1 + \frac{\sum_{j=\fb{t}+1}^t(\wt{\kerfunc}_{\fb{t}}(\bx_i,\bx_j))^2}{\asigma^2_{\fb{t}}(\bx_i)}
\leq \wt{C},
\end{align*}
which, w.h.p.~as in \Cref{lem:exact-batch-rls-accuracy}, gives us the guarantee
both that $\frac{\asigma^2_{\fb{t}}(\bx_i)}{\asigma^2_{t}(\bx_i)} \leq \wt{C}$ and
$\frac{\sigma^2_{\fb{t}}(\bx_i)}{\sigma^2_{t}(\bx_i)} \leq \epsfrac^2\wt{C}$.

Note that since $\wb{\bA}$ is completely contained in $\Ran(\bP)$, we can
compute $\aphivec{i}^\transp\wb{\bA}^{-1}\aphivec{j}$ directly using the embedded
arms. In particular, we only need to store the pre-computed $\wb{\bA}^{-1}$
at the beginning of the batch, apply it to $\aphivec{j}$ in $\bigotime(\deff^2)$
time, and then apply $\wb{\bA}^{-1}\aphivec{j}$ to each $\aphivec{i}$
in $\bigotime(\Narm\deff)$ time. Similarly, computing $(\phivec{i}^\transp\phivec{j} - \aphivec{i}^\transp\aphivec{j})/\lambda$
can be done in $\bigotime(\Narm\deff)$ time.
 \section{Proofs from \Cref{sec:bbkb}}\label{sec:proof-regret}
\subsection{Complexity analysis (proof of \Cref{thm:bbkb-complexity})}
We restate \Cref{thm:bbkb-complexity} for completeness.
\bbkbcomplexity*

\begin{proof} The proof will be divided in three parts, one for each of the statements.

\textbf{Bounding $|\coldict_t|$.} The first part of the result concerns space guarantees for $\coldict_t$.
Like in the proof of \Cref{lem:bbkb-rls-accuracy}, we simply need to
show that the conditions outlined in \citespecific{Thm.~1}{calandriello_2019_coltgpucb}
are satisfied.
Let us consider
again a step $t' > \fb{t}$ where we perform a resparsification
(\ie be the beginning of the following batch) such that
$\fb{t'-1} = \fb{t}$ and $\fb{t'} = t'$.
Conversely from \Cref{lem:bbkb-rls-accuracy},
where we had to show that our inclusion probabilities $\wt{p}_{\fb{t}}$ were
not much smaller than $\sigma_{\fb{t'}}^2$,
here we have to show that they are not much larger than
$\sigma_{\fb{t'}}^2$.
This is because our goal is to sample $\coldict_{\fb{t'}}$
according to $\sigma_{\fb{t'}}^2$, and if our sampling
probabilities $\wt{p}_{\fb{t}} \propto \asigma_{\fb{t}} \propto \sigma_{\fb{t}}$
are much larger than necessary we are going to wastefully include a number
of points larger than necessary. Since \bbkb gets computationally heavy
if the dictionary gets too large, we want to prove that this does not happen
w.h.p.

We begin by invoking \Cref{lem:bbkb-rls-accuracy} to bound
$\asigma_{\fb{t}} \leq 3\sigma_{\fb{t}}$. The second step is to split the
quantity of interest in two parts: one from $\fb{t}$ until the end of the
batch $\fb{t'}-1$, and the crucial step from $\fb{t'} - 1$ to $\fb{t}$
\begin{align*}
\asigma_{\fb{t}}^2(\bx, \coldict_{\fb{t}})
\leq 3\sigma_{\fb{t}}^2(\bx)
= 3\overbracket{\frac{\sigma_{\fb{t}}^2(\bx)}{\sigma_{\fb{t'}-1}^2(\bx)}}^{(a)}
\overbracket{\frac{\sigma_{\fb{t'}-1}^2(\bx)}{\sigma_{\fb{t'}}^2(\bx)}}^{(b)}\sigma_{\fb{t'}}^2(\bx).
\end{align*}
Since $\fb{t}$ and $\fb{t'}-1$ are both in the same batch, we can use
\bbkb's batch termination condition and \Cref{lem:exact-batch-rls-accuracy} to bound $(a)$
as $\sigma_{\fb{t}}^2(\bx)/\sigma_{\fb{t'}-1}^2(\bx) \leq 3\wt{C}$.
However, $(b)$ crosses the batch boundaries and does not satisfy the terminating
condition. Instead, we will re-use the worst-case guarantees of
\Cref{lem:ratio-exact-rls} to bound the single step increase as
\begin{align*}
\sigma_{\fb{t'}-1}^2(\bx)/\sigma_{\fb{t'}}^2(\bx)
\leq (1 + \sigma_{\fb{t'}}^2(\bx))\sigma_{\fb{t'}}^2(\bx)
\leq (1 + \kappa^2/\lambda)\sigma_{\fb{t'}}^2(\bx),
\end{align*}
where we used the fact that the posterior variance can never exceed
$\kappa^2/\lambda$, as can be easily derived from the definition.
Putting it all together we have
\begin{align}\label{eq:whole-batch-evol-bound}
\asigma_{\fb{t}}^2(\bx, \coldict_{\fb{t}})
\leq 3\sigma_{\fb{t}}^2(\bx)
\leq 3\cdot3\wt{C}\cdot(1+\kappa^2/\lambda)\cdot\sigma_{\fb{t'}}^2(\bx)
= 9\wt{C}(1+\kappa^2/\lambda)\sigma_{\fb{t'}}^2(\bx),
\end{align}
and our overestimate error constant is $9\wt{C}(1+\kappa^2/\lambda)$,
which when plugged into \citespecific{Thm.~1}{calandriello_2019_coltgpucb} gives us
\begin{align*}
|\coldict_t|
\leq 9\wt{C}(1+\kappa^2/\lambda) \cdot 9(1 + \kappa^2/\lambda)\qbar\deff(\bX_t)
= 81\wt{C}(1+\kappa^2/\lambda)^2\qbar\deff(\bX_t).
\end{align*}

\textbf{Bounding the total number of resparsifications.} The most expensive
operation that \bbkb can perform is the GP resparsification, and to guarantee
low amortized runtime we now prove
that we do not do it too frequently. For this, we will leverage the terminating
condition of each batch, since a resparsification is triggered only at the end of
each batch.

In particular, we know that if \bbkb resparsifies at step $t$, such that $\fb{t} = t$.
Then we have that, to not have triggered a resparsification, up to step $t-1$ we have $1 + \sum_{s=\fb{t-1} + 1}^{t-1} \asigma_{\fb{t-1}}^2(\bx_s, \coldict_{\fb{t-1}}) \leq \wt{C}$, while we have the opposite
inequality
$\wt{C} < 1 + \sum_{s=\fb{t-1} + 1}^{\fb{t}} \asigma_{\fb{t-1}}^2(\bx_s, \coldict_{\fb{t-1}})$
if we include the last term $\asigma_{\fb{t-1}}^2(\bx_{\fb{t}}, \coldict_{\fb{t-1}})$.
Moreover, we have one of such inequalities
for each batch in the optimization process. Indicating the number of batches
with $B$, and summing over all the inequalities
\begin{align*}
B\wt{C} \leq B + \sum_{t=1}^{T} \asigma_{\fb{t}}^2(\bx_t, \coldict_{\fb{t}})\indfunc\{t \neq \fb{t}\} + \asigma_{\fb{t-1}}^2(\bx_t, \coldict_{\fb{t-1}})\indfunc\{t = \fb{t}\},
\end{align*}
where we have used the indicator function $\indfunc\{\cdot\}$ to differentiate between batch construction steps and resparsification steps
since at the resparsification step we are still using the posterior only \wrt the previous choiches
$\fb{t-1}$, and more importantly the old dictionary $\coldict_{\fb{t-1}}$, since
the resparsification happens only after the check. However, the only thing
that matters to be able to apply \Cref{lem:bbkb-rls-accuracy} is that
the subscript of the posterior $\asigma_{\fb{t}}$ and of the dictionary $\coldict_{\fb{t}}$ coincide,
so we can further upper bound
\begin{align*}
B\wt{C} \leq B + 3\sum_{t=1}^{T} \sigma_{\fb{t}}^2(\bx_t)\indfunc\{t \neq \fb{t}\} + \sigma_{\fb{t-1}}^2(\bx_t)\indfunc\{t = \fb{t}\}.
\end{align*}
Finally, we again exploit the bound
$\sigma_{\fb{t-1}}^2(\bx_t) \leq 3\wt{C}(1 + \kappa^2/\lambda)\sigma_{t}^2(\bx_t)$,
we derived in \Cref{eq:whole-batch-evol-bound}
for the evolution of RLS across a whole batch to bound the elements in the summation
where $t = \fb{t}$, and apply \Cref{lem:exact-batch-rls-accuracy} to the elements
where $t \neq \fb{t}$. We obtain
\begin{align*}
B\wt{C}
&
\leq B + 3\sum_{t=1}^{T} \sigma_{\fb{t}}^2(\bx_t)\indfunc\{t \neq \fb{t}\} + \sigma_{\fb{t-1}}^2(\bx_t)\indfunc\{t = \fb{t}\}\\
&
\leq B + 3\sum_{t=1}^{T} 3\wt{C}\sigma_{t}^2(\bx_t)\indfunc\{t \neq \fb{t}\} + 3\wt{C}(1 + \kappa^2/\lambda)\sigma_{t}^2(\bx_t)\indfunc\{t = \fb{t}\}\\
&
\leq B +  9\wt{C}(1 + \kappa^2/\lambda)\sum_{t=1}^{T} \sigma_{t}^2(\bx_t).
\end{align*}
Reshuffling terms and normalizing we obtain
\begin{align*}
B\leq \frac{\wt{C}}{\wt{C}-1}9(1 + \kappa^2/\lambda)\sum_{t=1}^{T} \sigma_{t}^2(\bx_t),
\end{align*}
and using the fact that $\sum_{t=1}^{T} \sigma_{t}^2(\bx_t) \leq \logdet(\bK_T/\lambda + \bI) \leq \bigotime(\deff(\bX_T)\polylog(T)) = \abigotime(\deff(\bX_T))$
from \citespecific{Lem.~3}{calandriello_2017_icmlskons}, we obtain our result.

\textbf{Complexity analysis.} Now that we have a bound on the size of the
dictionary, and on the frequency of the resparsifications, we only need to
quantify how much each operation costs and amortize it over $T$ iterations.

The resparsification steps are more computationally intensive.
Resampling the new $\coldict_{\fb{t+1}}$ takes $\bigotime(\min\{\Narm, t\})$,
as we reuse the variances computed at the beginning of the batch.
Given the new embedding function $\embfunc_{\fb{t+1}}(\cdot)$, we must first
recomputing the embeddings for all arms in $\bigotime(\Narm m_t^2 + m_t^3)$,
and then update all variances in $\bigotime(\Narm m_t^2 + m_t^3)$.
Finally, updating the means takes $\bigotime(t m_t + m_t^3)$
time. Overall, a resparsification step requires $\bigotime(\Narm m_t^2 + m_t^3 + t m_t)$,
since in all cases of interest $m_t \leq \deff \ll \Narm$.

In each non-resparsification step, updating the variances requires 
$\bigotime(m_t^2)$ to update the inverse of $\bV_t$ and $\bigotime(m_t^2)$
for each $\asigma_t(\bx_i)$ updated. While the updated actions can be as large
as $\bigotime(\Narm)$, lazy evaluations usually require to update
just a few entries of $\wt{\bu}_t$. 

Using again $B$ to indicate the number of batches during the optimization,
\ie the number of resparsifications, the overall complexity of the algorithm
is thus $\bigotime(\sum_{t=1}^T \Narm m_t^2 + \max_{t=1}^T B(\Narm m_t^2 + m_t^3 + t m_t))$.
Using the dictionary size guarantees of \Cref{thm:bbkb-complexity} we can further
upper bound this to $\abigotime(B(\Narm \deff^2 + \deff^3 + T\deff) + T\Narm \deff^2)$,
and using the bound on resparsifications that we just derived we obtain the final complexity
$\abigotime(T \Narm \deff^2 +\deff^4)$ where we used the fact that $\deff \leq \abigotime(T)$.
\end{proof}

\subsection{Regret analysis (proof of \Cref{thm:main-regret})}
We will leverage the following result from \cite{calandriello_2019_coltgpucb}.
This is a direct rewriting of their statement with two small modifications.
First we express the statement in terms of confidence intervals
on the function $f(\bx)$ rather than in their feature-space view of the GPs.
Second, we do not upper bound $\logdet(\bK_t/\lambda + \bI) \leq
\bigotime(\log(t)\sum_{s=1}^t\asigma_t^2(\bx_s))$. \citet{calandriello_2019_coltgpucb}
use this upper bound for computational reasons, but as we will see we can
obtain a tighter (\ie without the $\log(t)$ factor) alternative bound that is still efficient to compute.
\begin{proposition}[{\citespecific{App.~D, Thm.~9}{calandriello_2019_coltgpucb}}]\label{prop:bkb-confidence-interval}
Under the same assumptions of \Cref{thm:main-regret},
with probability at least $1 - \delta$ and for all $\bx_i \in \armset$ and $\fb{t} \geq 1$
\begin{align*}
\wt{\mu}_{\fb{t}}(\bx_i, \coldict_{\fb{t}}) - \beta_{\fb{t}}\asigma_{\fb{t}}(\bx_i, \coldict_{\fb{t}})
\leq f(\bx_i) \leq \wt{\mu}_{\fb{t}}(\bx_i, \coldict_{\fb{t}}) + \beta_{\fb{t}}\asigma_{\fb{t}}(\bx_i, \coldict_{\fb{t}})
\end{align*}
with
\begin{align*}
\beta_{\fb{t}} &\eqdef
2\xi\sqrt{\logdet(\bK_{\fb{t}}/\lambda + \bI) + \log\left({1}/{\delta}\right)}
+ \left(1 + \sqrt{2}\right)\sqrt{\lambda}\fnorm
\end{align*}
\end{proposition}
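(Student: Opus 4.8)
The plan is to re-derive this as a self-normalized, OFUL-style confidence bound \citep{abbasi2011improved} adapted to the \nystrom projection, since the statement is a rephrasing of Theorem~9 of \citet{calandriello_2019_coltgpucb}. I would work in the feature-space view of \Cref{sec:proof-ratio-bound}: write $f(\bx) = \featmap(\bx)^\transp\bw^\star$ with $\normsmall{\bw^\star}_\rkhs = \normsmall{f}_\rkhs \le \fnorm$, let $\bP$ be the orthogonal projection onto the span of $\{\featmap(\bx) : \bx \in \coldict_{\fb{t}}\}$, and set $\abA \eqdef \bP\phimat{\fb{t}}^\transp\phimat{\fb{t}}\bP + \lambda\bI$, so that $\asigma_{\fb{t}}^2(\bx,\coldict_{\fb{t}}) = \featmap(\bx)^\transp\abA^{-1}\featmap(\bx)$ and $\wt{\mu}_{\fb{t}}(\cdot,\coldict_{\fb{t}})$ is the ridge regressor in the projected features (up to the projection subtleties already handled around \Cref{lem:rls-proj-residual}). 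Using $\by_{\fb{t}} = \phimat{\fb{t}}\bw^\star + \boldsymbol{\eta}_{\fb{t}}$ I would then split $f(\bx_i) - \wt{\mu}_{\fb{t}}(\bx_i,\coldict_{\fb{t}})$ into a stochastic term of the form $\featmap(\bx_i)^\transp\abA^{-1}\bP\phimat{\fb{t}}^\transp\boldsymbol{\eta}_{\fb{t}}$, linear in the noise, and a deterministic bias term arising jointly from the ridge regularization $\lambda\bI$ and from the replacement of $\featmap$ by $\bP\featmap$.

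For the stochastic term, Cauchy--Schwarz in the $\abA^{-1}$-norm bounds it by $\asigma_{\fb{t}}(\bx_i,\coldict_{\fb{t}})\cdot\normsmall{\bP\phimat{\fb{t}}^\transp\boldsymbol{\eta}_{\fb{t}}}_{\abA^{-1}}$, and the self-normalized martingale inequality of \citet{abbasi2011improved}, applied to $\sum_{s \le \fb{t}}\bP\featmap(\bx_s)\eta_s$ with predictable covariance $\abA$, controls the second factor by a $2\xi$-multiple of $\sqrt{\logdet(\bI + \tfrac1\lambda\bP\phimat{\fb{t}}^\transp\phimat{\fb{t}}\bP) + \log(1/\delta)}$ (after the routine constant bookkeeping, where the extra factor relative to the nominal $\sqrt2$ absorbs the passage from projected to exact covariances); by Cauchy interlacing the log-determinant of the compression is at most $\logdet(\bK_{\fb{t}}/\lambda + \bI)$, so this is at most the first summand of $\beta_{\fb{t}}$. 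For the bias term I would separate the standard ridge-regularization bias, which contributes a $\sqrt{\lambda}\,\normsmall{\bw^\star}$ multiple of $\asigma_{\fb{t}}(\bx_i,\coldict_{\fb{t}})$, from the projection bias; the latter is exactly where one invokes that $\coldict_{\fb{t}}$ is $(\tfrac12,\lambda)$-accurate, i.e.\ \Cref{lem:bbkb-rls-accuracy}, to argue that $\abA$ is within a constant factor of the exact regularized covariance $\phimat{\fb{t}}^\transp\phimat{\fb{t}} + \lambda\bI$, which converts the projection error back into another $O(\sqrt{\lambda})$ multiple of $\asigma_{\fb{t}}(\bx_i,\coldict_{\fb{t}})\normsmall{\bw^\star}$. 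Adding the two bias contributions gives the $(1+\sqrt2)\sqrt{\lambda}\fnorm$ factor, and collecting the pieces yields $|f(\bx_i) - \wt{\mu}_{\fb{t}}(\bx_i,\coldict_{\fb{t}})| \le \beta_{\fb{t}}\asigma_{\fb{t}}(\bx_i,\coldict_{\fb{t}})$ for all $\bx_i$ and all batch starts $\fb{t}\ge 1$ simultaneously, since the only randomness used is the single self-normalized bound together with the dictionary-accuracy event of \Cref{lem:bbkb-rls-accuracy}, which a union bound combines into one probability-$(1-\delta)$ statement.

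Finally I would check the two cosmetic changes relative to \citet{calandriello_2019_coltgpucb}. First, they phrase the bound as $\normsmall{\featmap(\bx_i)^\transp(\bw^\star - \wt{\bw}_{\fb{t}})} \le \beta_{\fb{t}}\asigma_{\fb{t}}(\bx_i,\coldict_{\fb{t}})$; since $f(\bx_i) = \featmap(\bx_i)^\transp\bw^\star$ and $\wt{\mu}_{\fb{t}}(\bx_i,\coldict_{\fb{t}}) = \featmap(\bx_i)^\transp\wt{\bw}_{\fb{t}}$ by definition, this is literally the displayed two-sided inequality. Second, they further upper bound $\logdet(\bK_{\fb{t}}/\lambda + \bI) \le \bigotime(\log \fb{t})\sum_{s\le\fb{t}}\asigma_{\fb{s-1}}^2(\bx_s)$ for computational convenience, whereas here we stop at $\logdet(\bK_{\fb{t}}/\lambda + \bI)$, which by the matrix-determinant lemma equals $\sum_{s\le\fb{t}}\log(1 + \sigma_{s-1}^2(\bx_s))$ and is what later lets \Cref{thm:main-regret} use $\sum_s\log(1+3\asigma_{\fb{s-1}}^2(\bx_s))$ via \Cref{lem:bbkb-rls-accuracy} without the spurious $\log$ factor. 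The main obstacle is the projection-bias step: ensuring that freezing the representation onto the finite dictionary $\coldict_{\fb{t}}$ inflates the RKHS-norm term $\sqrt{\lambda}\fnorm$ by only a constant. This is precisely where $(\tfrac12,\lambda)$-accuracy of $\coldict_{\fb{t}}$ --- which \Cref{lem:bbkb-rls-accuracy} guarantees only at batch boundaries $\fb{t}$, exactly the steps at which the proposition is asserted --- is indispensable, and it is the one ingredient that genuinely goes beyond vanilla kernel-OFUL.
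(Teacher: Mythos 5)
The paper offers no proof of this statement: as the bracketed attribution indicates, it is imported wholesale from App.~D, Thm.~9 of \citet{calandriello_2019_coltgpucb}, and the only original content in the surrounding text is precisely the two modifications you identify at the end --- rephrasing the feature-space bound as a two-sided confidence interval on $f(\bx_i)$, and stopping at $\logdet(\bK_{\fb{t}}/\lambda+\bI)$ instead of the looser $\log(t)\sum_s\asigma^2$ bound (the paper then expands that log-determinant via Schur's identity immediately after the proposition, exactly as you anticipate). Your reconstruction of the underlying proof is the right architecture for the cited result: an OFUL-style split into a self-normalized noise term and a bias term, with the ridge bias contributing $\sqrt{\lambda}\fnorm$, the $(\tfrac12,\lambda)$-accuracy of $\coldict_{\fb{t}}$ from \Cref{lem:bbkb-rls-accuracy} converting the projection bias into the additional $\sqrt{2}\sqrt{\lambda}\fnorm$, and that accuracy event being the one genuinely non-OFUL ingredient. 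One technical caution: applying the self-normalized inequality of \citet{abbasi2011improved} directly to $\sum_{s\le\fb{t}}\bP\featmap(\bx_s)\eta_s$ ``with predictable covariance $\abA$'' is not licit as stated, because the projection $\bP$ is built from the dictionary at time $\fb{t}$ and hence $\bP\featmap(\bx_s)$ is not $\mathcal{F}_{s-1}$-measurable. The cited proof instead self-normalizes the exact martingale $\sum_s\featmap(\bx_s)\eta_s$ in the $(\phimat{\fb{t}}^\transp\phimat{\fb{t}}+\lambda\bI)^{-1}$-norm and only afterwards converts to the $\abA^{-1}$-norm via the accuracy guarantee, which is also what accounts for the constant $2\xi$ in place of the nominal $\sqrt{2}\xi$. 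Your parenthetical about absorbing ``the passage from projected to exact covariances'' is exactly this step, so the fix is implicit in your sketch, but the order of operations matters and should be made explicit.
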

We can bound $\logdet(\bK_{\fb{t}}/\lambda + \bI)$ as follows.
Consider $\bK_s$ as a block matrix split between the $s$-th column and row,
\ie the latest arm pulled, and all other $s-1$ rows and columns.
Then using Schur's determinant identity, we have that
\begin{align*}
\det(\bK_{s}/\lambda + \bI)
&= \det(\bK_{s-1}/\lambda + \bI)
\det\left(1 + \kerfunc(\bx_s, \bx_s) - \bk_{s-1}(\bx_s)^\transp(\bK_{s-1}/\lambda + \bI)^{-1}\bk_{s-1}(\bx_s)\right)\\
&= \det(\bK_{s-1}/\lambda + \bI)\left(1 + \sigma_{s-1}^{2}(\bx_s)\right).
\end{align*}
Combining this with the fact that $\sigma_{s-1}^{2}(\bx_s) \leq \sigma_{\fb{s-1}}^{2}(\bx_s)$,
and unrolling the product into a sum using the logarithm we obtain
\begin{align*}
\logdet(\bK_{\fb{t}}/\lambda + \bI)
= \sum_{s=1}^{\fb{t}}\log(1 + \sigma_{s-1}^2(\bx_s))
\leq \sum_{s=1}^{\fb{t}}\log(1 + \sigma_{\fb{s-1}}^2(\bx_s)).
\end{align*}
We can further upper bound $\sigma_{\fb{s-1}}^2(\bx_s)
\leq 3\asigma_{\fb{s-1}}^2(\bx_s, \coldict_{\fb{s-1}})$ using \Cref{lem:bbkb-rls-accuracy},
and obtain
\begin{align*}
\beta_{\fb{t}} \leq \wt{\beta}_{\fb{t}} \eqdef
2\xi\sqrt{\sum_{s=1}^{\fb{t}}\log\left(1 + 3\asigma_{\fb{s-1}}^2(\bx_s, \coldict_{\fb{s-1}})\right) + \log\left({1}/{\delta}\right)}
+ (1 + \sqrt{2})\sqrt{\lambda}\fnorm
\end{align*}
This gives us that at all steps $t$ where $t = \fb{t}$ (\ie right after a resparsification)
\begin{align*}
\wt{\mu}_{\fb{t}}(\bx_i, \coldict_{\fb{t}}) - \wt{\beta}_{\fb{t}}\asigma_{\fb{t}}(\bx_i, \coldict_{\fb{t}})
\leq f(\bx_i) \leq \wt{\mu}_{\fb{t}}(\bx_i, \coldict_{\fb{t}}) + \wt{\beta}_{\fb{t}}\asigma_{\fb{t}}(\bx_i, \coldict_{\fb{t}})
\end{align*}
We can bound the instantaneous regret $r_t = f(\bx_{*}) - f(\bx_t)$ as follows.
First we bound
\begin{align*}
f(\bx_{*})
&\leq \amu_{\fb{t}}(\bx_*, \coldict_{\fb{t}}) + \wt{\beta}_{\fb{t}}\asigma_{\fb{t}}(\bx_{*}, \coldict_{\fb{t}})\\
&\stackrel{(a)}{\leq} \amu_{\fb{t}}(\bx_*, \coldict_{\fb{t}}) + \wt{\beta}_{\fb{t}}\wt{C}\asigma_{t-1}(\bx_{*}, \coldict_{\fb{t}})\\
&\stackrel{(b)}{\leq} \amu_{\fb{t}}(\bx_{t}, \coldict_{\fb{t}}) + \wt{\beta}_{\fb{t}}\wt{C}\asigma_{t-1}(\bx_{t}, \coldict_{\fb{t}})
\end{align*}
where $(a)$ is due to \Cref{lem:approx-batch-rls-accuracy}, and $(b)$ is due to the greediness of $\bx_t$ \wrt $\wt{\bu}_{t}$.
Similarly, we can bound
\begin{align*}
f(\bx_t)
&\geq \amu_{\fb{t}}(\bx_t, \coldict_{\fb{t}}) - \wt{\beta}_{\fb{t}}\asigma_{\fb{t}}(\bx_{t}, \coldict_{\fb{t}})\\
&\geq \amu_{\fb{t}}(\bx_t, \coldict_{\fb{t}}) - \wt{\beta}_{\fb{t}}\wt{C}\asigma_{t-1}(\bx_{t}, \coldict_{\fb{t}}).
\end{align*}
Putting it together
\begin{align}\label{eq:intermediate-regret-proof-bound}
R_T &= \sum_{t=1}^{T} r_t = \sum_{t=1}^{T}f(\bx_{*}) - f(\bx_t)\nonumber\\
 &\leq \sum_{t=1}^{T} \amu_{\fb{t}}(\bx_t, \coldict_{\fb{t}}) + \wt{\beta}_{\fb{t}}\wt{C}\asigma_{t-1}(\bx_{t}, \coldict_{\fb{t}})
- \amu_{\fb{t}}(\bx_t, \coldict_{\fb{t}}) + \wt{\beta}_{\fb{t}}\wt{C}\asigma_{t-1}(\bx_{t}, \coldict_{\fb{t}})\nonumber\\
&= 2\sum_{t=1}^{T}\wt{\beta}_{\fb{t}}\wt{C}\asigma_{t-1}(\bx_{t}, \coldict_{\fb{t}})\nonumber\\
&\leq 2\wt{C}\wt{\beta}_{\fb{T}}\sum_{t=1}^{T}\asigma_{t-1}(\bx_{t}, \coldict_{\fb{t}}).
\end{align}
We first focus on bounding $\wt{\beta}_{\fb{T}} \leq \wt{\beta}_{T}$,
starting from bounding a part of it as
\begin{align*}
\sum_{s=1}^{T}\log\left(1 + 3\asigma_{\fb{s-1}}^2(\bx_s)\right)
&\stackrel{(a)}{\leq} 3\sum_{s=1}^{T}\asigma_{\fb{s-1}}^2(\bx_s)
\stackrel{(b)}\leq 9\sum_{s=1}^{T}\sigma_{\fb{s-1}}^2(\bx_s)
\stackrel{(c)}\leq 21\wt{C}\sum_{s=1}^{T}\sigma_{s-1}^2(\bx_s).
\end{align*}
where we used $(a)$ the fact that $\log(1 + x) \leq x$,
$(b)$ \Cref{lem:bbkb-rls-accuracy}, and $(c)$ \Cref{lem:exact-batch-rls-accuracy}.
Plugging it back into the definition of $\wt{\beta}_T$ we have
\begin{align*}
\wt{\beta}_T
&=
2\xi\sqrt{\sum_{s=1}^{\fb{T}}\log\left(1 + 3\asigma_{\fb{s-1}}^2(\bx_s, \coldict_{\fb{s-1}})\right) + \log\left({1}/{\delta}\right)}
+ (1 + \sqrt{2})\sqrt{\lambda}\fnorm\\
&\leq
2\xi\sqrt{21\wt{C}\sum_{s=1}^{T}\sigma_{s-1}^2(\bx_s) + \log\left({1}/{\delta}\right)}
+ (1 + \sqrt{2})\sqrt{\lambda}\fnorm
\end{align*}
Going back to \Cref{eq:intermediate-regret-proof-bound}, the summation $\sum_{t=1}^{T}\asigma_{t-1}(\bx_{t}, \coldict_{\fb{t}})$
can be also bounded as
\begin{align*}
\sum_{t=1}^{T}\asigma_{t-1}(\bx_{t}, \coldict_{\fb{t}})
&\stackrel{(a)}{\leq} \sqrt{T}\left(\sum_{t=1}^{T}\asigma_{t-1}^2(\bx_{t}, \coldict_{\fb{t}})\right)^{1/2}
\stackrel{(b)}\leq \sqrt{T}\left(\sum_{t=1}^{T}\asigma_{\fb{t-1}}^2(\bx_{t}, \coldict_{\fb{t}})\right)^{1/2}\\
&\stackrel{(c)}\leq \sqrt{3}\sqrt{T}\left(\sum_{t=1}^{T}\sigma_{\fb{t-1}}^2(\bx_{t})\right)^{1/2}
\stackrel{(d)}\leq 3\wt{C}\sqrt{T}\left(\sum_{t=1}^{T}\sigma_{t-1}^2(\bx_{t})\right)^{1/2},
\end{align*}
using $(a)$ Cauchy-Schwarz, $(b)$ the fact that
$\asigma_{t-1}^2(\bx_{t}) \leq \asigma_{\fb{t-1}}^2(\bx_{t})$
by \Cref{lem:approx-batch-rls-accuracy},
$(c)$ \Cref{lem:bbkb-rls-accuracy},
and $(d)$ \Cref{lem:exact-batch-rls-accuracy}.
Putting it all together
\begin{align*}
R_T
&
\leq 2\wt{C}\cdot \wt{\beta}_{\fb{T}}\cdot\sum_{t=1}^{T}\asigma_{t-1}(\bx_{t}, \coldict_{\fb{t}})\\
&
\leq 2\wt{C} \cdot \wt{\beta}_{T} \cdot 3\wt{C}\sqrt{T}\left(\sum_{t=1}^{T}\sigma_{t-1}^2(\bx_{t})\right)^{1/2}\\
&
\leq 2\wt{C} \cdot 
\left(2\xi\sqrt{21\wt{C}\sum_{t=1}^{T}\sigma_{t-1}^2(\bx_t) + \log\left({1}/{\delta}\right)}
+ (1 + \sqrt{2})\sqrt{\lambda}\fnorm\right)
\cdot 3\wt{C}\sqrt{T}\left(\sum_{t=1}^{T}\sigma_{t-1}^2(\bx_{t})\right)^{1/2}\\
&
\leq 2\wt{C} \cdot 
\left(2\xi\sqrt{21\wt{C}\sum_{t=1}^{T}\sigma_{t-1}^2(\bx_t)} + 2\xi\sqrt{\log\left({1}/{\delta}\right)}
+ (1 + \sqrt{2})\sqrt{\lambda}\fnorm\right)
\cdot 3\wt{C}\sqrt{T}\left(\sum_{t=1}^{T}\sigma_{t-1}^2(\bx_{t})\right)^{1/2}\\
&
\leq 55\wt{C}^2\sqrt{T} \cdot 
\left(\xi\sqrt{\wt{C}\sum_{t=1}^{T}\sigma_{t-1}^2(\bx_t)} + \xi\sqrt{\log\left({1}/{\delta}\right)}
+ \sqrt{\lambda}\fnorm\right)
\cdot \left(\sum_{t=1}^{T}\sigma_{t-1}^2(\bx_{t})\right)^{1/2}\\
&
\leq 55\wt{C}^2\sqrt{T} \cdot 
\left(\xi\sqrt{\wt{C}\sum_{t=1}^{T}\sigma_{t-1}^2(\bx_t)} + \xi\log\left({1}/{\delta}\right)
+ \sqrt{\lambda}\fnorm\right)
\cdot \left(\sum_{t=1}^{T}\sigma_{t-1}^2(\bx_{t})\right)^{1/2}\\
&\leq 55\wt{C}^2\cdot\wt{C}\cdot\sqrt{T}
\left(\xi\sum_{t=1}^{T}\sigma_{t-1}^2(\bx_{t}) + (\xi\log(1/\delta) + \fnorm)\sqrt{\lambda\sum_{t=1}^{T}\sigma_{t-1}^2(\bx_{t})}\right).
\end{align*}

 \section{Details on experiments}\label{sec:app_exp}
In this section we report extended results on the experiments, integrating \Cref{sec:exp} with more details.

\subsection{Abalone and Cadata}
For the experiments on the Abalone and Cadata datasets, the kernel used by the algorithms in \Cref{fig:rr_ac} and \Cref{fig:t_ac} are Gaussian kernels with bandwidth as reported in \Cref{tab:sigmas}.
\begin{table}[h]
\centering
\begin{tabular}{lllll}
 &  Abalone & Cadata &   \\
 Global-\bbkb & $\sigma = 17.5$ & $\sigma = 12.5$ &   \\
GlobalLocal-\bbkb & $\sigma = 17.5$ &  $\sigma = 12.5$&   \\
\bkb & $\sigma = 17.5$ &$\sigma = 12.5$  &   \\
\gpucb & $\sigma = 5$ & $\sigma = 12.5$ &  \\
\bgpucb & $\sigma = 12.5$ &  $\sigma = 12.5$&  \\
\bts &  $\sigma = 10$&  $\sigma = 10$&  \\
\end{tabular}\caption{Badwith of Gaussian Kernel used in the Abalone and Cadata experiments}\label{tab:sigmas}
\end{table}
The other free hyperparameters are $F=1, \delta=1/T, \qbar = 2$.
For all batched algorithms the batch size is chosen using the corresponding rule presented in the original paper (\ie, \Cref{lem:posterior-ratio-local-approx,lem:approx-batch-rls-accuracy} for \bbkb's variants and \Cref{p:std.dev.ratio} for \bgpucb), except for $\bts$ which does not have one, and that we run with fixed batch size$ = 30$. Moreover, $\bts$ struggled to converge in our experiments, and to reduce overexploration we divide $\beta_t$ by a factor of $10$ only for $\bts$.
We report in \Cref{fig:aapp} and \Cref{fig:capp} further experiments on the Abalone and Cadata dataset respectively. For each plot, we report regret ratio between each algorithm and a uniform policy, with each plot corresponding to a different bandwidth in the Gaussian kernel, which is shared between all algorithms. Notice how \bbkb remains robust for a wider range of $\sigma$.

\begin{figure}[h]
\minipage{0.33\textwidth}
\includegraphics[width=\linewidth]{./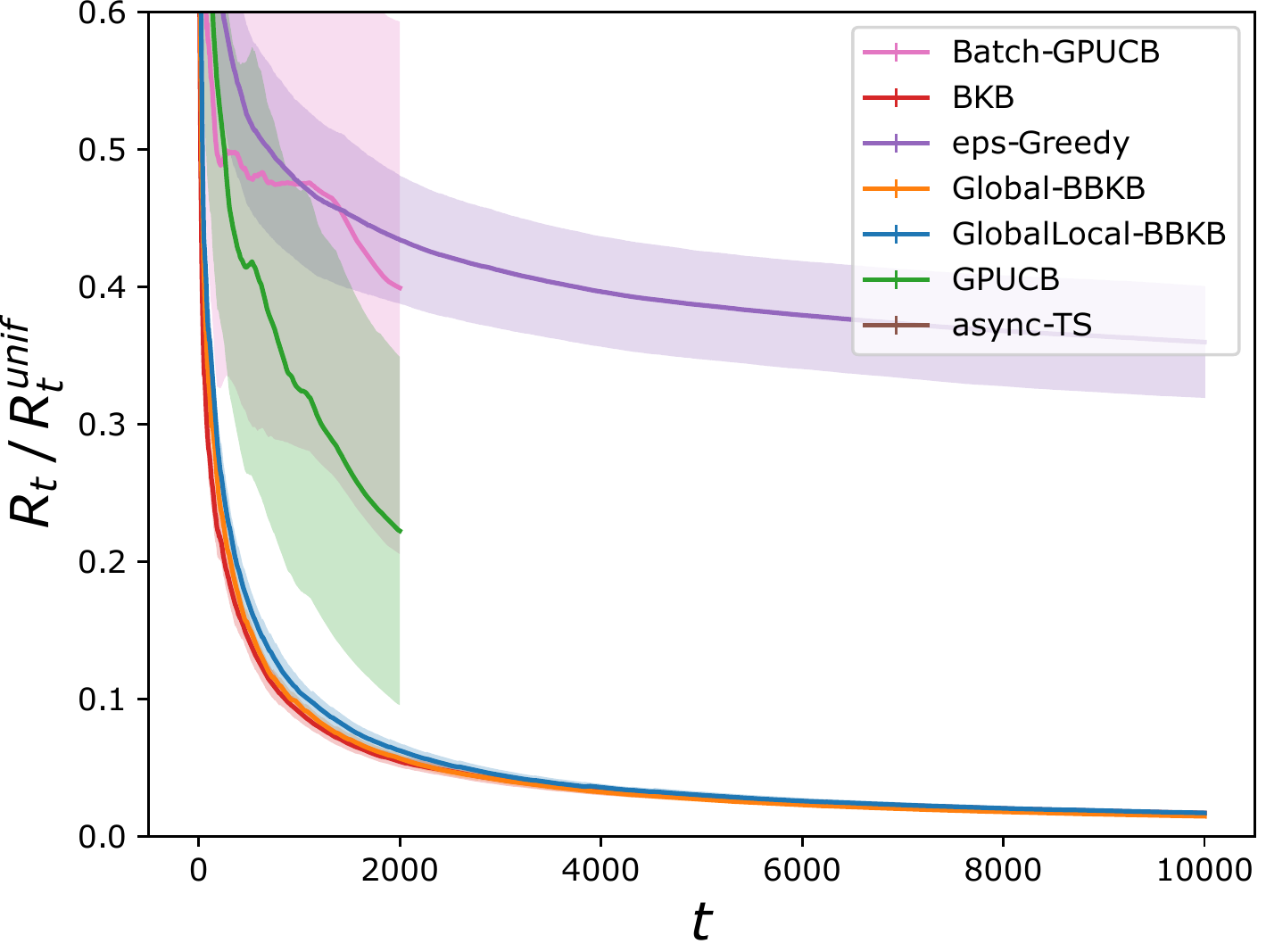}
\endminipage\hfill
\minipage{0.33\textwidth}
\includegraphics[width=\linewidth]{./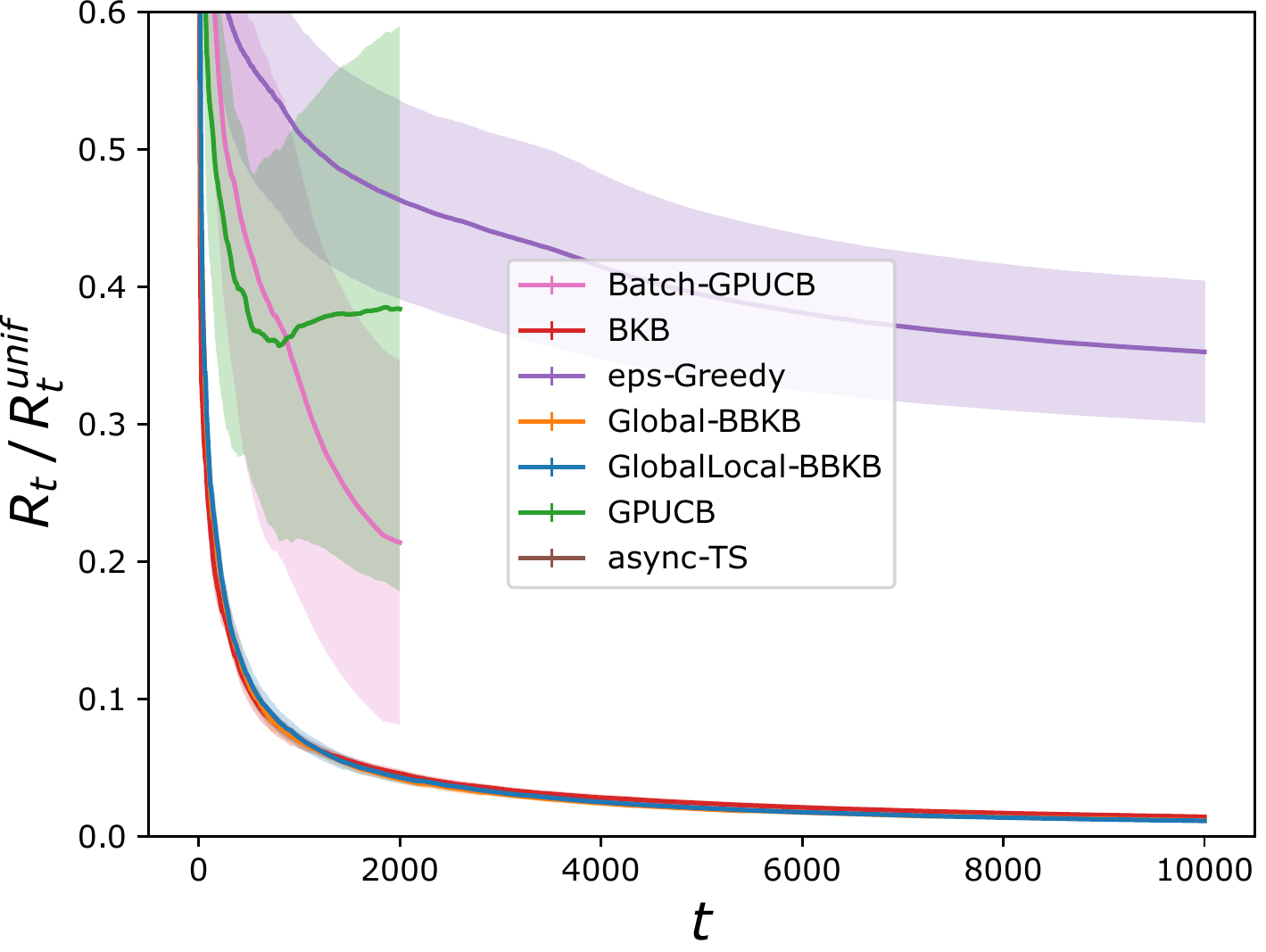}
\endminipage\hfill
\minipage{0.33\textwidth}
\includegraphics[width=\linewidth]{./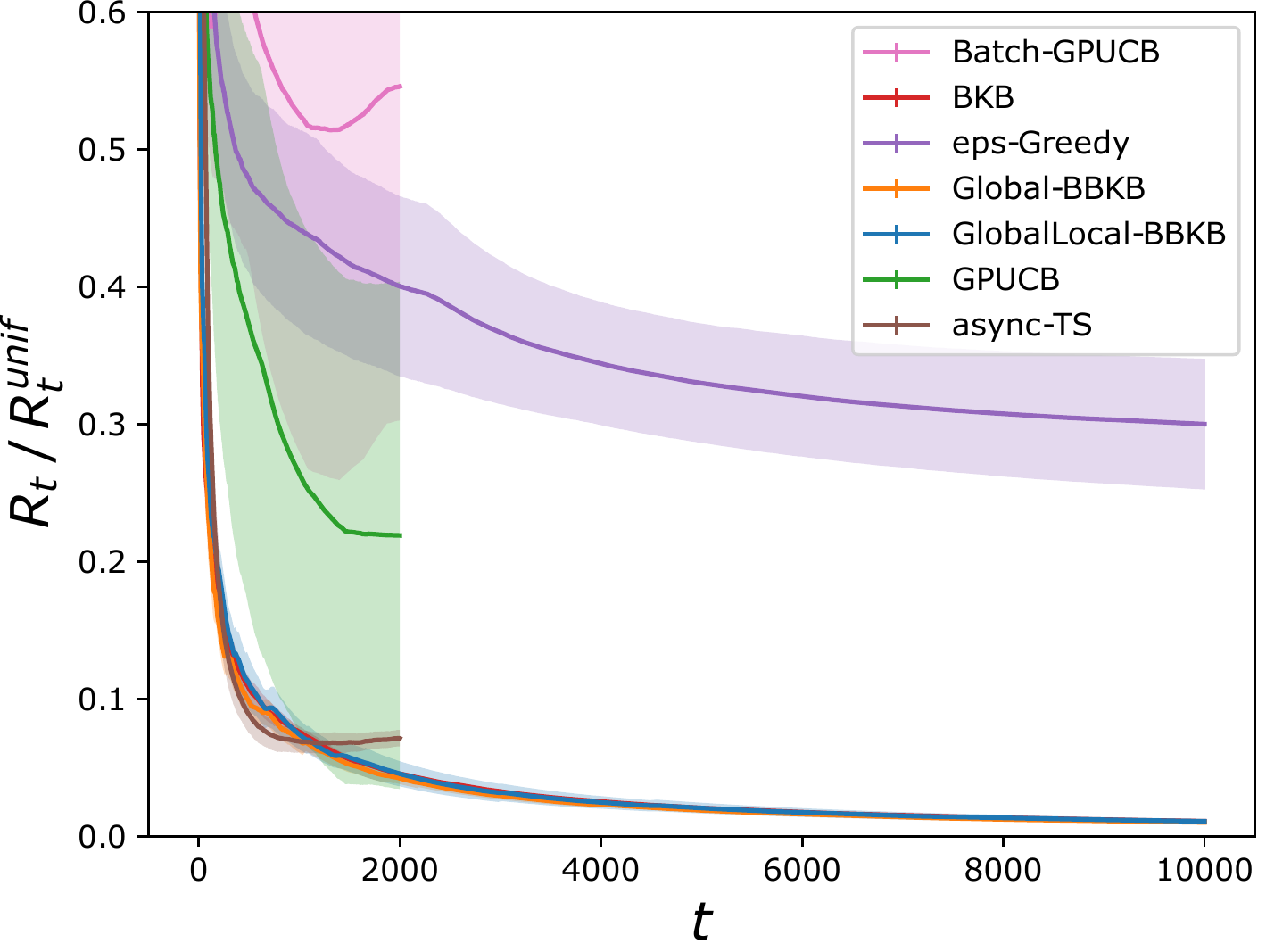}
\endminipage\hfill
\minipage{0.33\textwidth}
\includegraphics[width=\linewidth]{./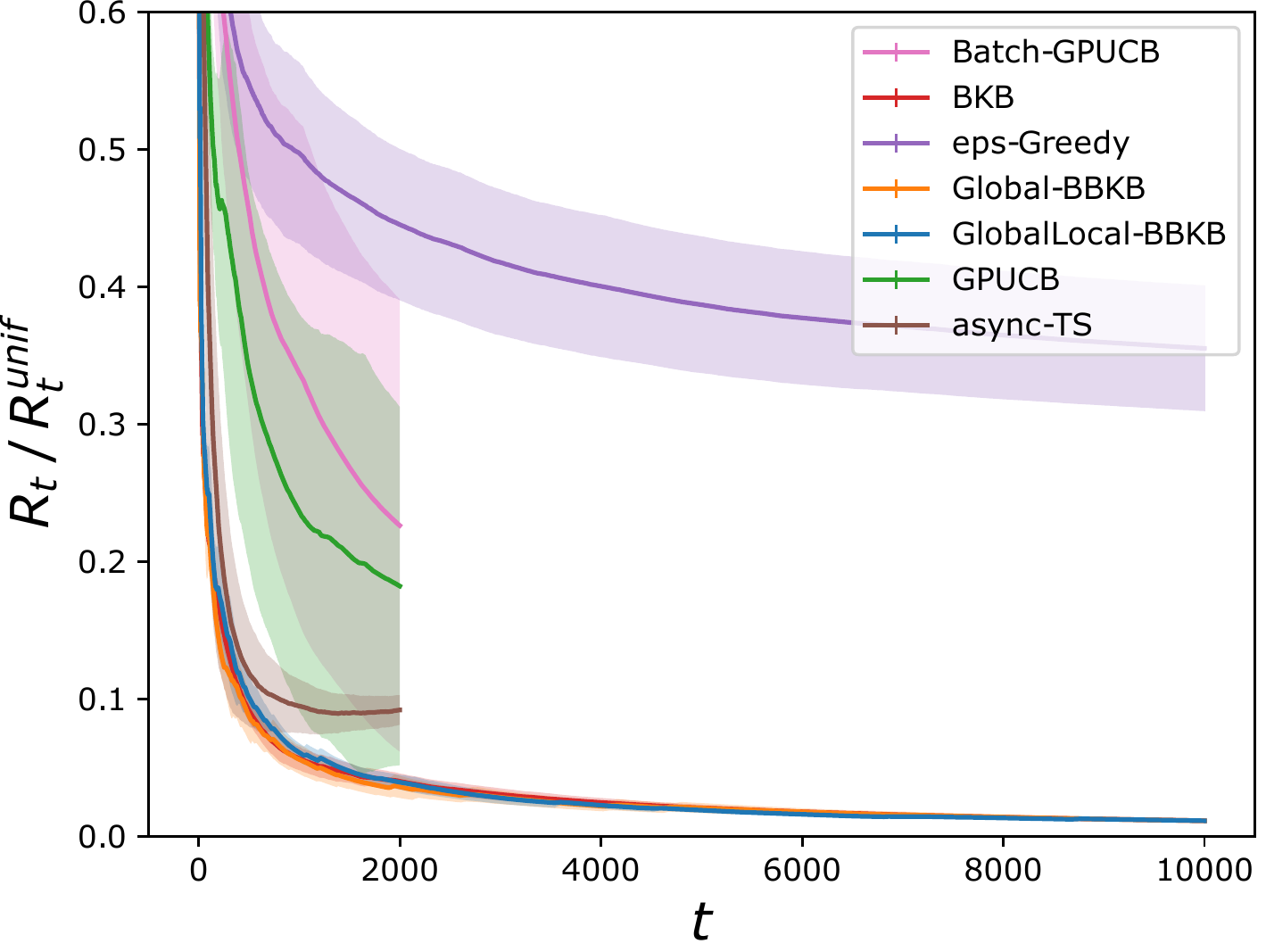}
\endminipage\hfill
\minipage{0.33\textwidth}
\includegraphics[width=\linewidth]{./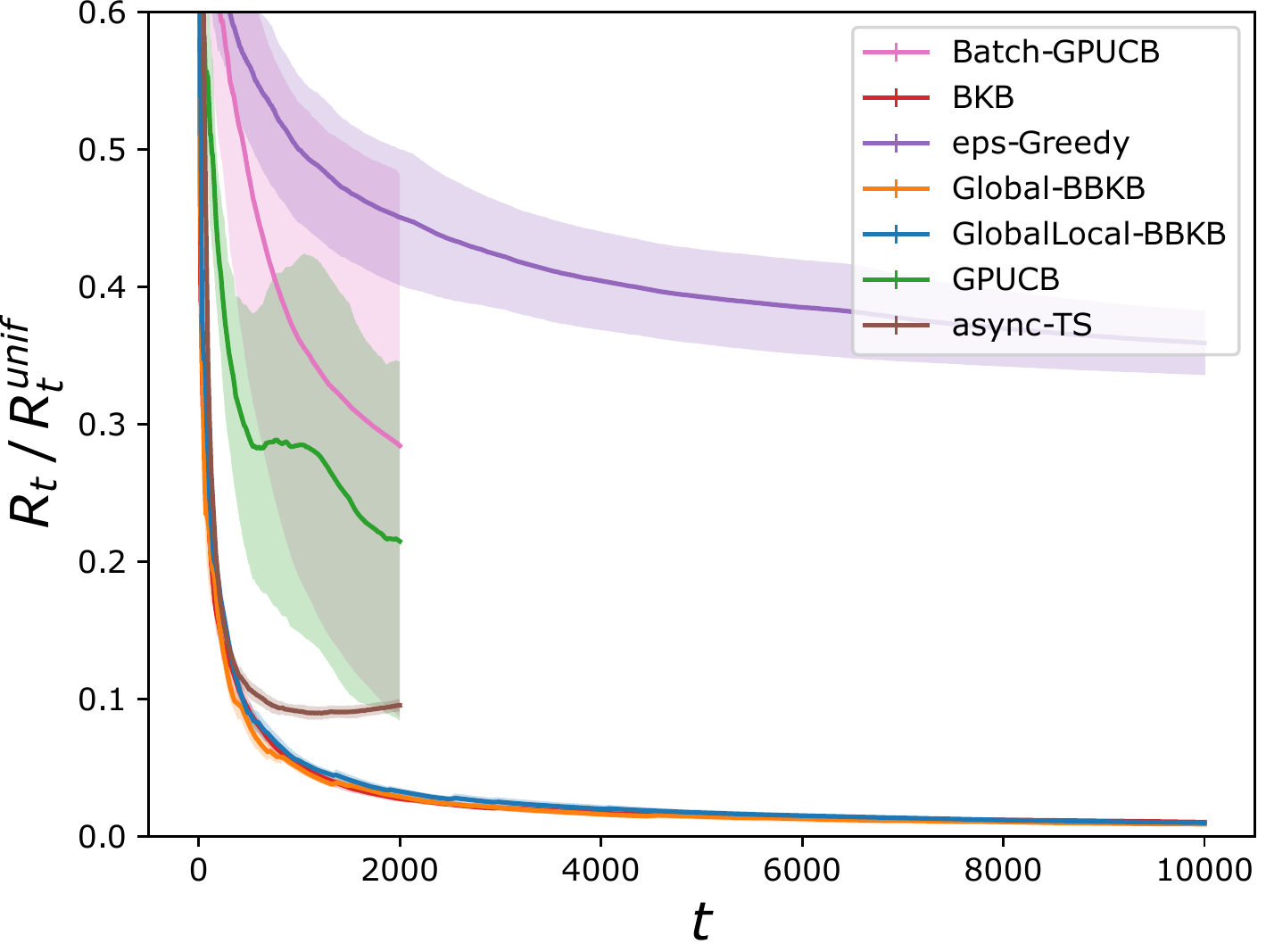}
\endminipage\hfill
\minipage{0.33\textwidth}
\includegraphics[width=\linewidth]{./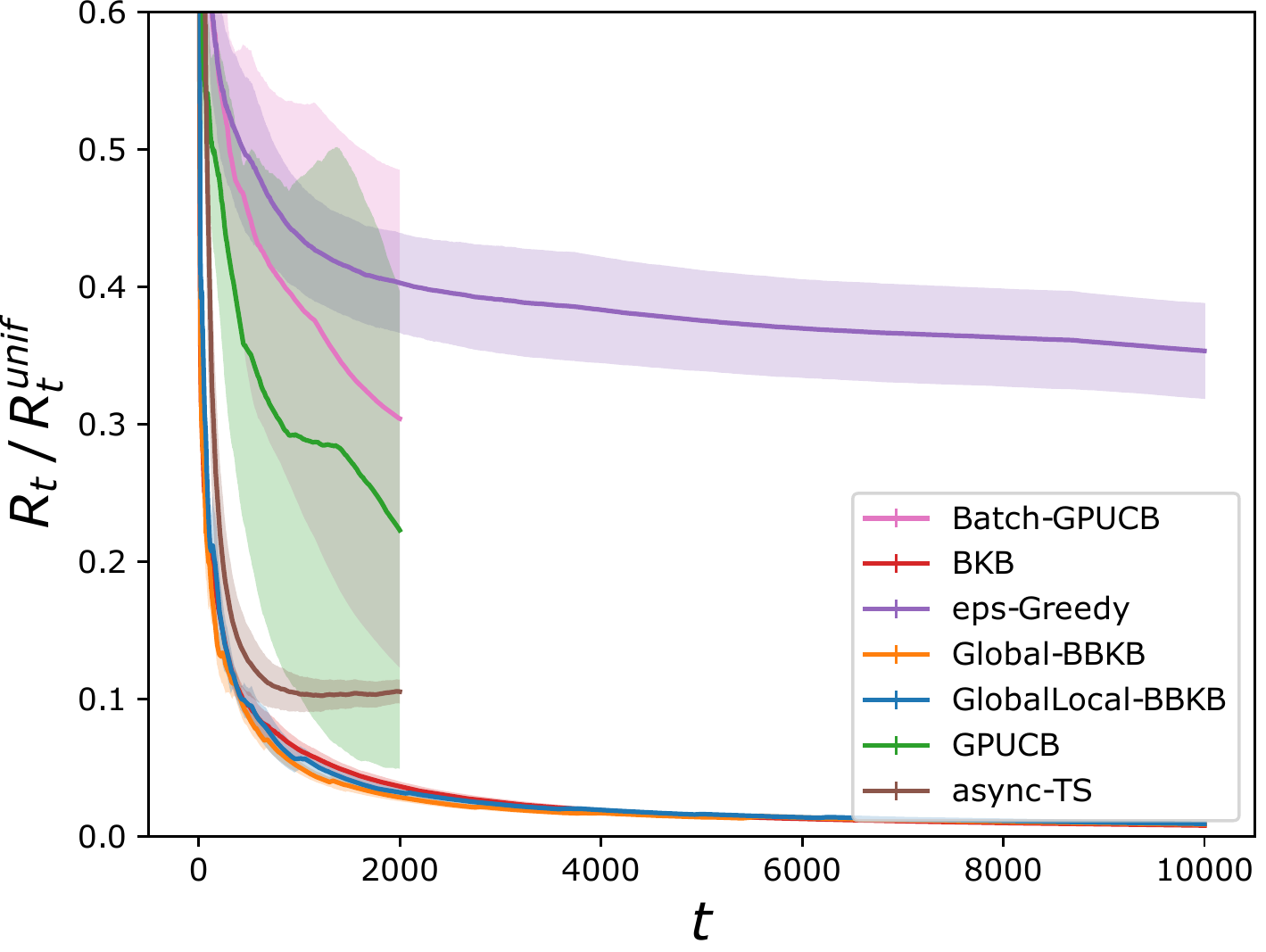}
\endminipage\hfill
\caption{Regret ratio for the Abalone dataset, with Gaussian kernel with, from left to right and from top to bottom, badwidths equal to $5, 7.5, 10, 12.5, 15, 17.5$}\label{fig:aapp}
\end{figure}

\begin{figure}[h]
\minipage{0.33\textwidth}
\includegraphics[width=\linewidth]{./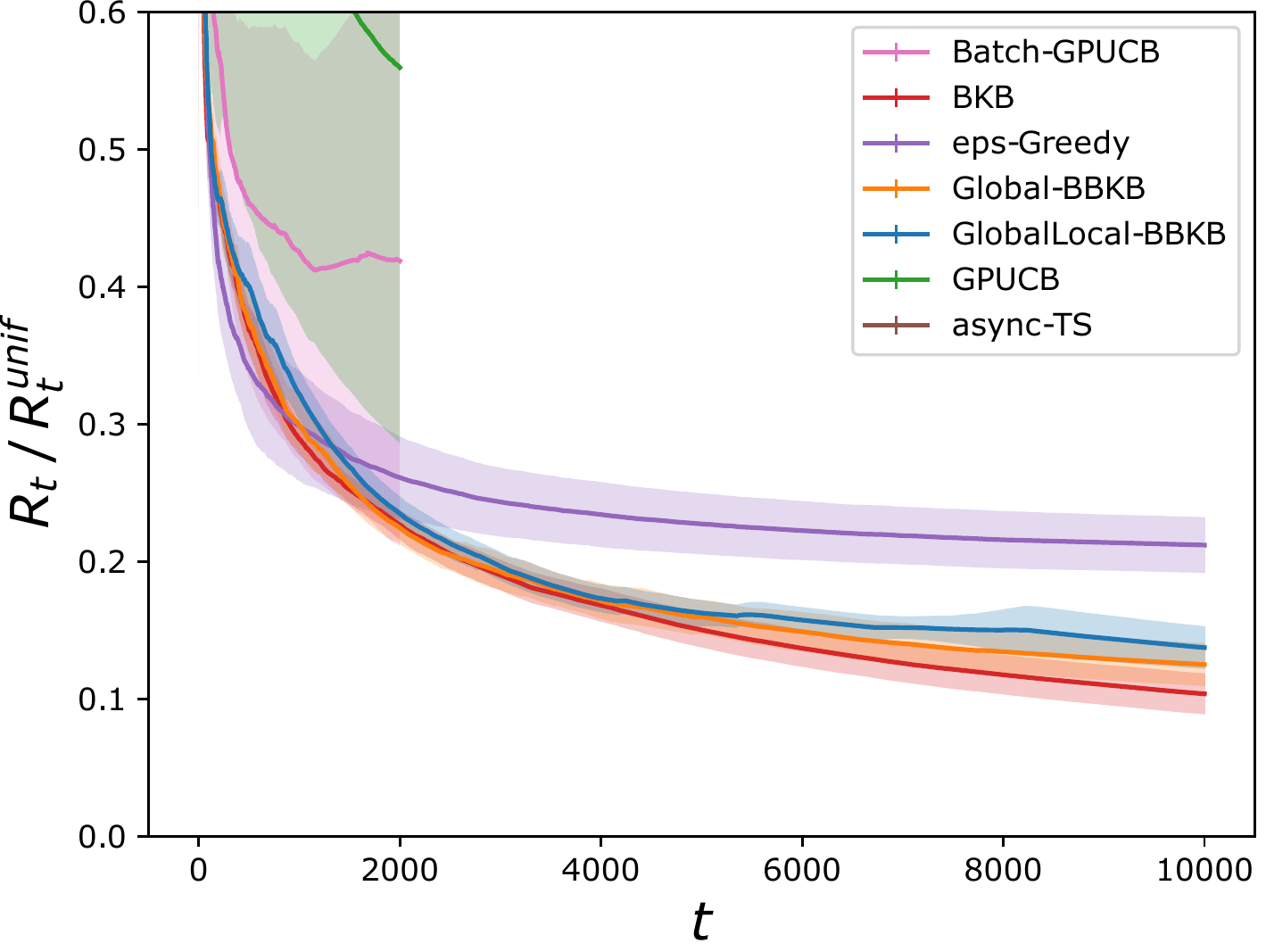}
\endminipage\hfill
\minipage{0.33\textwidth}
\includegraphics[width=\linewidth]{./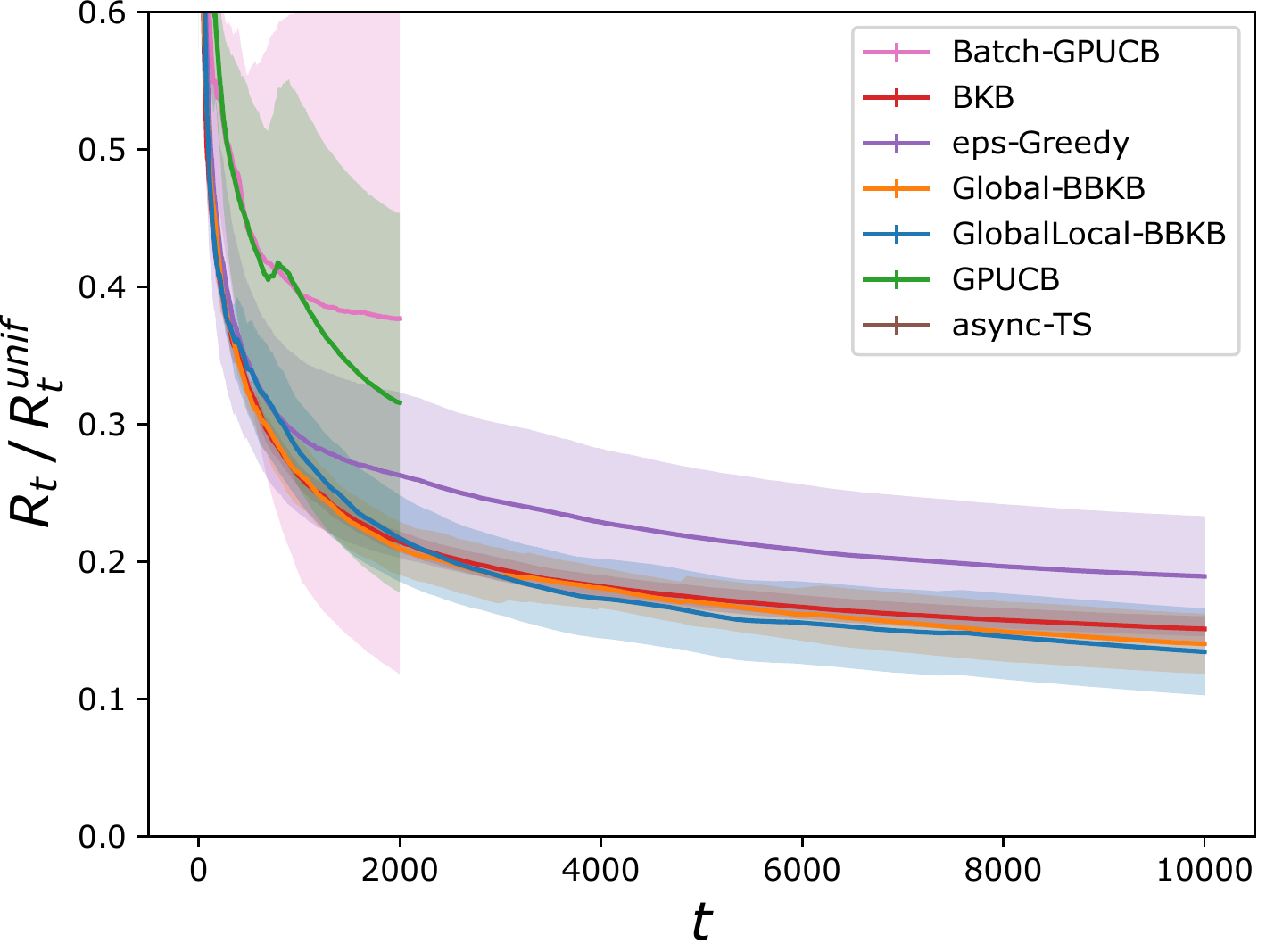}
\endminipage\hfill
\minipage{0.33\textwidth}
\includegraphics[width=\linewidth]{./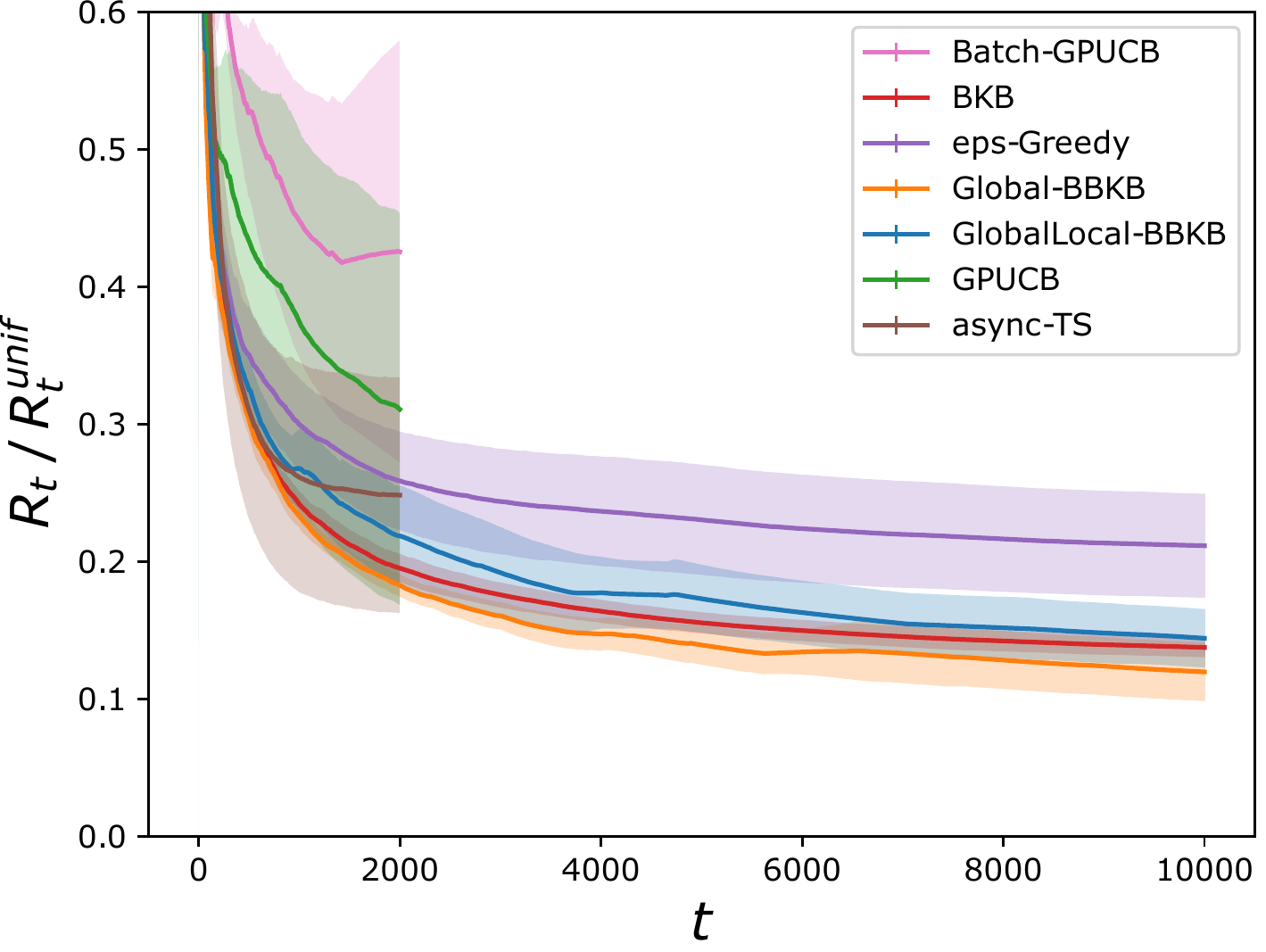}
\endminipage\hfill
\minipage{0.33\textwidth}
\includegraphics[width=\linewidth]{./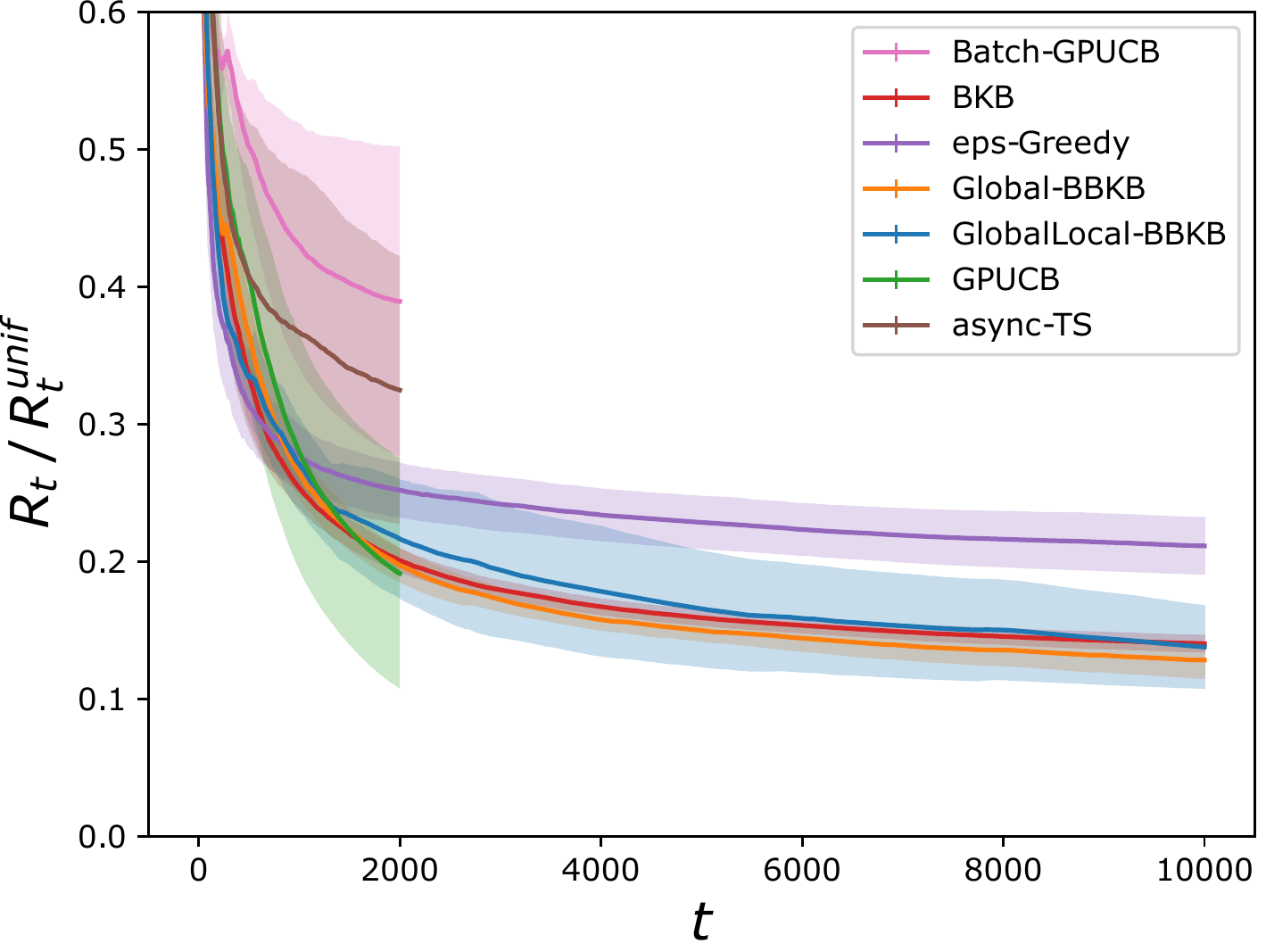}
\endminipage\hfill
\minipage{0.33\textwidth}
\includegraphics[width=\linewidth]{./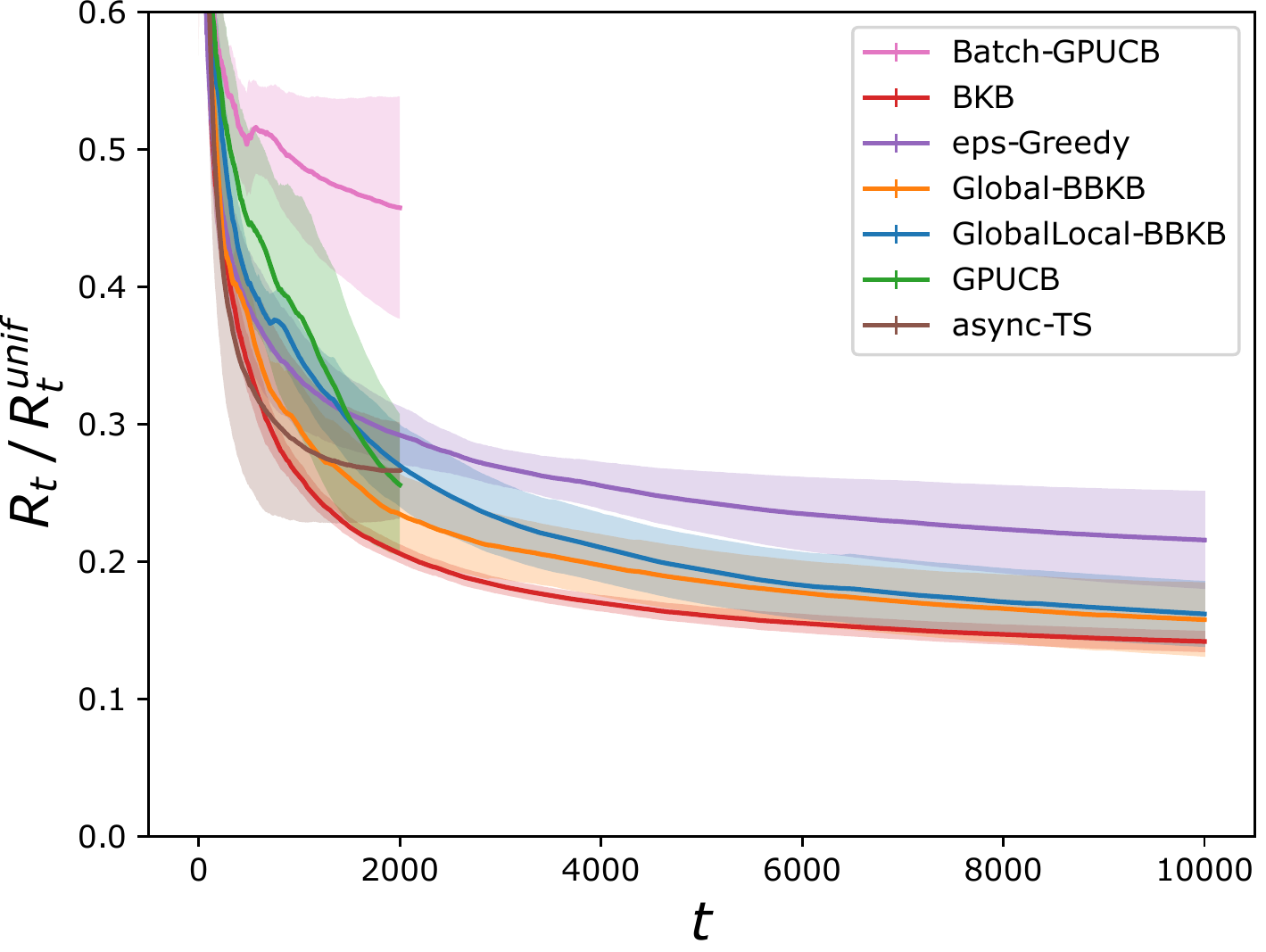}
\endminipage\hfill
\minipage{0.33\textwidth}
\includegraphics[width=\linewidth]{./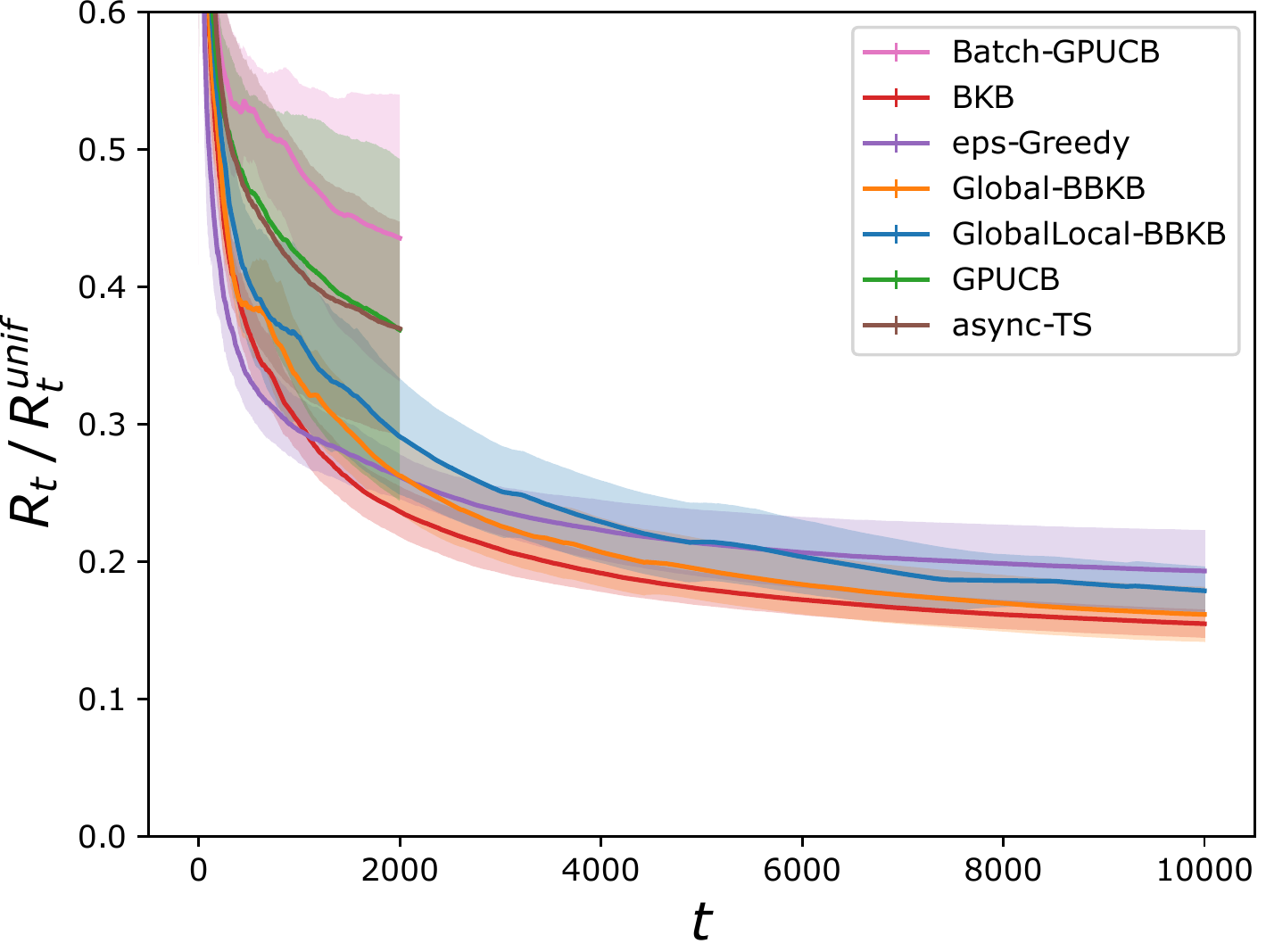}
\endminipage\hfill
\caption{Regret ratio for the Cadata dataset, with Gaussian kernel with, from left to right and from top to bottom, badwidths equal to $5, 7.5, 10, 12.5, 15, 17.5$}\label{fig:capp}
\end{figure}

\subsection{NAS-bench-101}

We use a subset of NAS-bench-101, generated using the following procedure. The whole NAS-bench-101 dataset contains architecture with 1 to 5 inner nodes, and up to 3 different kind of nodes ($3x3$ and $1x1$ convolutions, and $3x3$ max-pooling). We restrict ourselves to only architectures with exactly 4 inner nodes, and restrict the node type to only $3x3$ convolutions or max-pooling.

This kind of architectures are represented as the concatenation of a 15-dimensional vector, which is the flattened representation of the upper half of the network's adjacency matrix, and a 4-dimensional $\{0,1\}$ vector indicating whether each inner node is a convolution ($1$) or a max-pooling ($0$).

For the 15-dimensional vector, we remove the first and last feature (corresponding to connection from the input and to the output) as they are present in all architecture and result in a constant feature of 1 that is not influential for learning.
Finally, for each of these two halves of the representation, we renormalize each half separately to have at most unit norm and then concatenate the two halves. This strategy makes it so that each quantity (\ie similarity in adjacency or similarity in node type) carries roughly the same weight.

Both Global-\bbkb and GlobalLocal-\bbkb uses a Gaussian kernel with $\sigma = 125$ for the experiments with initialization, and $\sigma = 100$ for the experiment without initialization. And as for the other experiments $F=1, \delta=1/T, \qbar = 2$. The implementation of \regev is taken from \url{https://github.com/automl/nas_benchmarks}, and we leave the hyper-parameters to the default values chosen by the authors in \citet{ying2019bench} as optimal.
For completeness we report in \Cref{fig:napp} regret ratio, batch size, time and time with training of the experiments on NAS-bench-101 without initialization.

\begin{figure}[h]\centering
\minipage{0.47\textwidth}
\includegraphics[width=\linewidth]{./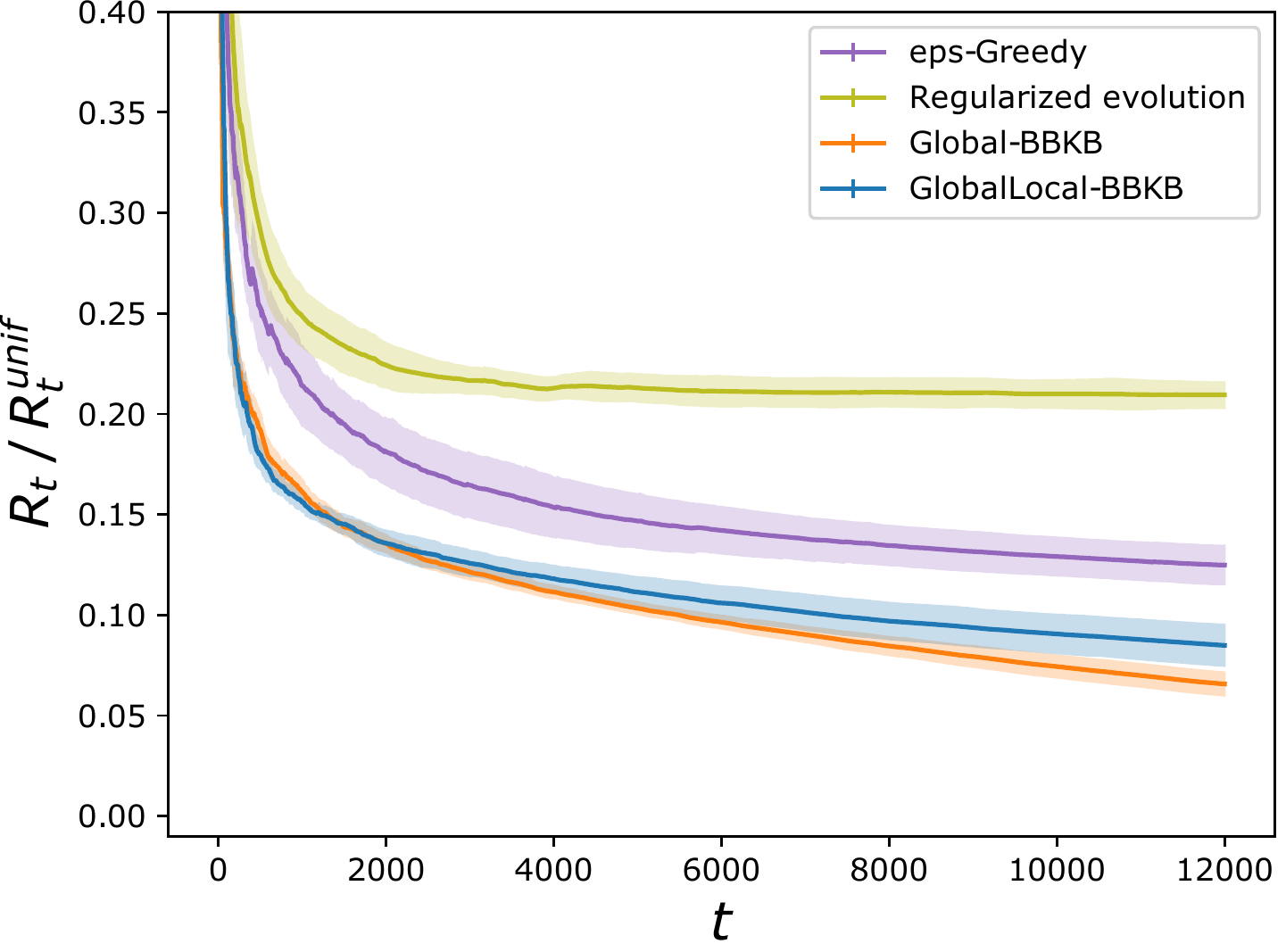}
\endminipage\hfill
\minipage{0.47\textwidth}
\includegraphics[width=\linewidth]{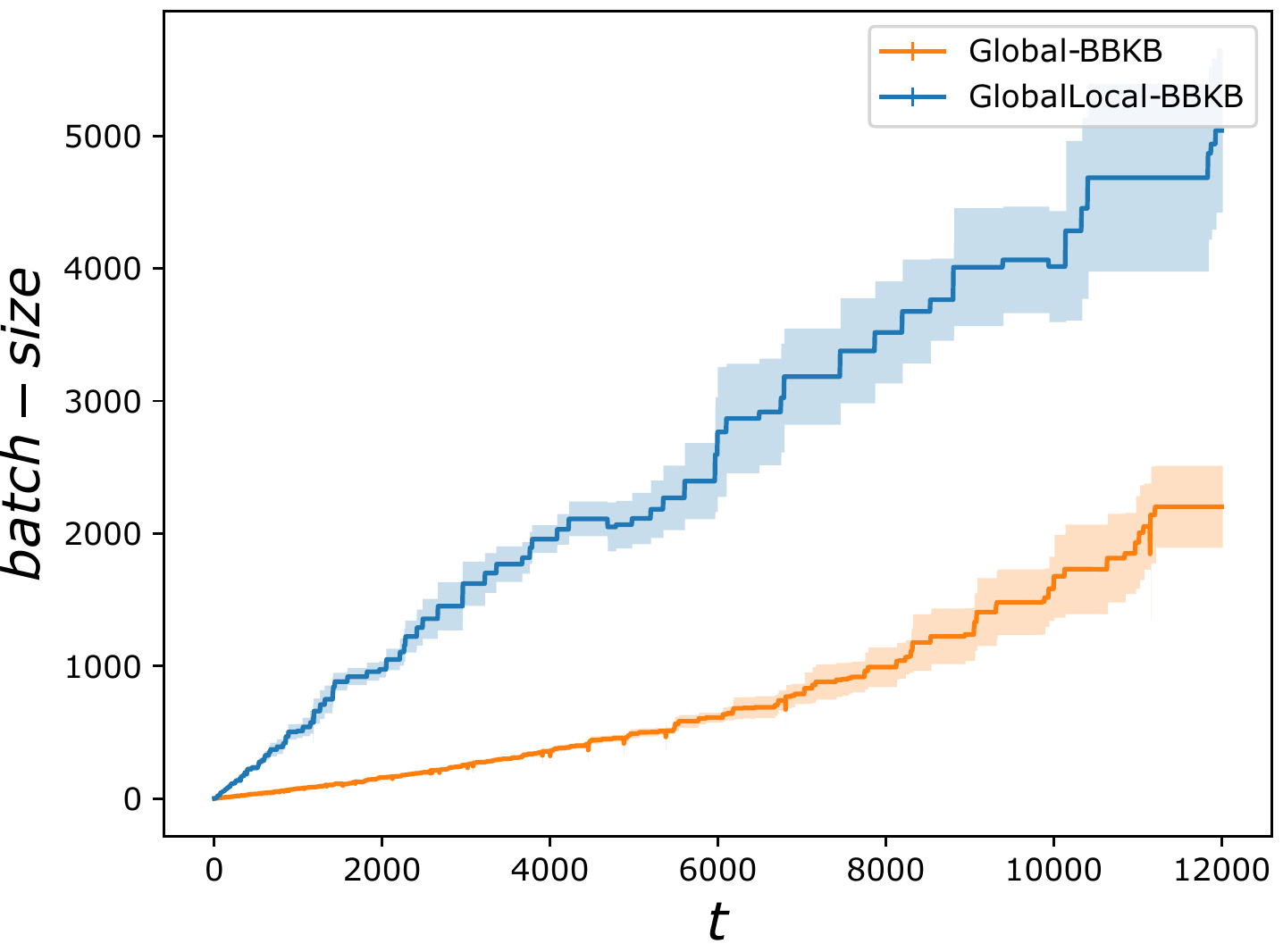}
\endminipage\hfill
\minipage{0.47\textwidth}
\includegraphics[width=\linewidth]{./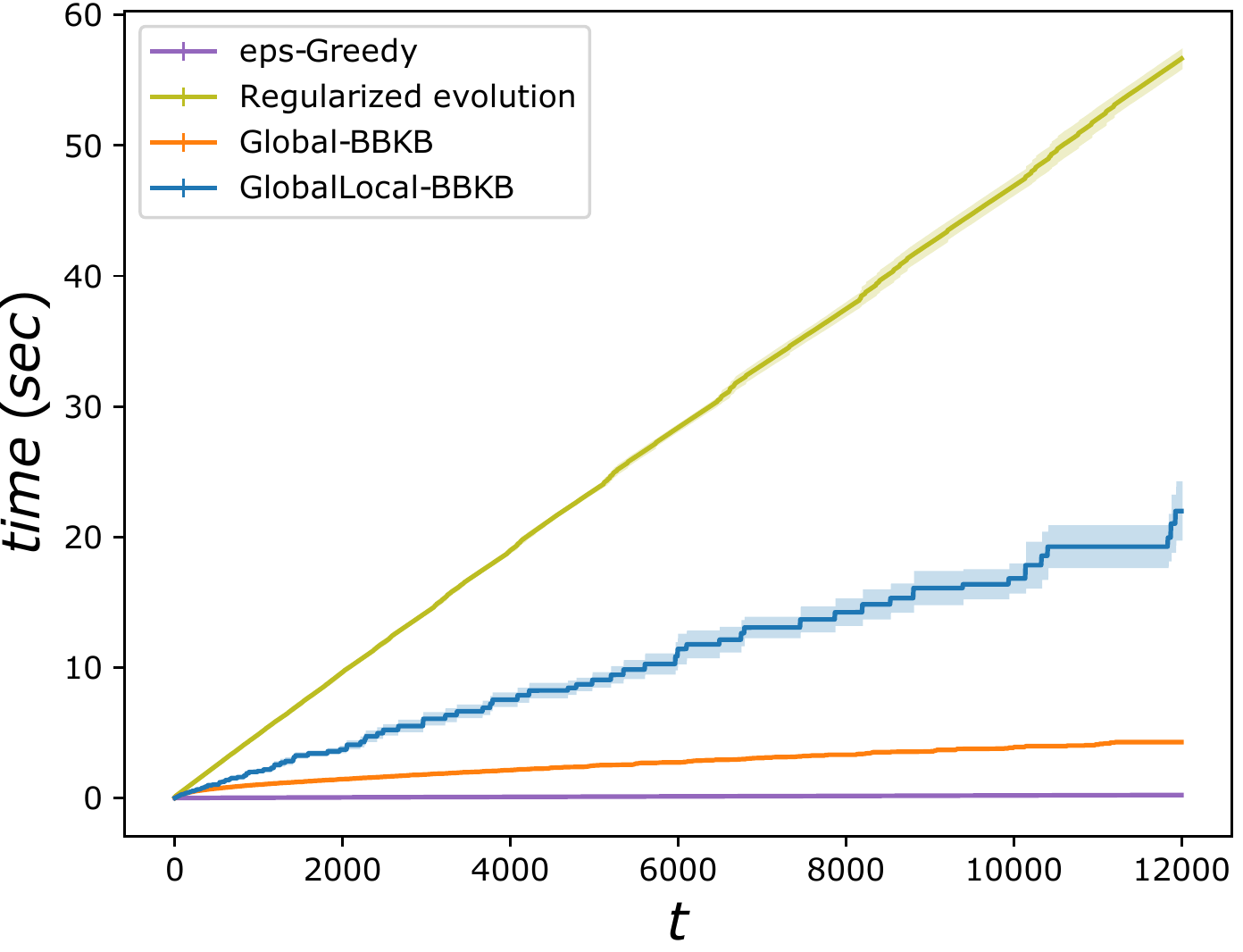}
\endminipage\hfill
\minipage{0.47\textwidth}
\includegraphics[width=\linewidth]{./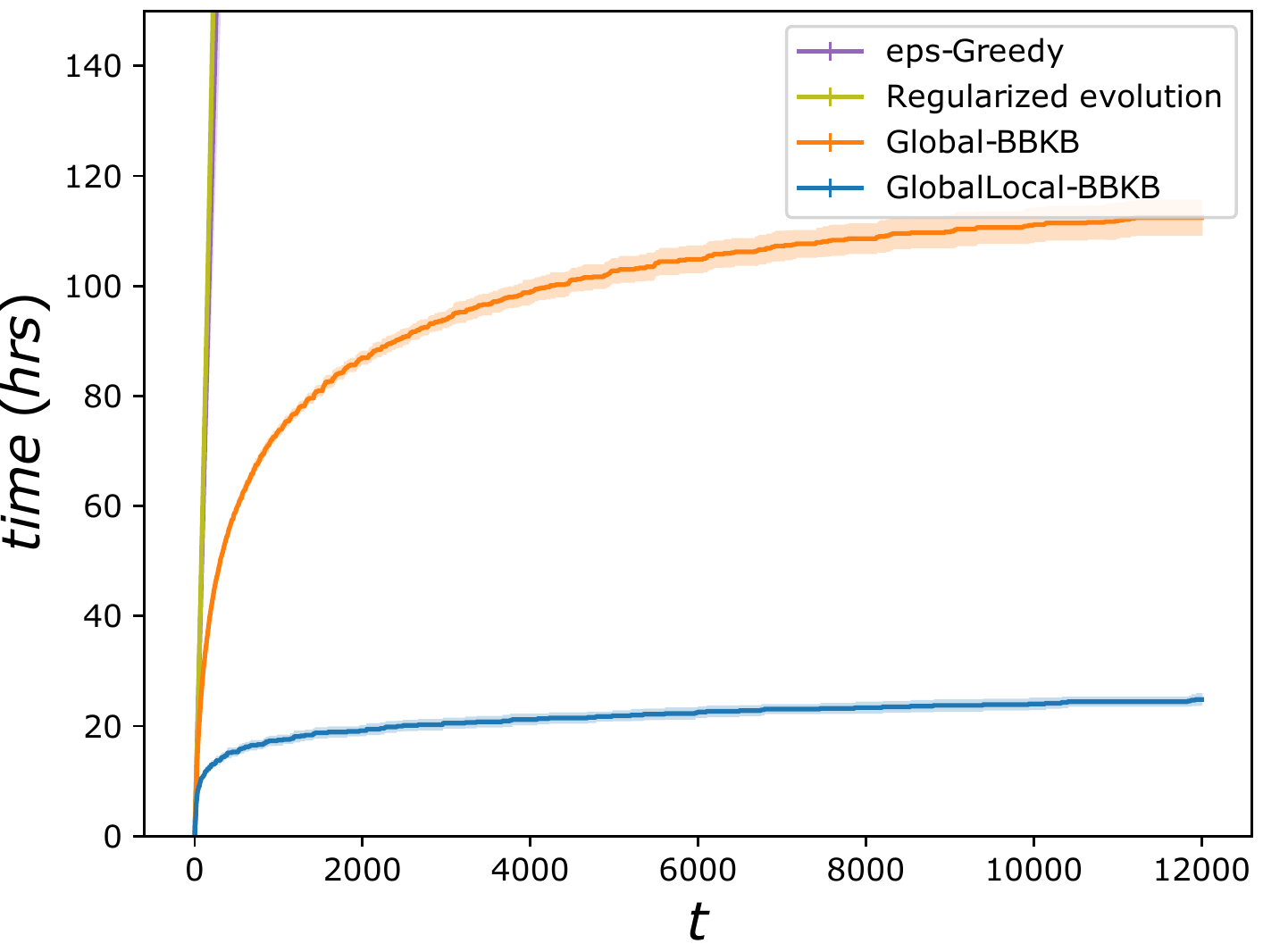}
\endminipage\hfill\caption{From left to right and from top to bottom: regret ratio, batch-size, time without experimental costs and total runtime on the NAS-bench-101 dataset, with Gaussian kernel with bandwith $100$}\label{fig:napp}
\end{figure}


\begin{thebibliography}{27}
\providecommand{\natexlab}[1]{#1}
\providecommand{\url}[1]{\texttt{#1}}
\expandafter\ifx\csname urlstyle\endcsname\relax
  \providecommand{\doi}[1]{doi: #1}\else
  \providecommand{\doi}{doi: \begingroup \urlstyle{rm}\Url}\fi

\bibitem[Abbasi-Yadkori et~al.(2011)Abbasi-Yadkori, P{\'a}l, and
  Szepesv{\'a}ri]{abbasi2011improved}
Abbasi-Yadkori, Y., P{\'a}l, D., and Szepesv{\'a}ri, C.
\newblock Improved algorithms for linear stochastic bandits.
\newblock In \emph{Advances in Neural Information Processing Systems}, pp.\
  2312--2320, 2011.

\bibitem[Alaoui \& Mahoney(2015)Alaoui and Mahoney]{alaoui2014fast}
Alaoui, A.~E. and Mahoney, M.~W.
\newblock {Fast randomized kernel methods with statistical guarantees}.
\newblock In \emph{Neural Information Processing Systems}, 2015.

\bibitem[Calandriello et~al.(2017{\natexlab{a}})Calandriello, Lazaric, and
  Valko]{calandriello_2017_icmlskons}
Calandriello, D., Lazaric, A., and Valko, M.
\newblock Second-order kernel online convex optimization with adaptive
  sketching.
\newblock In \emph{International Conference on Machine Learning},
  2017{\natexlab{a}}.

\bibitem[Calandriello et~al.(2017{\natexlab{b}})Calandriello, Lazaric, and
  Valko]{calandriello_disqueak_2017}
Calandriello, D., Lazaric, A., and Valko, M.
\newblock Distributed adaptive sampling for kernel matrix approximation.
\newblock In \emph{{AISTATS}}, 2017{\natexlab{b}}.

\bibitem[Calandriello et~al.(2019)Calandriello, Carratino, Lazaric, Valko, and
  Rosasco]{calandriello_2019_coltgpucb}
Calandriello, D., Carratino, L., Lazaric, A., Valko, M., and Rosasco, L.
\newblock Gaussian process optimization with adaptive sketching: Scalable and
  no regret.
\newblock In \emph{Conference on Learning Theory}, 2019.

\bibitem[Chevalier \& Ginsbourger(2013)Chevalier and
  Ginsbourger]{chevalier2013fast}
Chevalier, C. and Ginsbourger, D.
\newblock Fast computation of the multi-points expected improvement with
  applications in batch selection.
\newblock In \emph{International Conference on Learning and Intelligent
  Optimization}, pp.\  59--69. Springer, 2013.

\bibitem[Chowdhury \& Gopalan(2017)Chowdhury and
  Gopalan]{chowdhury2017kernelized}
Chowdhury, S.~R. and Gopalan, A.
\newblock On kernelized multi-armed bandits.
\newblock In \emph{International Conference on Machine Learning}, pp.\
  844--853, 2017.

\bibitem[Contal et~al.(2013)Contal, Buffoni, Robicquet, and
  Vayatis]{contal_parallel_2013}
Contal, E., Buffoni, D., Robicquet, A., and Vayatis, N.
\newblock Parallel {Gaussian} process optimization with upper confidence bound
  and pure exploration.
\newblock In \emph{Joint {European} {Conference} on {Machine} {Learning} and
  {Knowledge} {Discovery} in {Databases}}, pp.\  225--240. Springer, 2013.

\bibitem[Daxberger \& Low(2017)Daxberger and Low]{daxberger17a}
Daxberger, E.~A. and Low, B. K.~H.
\newblock Distributed batch {G}aussian process optimization.
\newblock In Precup, D. and Teh, Y.~W. (eds.), \emph{Proceedings of the 34th
  International Conference on Machine Learning}, volume~70 of \emph{Proceedings
  of Machine Learning Research}, pp.\  951--960, International Convention
  Centre, Sydney, Australia, 06--11 Aug 2017. PMLR.

\bibitem[Desautels et~al.(2014)Desautels, Krause, and
  Burdick]{desautels_parallelizing_2014}
Desautels, T., Krause, A., and Burdick, J.~W.
\newblock Parallelizing exploration-exploitation tradeoffs in {Gaussian}
  process bandit optimization.
\newblock \emph{The Journal of Machine Learning Research}, 15\penalty0
  (1):\penalty0 3873--3923, 2014.

\bibitem[Hennig \& Schuler(2012)Hennig and Schuler]{hennig_entropy_2012}
Hennig, P. and Schuler, C.~J.
\newblock Entropy {Search} for {Information}-{Efficient} {Global}
  {Optimization}.
\newblock \emph{Journal of Machine Learning Research}, 13:\penalty0 1809--1837,
  2012.

\bibitem[Huggins et~al.(2019)Huggins, Campbell, Kasprzak, and
  Broderick]{huggins2018scalable}
Huggins, J.~H., Campbell, T., Kasprzak, M., and Broderick, T.
\newblock Scalable gaussian process inference with finite-data mean and
  variance guarantees.
\newblock \emph{International Conference on Artificial Intelligence and
  Statistics}, 2019.

\bibitem[Kandasamy et~al.(2018)Kandasamy, Krishnamurthy, Schneider, and
  P{\'o}czos]{kandasamy2018parallelised}
Kandasamy, K., Krishnamurthy, A., Schneider, J., and P{\'o}czos, B.
\newblock Parallelised bayesian optimisation via thompson sampling.
\newblock In \emph{International Conference on Artificial Intelligence and
  Statistics}, pp.\  133--142, 2018.

\bibitem[Kathuria et~al.(2016)Kathuria, Deshpande, and
  Kohli]{kathuria_batched_2016}
Kathuria, T., Deshpande, A., and Kohli, P.
\newblock Batched gaussian process bandit optimization via determinantal point
  processes.
\newblock In \emph{Advances in {Neural} {Information} {Processing} {Systems}},
  pp.\  4206--4214, 2016.

\bibitem[Mutny \& Krause(2018)Mutny and Krause]{mutny_efficient_2018}
Mutny, M. and Krause, A.
\newblock Efficient {High} {Dimensional} {Bayesian} {Optimization} with
  {Additivity} and {Quadrature} {Fourier} {Features}.
\newblock In Bengio, S., Wallach, H., Larochelle, H., Grauman, K.,
  Cesa-Bianchi, N., and Garnett, R. (eds.), \emph{Advances in {Neural}
  {Information} {Processing} {Systems} 31}, pp.\  9019--9030. Curran
  Associates, Inc., 2018.

\bibitem[Quinonero-Candela et~al.(2007)Quinonero-Candela, Rasmussen, and
  Williams]{quinonero-candela_approximation_2007}
Quinonero-Candela, J., Rasmussen, C.~E., and Williams, C.~K.
\newblock Approximation methods for gaussian process regression.
\newblock \emph{Large-scale kernel machines}, pp.\  203--224, 2007.

\bibitem[Rasmussen \& Williams(2006)Rasmussen and
  Williams]{rasmussen_gaussian_2006}
Rasmussen, C.~E. and Williams, C. K.~I.
\newblock \emph{Gaussian processes for machine learning}.
\newblock Adaptive computation and machine learning. MIT Press, Cambridge,
  Mass, 2006.
\newblock ISBN 978-0-262-18253-9.
\newblock OCLC: ocm61285753.

\bibitem[Real et~al.(2019)Real, Aggarwal, Huang, and Le]{real2019regularized}
Real, E., Aggarwal, A., Huang, Y., and Le, Q.~V.
\newblock Regularized evolution for image classifier architecture search.
\newblock In \emph{Proceedings of the aaai conference on artificial
  intelligence}, volume~33, pp.\  4780--4789, 2019.

\bibitem[Rudi et~al.(2018)Rudi, Calandriello, Carratino, and
  Rosasco]{NIPS2018_7810}
Rudi, A., Calandriello, D., Carratino, L., and Rosasco, L.
\newblock On fast leverage score sampling and optimal learning.
\newblock In \emph{Advances in Neural Information Processing Systems 31}, pp.\
  5672--5682. 2018.

\bibitem[Seeger et~al.(2003)Seeger, Williams, and Lawrence]{seeger2003fast}
Seeger, M., Williams, C., and Lawrence, N.
\newblock Fast forward selection to speed up sparse gaussian process
  regression.
\newblock In \emph{Artificial Intelligence and Statistics 9}, number
  EPFL-CONF-161318, 2003.

\bibitem[Shah \& Ghahramani(2015)Shah and Ghahramani]{shah2015parallel}
Shah, A. and Ghahramani, Z.
\newblock Parallel predictive entropy search for batch global optimization of
  expensive objective functions.
\newblock In \emph{Advances in Neural Information Processing Systems}, pp.\
  3330--3338, 2015.

\bibitem[Srinivas et~al.(2010)Srinivas, Krause, Seeger, and
  Kakade]{srinivas2010gaussian}
Srinivas, N., Krause, A., Seeger, M., and Kakade, S.~M.
\newblock Gaussian process optimization in the bandit setting: No regret and
  experimental design.
\newblock In \emph{International Conference on Machine Learning}, pp.\
  1015--1022, 2010.

\bibitem[Valko et~al.(2013)Valko, Korda, Munos, Flaounas, and
  Cristianini]{valko2013finite}
Valko, M., Korda, N., Munos, R., Flaounas, I., and Cristianini, N.
\newblock Finite-time analysis of kernelised contextual bandits.
\newblock In \emph{Proceedings of the Twenty-Ninth Conference on Uncertainty in
  Artificial Intelligence}, pp.\  654--663. AUAI Press, 2013.

\bibitem[Wang et~al.(2018)Wang, Gehring, Kohli, and Jegelka]{wang2018batched}
Wang, Z., Gehring, C., Kohli, P., and Jegelka, S.
\newblock Batched large-scale bayesian optimization in high-dimensional spaces.
\newblock In \emph{International Conference on Artificial Intelligence and
  Statistics}, pp.\  745--754, 2018.

\bibitem[Wilson \& Nickisch(2015)Wilson and Nickisch]{wilson2015kernel}
Wilson, A. and Nickisch, H.
\newblock Kernel interpolation for scalable structured gaussian processes
  (kiss-gp).
\newblock In \emph{International Conference on Machine Learning}, pp.\
  1775--1784, 2015.

\bibitem[Woodruff et~al.(2014)]{woodruff2014sketching}
Woodruff, D.~P. et~al.
\newblock Sketching as a tool for numerical linear algebra.
\newblock \emph{Foundations and Trends{\textregistered} in Theoretical Computer
  Science}, 10\penalty0 (1--2):\penalty0 1--157, 2014.

\bibitem[Ying et~al.(2019)Ying, Klein, Real, Christiansen, Murphy, and
  Hutter]{ying2019bench}
Ying, C., Klein, A., Real, E., Christiansen, E., Murphy, K., and Hutter, F.
\newblock Nas-bench-101: Towards reproducible neural architecture search.
\newblock \emph{arXiv preprint arXiv:1902.09635}, 2019.

\end{thebibliography}
\end{document}